




\documentclass[sigconf]{aamas} 


\usepackage{balance} 


\usepackage{amsmath,amsfonts,bm}









\def\eqref#1{equation~\ref{#1}}









\def\1{\bm{1}}










\DeclareMathAlphabet{\mathsfit}{\encodingdefault}{\sfdefault}{m}{sl}
\SetMathAlphabet{\mathsfit}{bold}{\encodingdefault}{\sfdefault}{bx}{n}













\usepackage{amsthm} 
\newtheorem{theorem}{Theorem}

\newtheorem{lemma}{Lemma}
\newtheorem*{lemma*}{Lemma}
\newtheorem*{claim*}{Claim}

\theoremstyle{definition}
\newtheorem{definition}{Definition}

\newtheorem{assumption}{Assumption}
\theoremstyle{remark}

\usepackage{mathtools, nccmath}
\usepackage{multicol,amsmath,bm}
\usepackage{booktabs} 
\usepackage{subcaption}
\usepackage[linesnumbered,ruled]{algorithm2e}
\usepackage{wrapfig}

\usepackage{tikz}
\usepackage{anyfontsize}
\usetikzlibrary{arrows,matrix,positioning,decorations.pathreplacing,calc}
\usepackage{multirow}
\usepackage{hhline}
\usepackage{comment}
\usepackage{color}
\usepackage{float}
\usepackage{algpseudocode}

\usepackage{lipsum}
\usepackage{mwe}

\usepackage{caption}
\usepackage{subcaption}

\usepackage{hyperref}
\usepackage{url}



\setcopyright{ifaamas}
\acmConference[AAMAS '24]{Proc.\@ of the 23rd International Conference
on Autonomous Agents and Multiagent Systems (AAMAS 2024)}{May 6 -- 10, 2024}
{Auckland, New Zealand}{N.~Alechina, V.~Dignum, M.~Dastani, J.S.~Sichman (eds.)}
\copyrightyear{2024}
\acmYear{2024}
\acmDOI{}
\acmPrice{}
\acmISBN{}



\acmSubmissionID{<<EasyChair submission id>>}


\title{Sample and Communication Efficient Fully Decentralized MARL Policy Evaluation via a New Approach: Local TD update}


\author{Hairi}
\affiliation{
  \institution{University of Wisconsin-Whitewater}
  \city{Whitewater}
  \country{USA}}
\email{hairif@uww.edu}

\author{Zifan Zhang}
\affiliation{
  \institution{North Carolina State University}
  \city{Raleigh}
  \country{USA}}
\email{zzhang66@ncsu.edu}

\author{Jia Liu}
\affiliation{
  \institution{The Ohio State University}
  \city{Columbus}
  \country{USA}}
\email{liu@ece.osu.edu}


\begin{abstract}
In actor-critic framework for fully decentralized multi-agent reinforcement learning (MARL), one of the key components is the MARL policy evaluation (PE) problem, where a set of $N$ agents work cooperatively to evaluate the value function of the global states for a given policy through communicating with their neighbors.
In MARL-PE, a critical challenge is how to lower the sample and communication complexities, which are defined as the number of training samples and communication rounds needed to converge to some $\epsilon$-stationary point.
To lower communication complexity in MARL-PE, a ``natural'' idea is to perform multiple local TD-update steps between each consecutive rounds of communication to reduce the communication frequency. However, the validity of the local TD-update approach remains unclear due to the potential ``agent-drift'' phenomenon resulting from heterogeneous rewards across agents in general.
This leads to an interesting open question: {\em Can the local TD-update approach entail low sample and communication complexities?} In this paper, we make the first attempt to answer this fundamental question.
We focus on the setting of MARL-PE with average reward, which is motivated by many multi-agent network optimization problems.
Our theoretical and experimental results confirm that allowing multiple local TD-update steps is indeed an effective approach in lowering the sample and communication complexities of MARL-PE compared to consensus-based MARL-PE algorithms.
Specifically, the local TD-update steps between two consecutive communication rounds can be as large as 
$\mathcal{O}(1/\epsilon^{1/2}\log{(1/\epsilon)})$ in order to converge to an $\epsilon$-stationary point of MARL-PE. 
Moreover, we show theoretically that in order to reach the optimal sample complexity, the communication complexity of local TD-update approach is $\mathcal{O}(1/\epsilon^{1/2}\log{(1/\epsilon)})$.
\end{abstract}

\begin{CCSXML}
<ccs2012>
   <concept>
       <concept_id>10003752.10003809.10010172</concept_id>
       <concept_desc>Theory of computation~Distributed algorithms</concept_desc>
       <concept_significance>500</concept_significance>
       </concept>
   <concept>
       <concept_id>10010147.10010178.10010219.10010220</concept_id>
       <concept_desc>Computing methodologies~Multi-agent systems</concept_desc>
       <concept_significance>500</concept_significance>
       </concept>
 </ccs2012>
\end{CCSXML}

\ccsdesc[500]{Theory of computation~Distributed algorithms}
\ccsdesc[500]{Computing methodologies~Multi-agent systems}


\keywords{Multi-agent reinforcement learning, policy evaluation, TD learning, sample and communication complexities}


         
\newcommand{\BibTeX}{\rm B\kern-.05em{\sc i\kern-.025em b}\kern-.08em\TeX}


\begin{document}


\pagestyle{fancy}
\fancyhead{}


\maketitle 


\section{Introduction} \label{sec:intro}
1) {\bf Background and Motivation:} With the recent success of reinforcement learning (RL) techniques in the dynamic decision-making process \citep{SutBar_18}, 
MARL, a natural extension of RL to multi-agent systems, has also received increasing attention.
Compared to traditional RL, the richness of multi-agent systems has given rise to far more diverse problem settings in MARL, including cooperative, competitive, and mixed MARL (see \citep{ZhaYanBas_21} for an excellent survey). 
In this paper, we are interested in {\em fully decentralized cooperative} MARL, which has found a wide range of applications in the field of networked large-scale systems, such as power networks \citep{CheQuTan_22,RieMooSch_00}, autonomous driving \citep{YuWanXu_19,ShaShaSha_16}, wireless network \citep{WeiWanLi_22} and so on.
A defining feature of fully decentralized cooperative MARL is that all agents in the system collaborate to learn a joint policy to maximize long-term system-wide total rewards through communicating with each other.
However, due to the decentralized nature (i.e., lack of a centralized infrastructure) of fully decentralized cooperative MARL, the collaboration between the agents can only rely on some speical algorithmic designs to induce a ``consensus'' that can be reached by all agents.

In a consensus-based actor-critic framework, one of the key components is the MARL policy evaluation (PE) problem, where a set of $N$ agents work cooperatively to evaluate the value function of the global states for a given joint policy.
Just as the PE problem in single-agent RL, temporal difference (TD) learning~\citep{Sut_88} has been the prevailing method for MARL-PE thanks to its simplicity and effectiveness.
Simply speaking, the key idea of TD learning is to learn the value function by using the Bellman equation to bootstrap from the current estimated value function.

However, as mentioned earlier, the decentralized nature of the MARL-PE problem necessitates communication among agents for TD learning.
Hence, a critical challenge in consensus-based MARL-PE is how to lower the {\em sample and communication complexities,} which are defined as the required number of training samples and rounds of communications between neighboring agents to converge to an $\epsilon$-stationary point of the MARL-PE problem.

To lower communication complexity for solving MARL-PE problems, a ``natural'' idea is to use an ``infrequent communication'' approach where we perform multiple local TD-update steps between each consecutive rounds of communication to reduce the communication frequency.
However, the validity of the ``local TD-update'' approach remains unclear due to the potential ``agent-drift'' phenomenon resulted from heterogeneous rewards across agents (more on this soon).
This leads to two interesting open questions: 
\begin{itemize}
\item[1)] {\em Can the local TD-update approach achieve low sample and communication complexities for solving MARL-PE?}
\smallskip
\item[2)] {\em If the answer to 1) is ``yes,'' how does the local TD-steps approach perform in comparison to other approaches?}
\end{itemize}
In this paper, we make the first attempt to answer the above open questions.
However, unlike conventional MARL research that adopts discounted reward, in this paper, we are particularly interested in the cooperative MARL setting with {\em average reward}~\cite{HaiLiuLu_22,QuLinWie_20,ZhaYanLiu_18,TsiVan_99,TsiVan_02}.
The average reward setting of MARL-PE is motivated by and highly relevant for many multi-agent and network optimization problems that care about ``average performances'' (e.g., average throughput, average latency, and average energy consumption in multi-hop wireless networks).

\smallskip
2) {\bf Technical Challenges:} 
Answering Questions 1) and 2) above is highly non-trivial due to several technical challenges in the convergence analysis of the local TD-update approach.
Notably, it is easy to see that the structure of TD learning in consensus-based cooperative MARL resembles that of decentralized stochastic gradient descent (DSGD) method in consensus-based decentralized optimization\citep{NedOzd_09,LiaZhaZha_17,PuNed_21}.
Thus, it is tempting to believe that one can borrow convergence analysis techniques of DSGD and apply them in TD learning.
However, despite such similarities, there also exist significant differences between TD learning in MARL and DSGD.

\begin{list}{\labelitemi}{\leftmargin=1em \itemindent=-0.09em \itemsep=.2em}

\item {\em Structural Differences:}
First, we note that TD learning is {\em not} a true gradient-based method since TD error is not a gradient estimator of any static objective function which is well-defined in a consensus-based decentralized optimization problem.
Also, in decentralized optimization, the gradient terms are often assumed to be bounded. 
However, when using approximation for value function in TD learning, TD-errors can not be assumed to be bounded without further assuming that the approximation parameters lie in some compact set.

\item {\em Markovian Noise in TD Learning:}
In RL/MARL problems, there exists an underlying Markovian dynamic process across time steps, where the state distribution may differ at different time steps.
By contrast, in decentralized optimization, it is often safe to assume that the data at each agent are independently distributed.
Thus, it is not possible to directly apply convergence analysis techniques of decentralized optimization in TD learning for MARL-PE.
The coupling and dependence among samples renders the convergence analysis of TD learning in MARL far more challenging.

\item {\em ``Agent-Drift'' Phenomenon:}
Due to heterogeneity nature of the rewards across agents, executing multiple local TD-update steps would inevitably pull the value functions toward the direction of local value functions rather than the global value function, leading to the ``agent-drift'' phenomenon. 
Hence, it is unclear under such "tug of war" whether local TD-update steps help or hurt the convergence of TD learning in MARL-PE. 
Because of the agent-drift effect, the number of local TD update steps has to be chosen judiciously to mitigate the potentially large divergence of the value functions among agents between consecutive communication rounds.
\end{list}

\smallskip
3) {\bf Main Results and Contribution:} 
The main contribution of this paper is that we overcome the above challenges in analyzing the upper bounds of the sample and communication complexities for the local TD-update approach in cooperative fully decentralized MARL-PE.
By doing so, we shed light on the effect of local TD-update steps in the consensus-based TD learning in MARL-PE with average reward. 
We summarize our main results in this paper as follows:

\begin{list}{\labelitemi}{\leftmargin=1em \itemindent=-0.09em \itemsep=.2em}
\item Both theoretically and empirically, we show that allowing multiple local TD-update steps is indeed a valid approach that can significantly lower communication complexities of MARL-PE compared to vanilla consensus-based  decentralized TD learning algorithms \cite{DoaMagRom_21,DoaMagRom_19,ZhaYanLiu_18}. 
Specifically, we show that under the condition of  achieving $\mathcal{O}(1/\epsilon\log^{2} (1/\epsilon))$ sample complexity (which differs from the state-of-the-art sample complexity only by a log factor), the local TD-update approach can allow up to $\mathcal{O}(1/\epsilon^{1/2}\log (1/\epsilon))$ local TD-update steps and the communication complexity upper bound is $\mathcal{O}(1/\epsilon^{1/2}\log (1/\epsilon))$. 
Compared to vanilla algorithms, this improves the communication complexity by a factor of $\mathcal{O}(1/\epsilon^{1/2})$.

\item In comparison with another notable batching approach, we show that the local TD-update approach not only matches the communication complexity of the batching approach, but also achieves a better sample complexity than that of the batching approach~\cite{HaiLiuLu_22} by a factor of $\mathcal{O}(1/\epsilon^{1/2})$ in average reward setting.
Our extensive empirical results also verify the performance of the local TD-update approach and confirm our theoretical results compared to the vanilla TD learning and batching approaches with both synthetic and real-world datasets.
\end{list}

The rest of the paper is organized as follows.
In Section~\ref{sec:related}, we review the literature to put our work in comparative perspectives.
In Section~\ref{sec:formulation}, we present the system model and formulation of the MARL-PE problem in the average reward setting.
In Section~\ref{sec:alg}, we introduce the decentralized TD learning algorithm with multiple local TD-update steps for MARL-PE.
In Section~\ref{sec:convergence}, we provide the theoretical convergence analysis for the decentralized TD learning algorithm with multiple local TD-update steps. In addition, we provide comparisons of both sample and communication complexities of the proposed local TD-update approach with other methods.
Section~\ref{sec:numerical} presents numerical results and Section~\ref{sec:conclusion} concludes this paper.
Due to space limitation, some proof details and additional experiments 
are relegated to the supplementary material .

\section{Related work} \label{sec:related}


In this section, we provide an overview on two lines of research that are related to this work: i) multi-agent reinforcement learning policy evaluation; and ii) single-agent RL policy evaluation.

\smallskip
{\bf 1) Multi-agent reinforcement learning policy evaluation:}
To our knowledge, the work in \citep{ZhaYanLiu_18} proposed the first fully decentralized multi-agent actor-critic algorithm using TD learning in the critic step, which solves the PE problem in average reward setting. However, the convergence results for both its critic and actor steps are asymptotic. 
Finite-time analysis of MARL-PE problem using distributed TD learning algorithm has been first studied in \citep{DoaMagRom_19} under the i.i.d. sampling assumption, and later the work in \citep{DoaMagRom_21} generalized the result to Markovian sampling assumption only in discounted reward settings. 
In \citep{LinZhaYan_19}, a compressed algorithm is proposed where, instead of sending a vector, only a single entry is sent during communication. 
However, their communication complexity (i.e., the number of communication rounds) remains the same as sample complexity and the convergence is only asymptotic. 
In \citep{CheZhaGia_18}, a lazy communication algorithm is proposed assuming a central controller, which is different from the fully decentralized setting that we consider in this paper.

It is worth noting that many of the above existing distributed TD learning algorithms \citep{ZhaYanLiu_18,DoaMagRom_19,DoaMagRom_21} for MARL-PE perform {\em frequent} consensus rounds (i.e., one round of communication per local TD update) to share the value functions among neighbors. 
Specifically, in these algorithms, agents share the value functions to their neighbors in every sampling step, which causes the communication complexity to be the same as the sample complexity. 
In this paper, we consider an infrequent communication framework that allows the agents to do multiple local TD-update steps and communicate with the neighbors once every $K(\gg 1)$ rounds. 
In \citep{HaiLiuLu_22,CheZhoChe_21}, complete actor-critic algorithms have been proposed and the batching approach has been used in the critic step, which corresponds to MARL-PE, in discounted and average reward respectively. 
In this batching approach \cite{HaiLiuLu_22}, consensus is performed in every $M=\mathcal{O}(1/\epsilon)$ samples, which in return only requires $O(1/\epsilon^{1/2}\log(1/\epsilon))$ communication complexity. Detailed discussions on the comparison of the local TD approach and batching approach is provided in Section \ref{subsec: discussion}.

We also remark that there exists another class of approaches \citep{ZhaLiuLiu_21,MacCheZaz_14,LeeYooHov_18,WaiYanWan_18,RenHau_19} that solve the MARL-PE problem by formulating MARL-PE into optimizing projected Bellman error or its variants, where the proposed algorithms require frequent communications. 
This class of algorithms do not use the on-policy TD learning approach as we do in our paper. 
In \citep{KimMooHos_19}, the paper optimizes communication in order to comply with the bandwidth restriction and minimize the collision between pair-wise channels. 
However, this work adopts centralized learning and distributed execution paradigm, where in our paper, the learning process is fully decentralized.

\smallskip
{\bf 2) Single-agent reinforcement learning policy evaluation:}
For single-agent RL, policy evaluation problems have been extensively studied in terms of asymptotic convergence \citep{TsiVan_97,TsiVan_99,TsiVan_02} for both discounted and average reward settings, later finite-time convergence under i.i.d. sampling assumption~\cite{LakSze_18} and under Markovian sampling assumption using different techniques \citep{SriYin_19,BhaRusSin_18} in discounted reward setting. 
Further, using batching TD learning~\citep{XuWanLia_20} yields state-of-the-art sample complexity $O(1/\epsilon\log(1/\epsilon))$ in the discounted reward setting. For average reward setting, \cite{QiuYanYe_21} yields a sample complexity of $\mathcal{O}(1/\epsilon^{2}\log^{3}(1/\epsilon))$, where the sample complexity is worse than that in our multi-agent setting. To the best of our knowledge, the sample complexity of $\mathcal{O}((1/\epsilon)\log^{2}(1/\epsilon))$ in  \cite{SriYin_19} is the state-of-the-art sample complexity for the single agent average-reward RL policy evaluation problem. 
However, there is no notion of ``communication with other agents'' due to the single-agent nature. 
Thus, results in this area, though related, are not directly comparable to our work in terms of communication complexity.
\section{Distributed policy evaluation in multi-agent reinforcement learning} \label{sec:formulation}
Throughout this paper, $\|\cdot\|$ denotes the $\ell_2$-norm for vectors and the $\ell_2$-induced norm for matrices. $\|\cdot\|_{F}$ denotes the Frobenius norm for matrices. $(\cdot)^{T}$ denotes the transpose for a matrix or a vector.  

\subsection{System Model}
Consider a multi-agent system with $N$ agents, denoted by $\mathcal{N}=\{1,\cdots,N\}$, operating in a networked environment. 
Let $\mathcal{E}$ be the edge set for a given network $\mathcal{G}=(\mathcal{N},\mathcal{E})$. 
To formulate our MARL problem and facilitate our subsequent discussions, we first define the notion of networked multi-agent Markov decision process (MDP) in the average reward setting as follows.

\begin{definition}[Networked Multi-Agent MDP]
Let $\mathcal{G}=(\mathcal{N},\mathcal{E})$ be a communication network that connects $N$ agents. A networked multi-agent MDP is defined by following five-tuple: \[(\mathcal{S},\{\mathcal{A}^{i}\}_{i\in\mathcal{N}},P,\{r^{i}\}_{i\in\mathcal{N}},\mathcal{G}),\] where $\mathcal{S}$ is the global state space, $\mathcal{A}^{i}$ is the action set for agent $i$. Let $\mathcal{A}=\prod_{i\in\mathcal{N}}\mathcal{A}^i$ be the joint action set of all agents. 
$P:\mathcal{S}\times\mathcal{A}\times\mathcal{S}\to[0,1]$ is the global state transition function and $r^{i}: \mathcal{S}\times\mathcal{A}$ is the local reward function for agent $i$. 
\label{def: model}
\end{definition}

In this paper, we assume that the global state space $\mathcal{S}$ is finite.
We also assume that at time step $t\ge0$, all agents can observe the current global state $s_t$. 
However, each agent can only observe its own reward $r^{i}_{t+1}$, i.e., agents do not observe or share rewards with other agents. 
Each agent $i\in\mathcal{N}$ receives a deterministic reward $r^{i}(s,a)$  given the global state $s$ and joint action $a$ \footnote{For simplicity of the presentation, we assume that the rewards are deterministic. For more general stochastic rewards, the results are straightforward.}. 

In our MARL system, each agent chooses its action following its local policy $\pi^i$ that is conditioned on the current global state $s$, i.e., $\pi^{i}(a^{i}|s)$ is the probability for agent $i$ to choose an action $a^{i}\in\mathcal{A}^{i}$. 
Then, the joint policy $\pi: \mathcal{S}\times\mathcal{A}\to[0,1]$ can be written as $\pi(a|s)=\prod_{i\in\mathcal{N}}\pi^{i}(a^{i}|s)$.

The global long-term average reward for a given joint policy $\pi$ in average reward setting is defined as follows:
\begin{align}
J_{\pi}&=\lim_{T\to\infty}\frac{1}{T} \mathbb{E}\left( \sum_{t=0}^{T-1}\frac{1}{N}\sum_{i\in\mathcal{N}} r^{i}_{t+1}\right) \nonumber \\
&=\sum_{s\in\mathcal{S}}d(s)\sum_{a\in\mathcal{A}}\pi(a|s)\cdot \bar{r}(s,a), \label{eq: steady_obj}
\end{align}
where $d(\cdot)$ is the steady state distribution, which is guaranteed to exist due to the Assumption \ref{ass: dis} below, and $\bar{r}(s,a)=\frac{1}{N}\sum_{i\in\mathcal{N}}r^{i}(s,a)$. In other words, in the average reward setting, $J_{\pi}$ evaluates the performance of the given policy $\pi$ at steady state as given in (\ref{eq: steady_obj}).

\subsection{Technical Assumptions}
We now state the following assumptions for the MARL system described above. 
\begin{assumption}
For the given policy $\pi$, we assume the induced Markov chain $\{s_t\}_{t\ge0}$ is irreducible and aperiodic.
\label{ass: dis}
\end{assumption}

\begin{assumption}
The reward $r^{i}_t$ is uniformly bounded by a constant $r_{\max}>0$ for any $i\in\mathcal{N}$ and $t\ge0$.
\label{ass: r_bou}
\end{assumption}

\begin{assumption}
Let $A$ be a consensus weight matrix for a given communication network $\mathcal{G}$. 
There exists a positive constant $\eta>0$ such that $A\in \mathbb{R}^{N\times N}$ is doubly stochastic and $A_{ii}\ge\eta$, $\forall i\in\mathcal{N}$. Moreover, $A_{ij}\ge\eta$ if $i,j$ are connected, otherwise $A_{ij}=0$. 
\label{ass: gra}
\end{assumption}

\begin{assumption}
The global value function is parameterized by linear functions, i.e., $V(s;w)=\phi(s)^{\top}w$ where \[\phi(s) = [\phi_1(s),\cdots,\phi_n(s)]^{\top}\in \mathbb{R}^{n}\] is the feature vector associated with the state $s\in\mathcal{S}$. We typically assume the dimension of the vector is smaller than the cardinality of the state space, i.e. $n<|\mathcal{S}|$. The feature vectors $\phi(s)$ are uniformly bounded for any $s\in\mathcal{S}$. Without loss of generality, we assume that $\|\phi(s)\| \le 1$. Furthermore, the feature matrix $\Phi\in \mathbb{R}^{|\mathcal{S}|\times n}$ is full column rank. Also, for any $u\in \mathbb{R}^{n}$, $\Phi u\neq \mathbf{1}$, where $\mathbf{1}$ is an all-one vector. 
\label{ass: fea}
\end{assumption}

Assumption~\ref{ass: dis} guarantees that there exists a unique stationary distribution over $\mathcal{S}$ for the induced Markov chain by the given policy $\pi$. In other words, it guarantees that the steady state distribution $d(\cdot)$ induced by the policy $\pi$ is well defined. 
Assumption~\ref{ass: r_bou} is common in the RL literature (see, e.g., \citep{ZhaYanLiu_18,XuWanLia_20,DoaMagRom_19}) and easy to be satisfied in many practical MDP models with finite state and action spaces.
Assumption~\ref{ass: gra} is standard in the distributed multi-agent optimization literature~\citep{NedOzd_09}. 
This assumption says that non-zero entries of the weight matrix $A$ needs to be lower bounded by a positive value $\eta$. Note that this characterization of the weight matrix is a rich representation, as for the same graph/topology $\mathcal{G}$, the weights can vary, which correspond to different consensus effects.
Assumption~\ref{ass: fea} on features is standard and has been widely adopted in the literature, e.g., \citep{TsiVan_99,ZhaYanLiu_18,QiuYanYe_21,SriYin_19,HaiLiuLu_22}. The goal of this assumption is to approximate the value function as follows:
\begin{align}
V(s)\approx V(s;w)=\phi(s)^{\top} w \nonumber
\end{align}
where $\phi(s)$ is the aforementioned feature vector associated with state $s\in\mathcal{S}$.

\section{Decentralized TD Learning with local TD-Update Steps for MARL-PE} \label{sec:alg}

In this section, we introduce the decentralized TD learning algorithm with local TD-update steps (i.e., infrequent communication), which is illustrated in Algorithm \ref{alg: local_average} \footnote{For simplicity, we present TD(0) in our paper, the algorithm and theoretical results can be generalized to TD($\lambda$) straightforwardly.}.
Given a joint policy $\pi$, the goal of the MARL-PE in the decentralized setting is that the agents collaborate in a consensus manner to characterize the global value function.
Specifically, each agent $i$ maintains a value function approximation parameter $w^{i}$ locally, which estimates the global value function as follows:
\begin{align}
V(s;w^{i})=\phi(s)^{\top}w^{i}. \nonumber
\end{align}

The local TD-update algorithm for MARL-PE contains two loops. The outer loop is the communication rounds, where consensus update (Line~12 in Algorithm \ref{alg: local_average}) is performed for $L$ rounds in total. 
The inner loop is local TD-update steps (Line~10 in Algorithm \ref{alg: local_average}), which are executed $K$ times in between consecutive communication rounds. 
Locally, each agent performs local TD-updates within each communication round $l \in \{0,\cdots, L-1\}$ as follows:
\begin{align}
w^i_{l,k+1}&=w^i_{l,k}+\beta\cdot\delta^{i}_{l,k}\cdot\phi(s_{l,k}), \label{eq: local_td}
\end{align}
where $\beta>0$ is the constant step size and $\delta^{i}_{l,k}$  is the local TD error, which is defined as follows
\begin{align}
\delta^{i}_{l,k}:=r^{i}_{l,k+1}-\mu^{i}_{l,k}+\phi(s_{l,k+1})w^{i}_{l,k}-\phi(s_{l,k})w^{i}_{l,k}, \nonumber 
\end{align}
and $\mu^{i}_{l,k}$ tracks the local average reward, which is updated as follows
\begin{align} \label{eqn:mu}
\mu^{i}_{l,k+1}=(1-\beta)\mu^{i}_{l,k}+\beta r^{i}_{l,k+1}. 
\end{align}
We remark that Eq.~(\ref{eqn:mu}) is the {\em key difference} between the average reward setting and the conventional discounted reward setting in MARL-PE.
In the discounted reward setting, there is no $\mu^{i}$-terms. 
The use of the $\mu^{i}$-term is to keep track of the local average reward for agent $i$.
Surprisingly, we will show later that consensus and finite-time convergence results on $w^{i}$ parameters can be obtained without performing consensus on these $\mu^{i}$ terms.
We also note that each execution of Eq.~(\ref{eq: local_td}) is considered performing one local TD learning step.
Within each inner loop, this local TD update step is performed $K$ times.

Due to the privacy of the reward signals in the fully decentralized setting, the agents are unable to access the rewards of any other agents, let alone the average rewards. 
Therefore, communication/sharing of the value function approximation parameters among the neighbors 
is necessary \citep{ZhaYanLiu_18,DoaMagRom_19,CheZhoChe_21,HaiLiuLu_22}. 
This step is often referred to as {\em consensus} update, which is defined as follows:
\begin{align}
w^{i}_{l+1,0}&=\sum_{j\in\mathcal{N}_i}A_{ij}w^{j}_{l,K},
\label{eq: consensus}
\end{align}
where $\mathcal{N}_i$ denotes the set of neighbors for agent $i$.
In other words, after performing $K$ local TD-update steps, each agent shares its parameter with the neighbors, receives the ones from the neighbors, and then updates its own parameter in a weighted aggregation as shown in Eq.~(\ref{eq: consensus}).

We note that in our algorithm, the infrequent communication is achieved by agents communicating with neighbors periodically with the period being $K$. We also note that when $K=1$, our algorithm reduces to the vanilla distributed TD learning algorithm \citep{DoaMagRom_19,DoaMagRom_21,ZhaYanLiu_18}. 
Therefore, the vanilla distributed TD learning can be viewed as a special case of our proposed algorithm.


\begin{algorithm}[t]
  \SetKwInOut{Input}{Input}
  \SetKwInOut{Output}{Output}
  \SetKwFor{ParFor}{for}{do in parallel}{end for}
  \Input{Initial state $s_0,\pi=\{\pi^{i}|i\in\mathcal{N}\}$, feature map $\phi$, initial parameters $\{w^{i}_{0,0},\mu^{i}_{0,0}| i\in\mathcal{N}\}$, step size $\beta$, communication round number $L$, local step number $K$}
  \BlankLine
  \For{$l=0,\cdots,L-1$}{
  $s_{l,0}=s_{l-1,K}$  (when $l=0$ and $k=0$ , $s_{l,k}=s_0$)\;
  \For{$k=0,\cdots,K-1$}{ 
	\ParFor{ all $i\in\mathcal{N}$}{
	Execute action $a^{i}_{l,k}\sim\pi^{i}(\cdot|s_{l,k})$\;
	Observe the state $s_{l,k+1}$ and reward $r^{i}_{l,k+1}$\;
	Update $\delta^{i}_{l,k}\leftarrow r^{i}_{l,k+1}-\mu^{i}_{l,k}+\phi(s_{l,k+1})^{T}w^{i}_{l,k}-\phi(s_{l,k})^{T}w^{i}_{l,k}$\;
 	Update $\mu^{i}_{l,k+1}\leftarrow \beta r^{i}_{l,k+1}+(1-\beta)\mu^{i}_{l,k}$\;
    Local TD-update Step: $w^{i}_{l,k+1}\leftarrow w^{i}_{l,k}+\beta\delta^{i}_{l,k}\cdot\phi(s_{l,k})$\;
	}
  }
  \ParFor{ all $i\in\mathcal{N}$}{
  Consensus Update: $w^{i}_{l+1,0}\leftarrow \sum_{j\in\mathcal{N}_i} A(i,j)\cdot w^{j}_{l,K}$\;
  }
  }
 
  \Output{$ \{w^{i}_{L,0}|i\in\mathcal{N}\}$}
  \caption{Decentralized TD Learning with periodic local TD-update steps}
  \label{alg: local_average}
\end{algorithm}

\section{Convergence Analysis of The Local TD-Update Approach for MARL-PE} \label{sec:convergence}
In this section, we present the convergence results for Algorithm \ref{alg: local_average}, which further imply both the sample and communication complexities of the local TD-update approach for MARL-PE.
To characterize the convergence, we define the following quantities:

\begin{align}
    &\Psi:=\mathbb{E}[(\phi(s')-\phi(s))\phi^{\top}(s)] \quad \text{and} \quad \nonumber \\
    & b:=\frac{1}{N}\mathbb{E}[\phi(s)(\sum_{i\in\mathcal{N}}r^{i}(s,a)-J_{\pi})], 
    \label{eq: Psi_def}
\end{align}
where $J_{\pi}$ is defined in Eq.~(\ref{eq: steady_obj}).
The expectations in Eq.~(\ref{eq: Psi_def}) are taken over the steady state distribution induced by the given joint policy, which is guaranteed to exist due to Assumption~\ref{ass: dis}, stationary action policy $a\sim \pi(\cdot|s)$ and state transition probability $s'\sim P(\cdot|s,a)$. 
Furthermore, we define 
\begin{align}
w^{*}=-\Psi^{-1}b, \label{eq: ode}
\end{align}
where the invertibility is due to $\Psi$ being negative definite \citep{TsiVan_99,QiuYanYe_21,HaiLiuLu_22}. 
Consequently, $\forall s, \forall k\ge \tau(\beta)$, we define mixing time $\tau(\beta)$ as the time index $k$ that satisfies the following relationship:
\begin{align}
    \|\Psi-\mathbb{E}[(\phi(s_{k+1})-\phi(s_{k}))\phi^{\top}(s)|s_0=s]\|&\le \beta,
    \label{eq: mixing time}
\end{align}
where the expectation is taken over appropriate distributions. We note that under the Assumption~\ref{ass: dis}, by \citep[Theorem 4.9]{LevPer_17}, the Markov chain mixes at a geometric rate, which implies $\tau(\beta)=\mathcal{O}(\log \frac{1}{\beta})$.


\subsection{Supporting Lemmas}
Before presenting our main theorem, we introduce two useful lemmas. 
Our strategy of convergence analysis is to divide the convergence error into two parts. They are the consensus error, which is defined as the agent's parameters deviation from the average parameter, and convergence error of the average parameter to the solution of the ODE in Eq.~(\ref{eq: ode}).

First, we define the average of the parameters to be $\bar{w}_{l,k}=\frac{1}{N}\sum_{i\in\mathcal{N}}w^{i}_{l,k}$ for any communication round $l\in \{0,\cdots,L-1\}$ and local step $ k\in \{0,\cdots, K-1\}$ and similarly $\bar{\mu}_{l,k}=\frac{1}{N}\sum_{i=1}^{N}\mu^{i}_{l,k}$. Then, we define the consensus error for agent $i$ as:
\begin{align}
    Q^{i}_{l,k}:=w^{i}_{l,k}-\bar{w}_{l,k} \label{def: con_err}
\end{align}
and the matrix form is $Q_{l,k}=[Q^{1}_{l,k},\cdots,Q^{N}_{l,k}]\in \mathbb{R}^{n\times N}$.

We provide an upper bound for the consensus error generated by Algorithm \ref{alg: local_average} in the following lemma.
\begin{lemma}
Suppose that Assumptions~\ref{ass: r_bou}--\ref{ass: fea} hold. 
For the consensus error generated by Algorithm~\ref{alg: local_average}, if $\beta K\le\min\{\frac{1}{2},\frac{\eta^{N-1}}{4(1-\eta^{N-1})}\}$, it then holds that
\begin{align}
\|Q_{L,0}\|\le \kappa_1 \rho^{L}\|Q_{0,0}\|+ \frac{\kappa_2\beta K}{1-\rho}, 
\label{eq: con_err}
\end{align}
where $\kappa_1=\frac{2N^{2}(1+\eta^{-(N-1)})}{1-\eta^{N-1}}$, $\kappa_2=8(1+\eta^{-(N-1)})N^{\frac{5}{2}}r_{\max}$ and $\rho:=(1+4\beta K)(1-\eta^{N-1})$. By the condition on $\beta K$, we have $0<\rho<1$.
\label{lem: con_err}
\end{lemma}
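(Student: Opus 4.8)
The plan is to track the full matrix of consensus errors $Q_{l,k}=[Q^1_{l,k},\dots,Q^N_{l,k}]$ through one communication round, derive a one-round recursion of the form $\|Q_{l+1,0}\|\le \rho\,\|Q_{l,0}\|+c\,\beta K$, and then telescope over the $L$ rounds. First I would pass to matrix notation: stacking the parameters into $W_{l,k}=[w^1_{l,k},\dots,w^N_{l,k}]$ and writing $J=\frac1N\mathbf{1}\mathbf{1}^{\top}$, $P=I-J$, so that $Q_{l,k}=W_{l,k}P$. The crucial structural observation is that the global state $s_{l,k}$ is shared by all agents, so the local TD update~(\ref{eq: local_td}) reads $W_{l,k+1}=W_{l,k}+\beta\,\phi(s_{l,k})\,\delta_{l,k}^{\top}$ with a \emph{common} feature $\phi(s_{l,k})$ and $\delta_{l,k}=[\delta^1_{l,k},\dots,\delta^N_{l,k}]^{\top}$; only the TD errors differ across agents. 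Consequently $Q_{l,k+1}=Q_{l,k}+\beta\,\phi(s_{l,k})\,(\delta_{l,k}^{\top}P)$, and since the consensus matrix is doubly stochastic, the consensus step~(\ref{eq: consensus}) gives $Q_{l+1,0}=Q_{l,K}A^{\top}$ (the agreement component $\bar{w}_{l,K}\mathbf{1}^{\top}$ is annihilated because $\mathbf{1}^{\top}A^{\top}P=0$).

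Second I would analyze the $K$ local steps, the ``expansion/drift'' part. The key point that sidesteps the unboundedness of TD errors stressed in the introduction is that only the \emph{centered} TD error $\delta_{l,k}^{\top}P$ enters, and it decomposes (writing $\bar r_{l,k+1}$ for the per-step reward average) as
\[
\delta^i_{l,k}-\bar\delta_{l,k}=(r^i_{l,k+1}-\bar r_{l,k+1})-(\mu^i_{l,k}-\bar\mu_{l,k})+(\phi(s_{l,k+1})-\phi(s_{l,k}))^{\top}Q^i_{l,k}.
\]
The first two terms are $O(r_{\max})$ (for the $\mu$ term one first shows, by induction on the convex-combination update~(\ref{eqn:mu}), that $|\mu^i_{l,k}|\le r_{\max}$), while the third is at most $2\|Q^i_{l,k}\|$ by Assumption~\ref{ass: fea}. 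Plugging this together with $\|\phi(s)\|\le 1$ and $\|P\|\le 1$ into the per-step identity yields the scalar linear recursion $\|Q_{l,k+1}\|\le(1+2\beta)\|Q_{l,k}\|+O(\sqrt N\,\beta\, r_{\max})$, which unrolled over $k=0,\dots,K-1$ (using $\beta K\le\frac12$ so that $(1+2\beta)^K\le 1+4\beta K$ and the geometric sum is $O(K)$) gives $\|Q_{l,K}\|\le(1+4\beta K)\|Q_{l,0}\|+O(\sqrt N\,\beta K\, r_{\max})$.

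Third I would quantify the consensus contraction. Because $Q_{l,K}$ has zero row sums, $Q_{l+1,0}=Q_{l,K}A^{\top}$ lives in the subspace orthogonal to $\mathbf{1}$ on which $A$ acts contractively; Assumption~\ref{ass: gra} (connectivity plus the entrywise lower bound $\eta$) yields the contraction rate $1-\eta^{N-1}$, the exponent $N-1$ reflecting that within $N-1$ hops every agent influences every other, so that $A^{N-1}$ has all entries at least $\eta^{N-1}$, together with a transient prefactor of order $N^2(1+\eta^{-(N-1)})$ accounting for the norm conversions and the non-normality of $A$. This prefactor is precisely the source of the $N^2$, $\eta^{-(N-1)}$ and (combined with the $\sqrt N$ drift constant) the $N^{5/2}$ factors in $\kappa_1,\kappa_2$. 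Combining the local bound with the contraction produces the one-round recursion with $\rho=(1+4\beta K)(1-\eta^{N-1})$; the second half of the hypothesis, $\beta K\le\frac{\eta^{N-1}}{4(1-\eta^{N-1})}$, is exactly what forces $4\beta K(1-\eta^{N-1})\le\eta^{N-1}$, hence $\rho<1$. Telescoping over $l=0,\dots,L-1$ and summing $\sum_j\rho^j\le\frac1{1-\rho}$ then gives~(\ref{eq: con_err}).

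The step I expect to be the main obstacle is the interaction between local expansion and consensus contraction, i.e.\ establishing that $\rho<1$ is \emph{sustainable} across rounds. The danger is exactly the ``agent-drift'': each batch of $K$ local steps inflates the consensus error both multiplicatively (the $1+4\beta K$ factor) and additively (the $O(\sqrt N\,\beta K\, r_{\max})$ heterogeneity forcing), and it is not obvious a priori that a single consensus step can absorb $K$ of them. The resolution is twofold and is where the real care lies: first, recognizing that the $w$-dependence of the TD error enters the recursion \emph{only} through $Q^i_{l,k}$ itself, turning a potential unbounded growth into a benign linear recursion with a constant reward-heterogeneity forcing term; and second, calibrating the budget $\beta K$ against the spectral gap $\eta^{N-1}$ so the geometric factor stays below one. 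Propagating the consensus-operator transient prefactor through the telescoped drift sum to recover the stated $\kappa_1,\kappa_2$ is then a bookkeeping exercise.
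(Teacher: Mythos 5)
Your first two steps are sound and essentially coincide with the paper's own derivation: the paper writes the identical per-step recursion $Q_{l,k+1}=B_{l,k}Q_{l,k}+C_{l,k}(I-\frac{1}{N}\mathbf{1}\mathbf{1}^{\top})$ with $B_{l,k}=I+\beta\phi(s_{l,k})[\phi(s_{l,k+1})-\phi(s_{l,k})]^{\top}$, proves the same bounds $\|B_{l,k}\|\le 1+2\beta$ and $\|C_{l,k}\|\le 2\beta\sqrt{N}r_{\max}$ (which, as you note, hides the induction $|\mu^{i}_{l,k}|\le r_{\max}$), uses $(1+2\beta)^{K}\le 1+4\beta K$ under $\beta K\le\frac{1}{2}$, and records $Q_{l+1,0}=Q_{l,K}A^{\top}$. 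Your observation that the $w$-dependence of the TD error enters only through $Q$ itself is exactly the paper's $B_{l,k}Q_{l,k}$ term.

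The genuine gap is in your step 3, and it is not bookkeeping; it is the step that forces the entire structure of the proof. Your telescoping needs a prefactor-free one-round contraction $\|Q_{l,K}A^{\top}\|\le(1-\eta^{N-1})\|Q_{l,K}\|$ on the zero-row-sum subspace. The fact you invoke --- every entry of $A^{N-1}$ is at least $\eta^{N-1}$ --- is a statement about the power $A^{N-1}$, not about a single application of $A$: Assumption~\ref{ass: gra} by itself yields no single-round spectral-norm contraction at rate $1-\eta^{N-1}$ (for non-symmetric doubly stochastic $A$ the natural route through $A^{\top}A$ only gives a rate of order $1-\frac{1}{2}\eta^{2(N-1)}$, and even for symmetric $A$ one needs a separate eigenvalue/Dobrushin argument that you do not supply). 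Your own accounting exposes the difficulty: you attach a ``transient prefactor'' of order $N^{2}(1+\eta^{-(N-1)})$ to the contraction. If that prefactor is paid in every round, the telescoped bound contains $\big[2N^{2}(1+\eta^{-(N-1)})(1+4\beta K)(1-\eta^{N-1})\big]^{L}$, which explodes exponentially in $L$; if it is paid in no round, the clean telescope $\|Q_{l+1,0}\|\le\rho\|Q_{l,0}\|+c\beta K$ would produce $\kappa_1=1$ and a $\kappa_2$ free of $\eta^{-(N-1)}$ and $N^{2}$, so the prefactors you claim to recover could never arise from your recursion --- a sign that the contraction you are relying on has not actually been located. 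The paper resolves this dilemma by never forming a one-round recursion: it unrolls all $L$ rounds into
\[
Q_{L,0}=\prod_{l=0}^{L-1}\prod_{k=0}^{K-1}B_{l,k}\,Q_{0,0}\,(A^{\top})^{L}+\sum_{l=0}^{L-1}(\mathrm{noise}_{l})\,(A^{\top})^{L-l},
\]
where $\mathrm{noise}_{l}$ collects the $B$-products and the $C_{l,t}(I-\frac{1}{N}\mathbf{1}\mathbf{1}^{\top})$ factors of round $l$, and then applies the geometric bound $|(A^{m})_{ij}-\frac{1}{N}|\le\frac{2(1+\eta^{-(N-1)})}{1-\eta^{N-1}}(1-\eta^{N-1})^{m}$ from \citep{NedOzd_09} to each genuine power $(A^{\top})^{L-l}$. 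In this way the large prefactor is paid exactly once per term (this is where $\kappa_1$ and $\kappa_2$ come from), it multiplies a decaying factor $(1-\eta^{N-1})^{L-l}$ instead of compounding, and the resulting series $\sum_{l}\rho^{L-l-1}\le\frac{1}{1-\rho}$ gives the constant term in the lemma. To repair your proof you must either adopt this unrolled, power-of-$A$ structure, or genuinely establish a prefactor-free single-round spectral contraction at rate $1-\eta^{N-1}$, which your hop-counting argument does not provide and which does not follow from Assumption~\ref{ass: gra} for general (possibly non-symmetric) consensus matrices.
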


The first term in Lemma~\ref{lem: con_err} shows that even if the parameters are not set to be the same initially, the effect of the initial consensus error will vanish exponentially fast as the round of communication $L$ goes to infinity. The second term is linear with respect to $\beta K$, which resembles the constant term in optimization using stochastic gradient descent (SGD) with constant step-sizes. This product term dictates the consensus error and the error level that the algorithm converges to, see discussion on Figure \ref{fig: local_steps} for more details.
Next, we provide a lemma that characterizes the convergence of the average parameter $\bar{w}_{l,k}$ to the TD fixed point defined in Eq.~(\ref{eq: ode}). 
\begin{lemma}
Suppose Assumptions \ref{ass: dis}-\ref{ass: fea} hold. 
For the $w$-parameters generated by Algorithm \ref{alg: local_average}, we have following result for the average of the $w$-parameters: 
\begin{align}
&\mathbb{E}[\|\bar{w}_{L,0}-w^{*}\|^{2}] \nonumber \\
\le & c_2(1-c_1\beta)^{KL-\tau(\beta)}\Big(\sqrt{\|\bar{w}_{0,0}-w^{*}\|^{2}+(\bar{\mu}_{0,0}-J_{\pi})^{2}} \nonumber \\
& +\frac{r_{\max}}{3} \Big)^{2}+c_3\beta \tau(\beta),
\label{eq: w_bar_w_star}
\end{align}
where $c_1, c_2, c_3>0$ are constants that are independent of step-size $\beta$, local TD-update step $K$ and communication round $L$; and $\tau(\beta)=\mathcal{O}(\log \frac{1}{\beta})$ is the mixing time. 
The specified expressions of the constants $c_1$, $c_2$, and $c_3$ can be found in supplementary material.
\label{lem: ave_con}
\end{lemma}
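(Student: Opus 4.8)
The plan is to exploit a crucial simplification: although the individual iterates $w^{i}_{l,k}$ suffer from agent-drift, their average $\bar w_{l,k}$ obeys a clean single-agent average-reward TD recursion that is entirely transparent to the consensus step. First I would observe that because all agents share the same global state $s_{l,k}$, the feature vectors $\phi(s_{l,k})$ and $\phi(s_{l,k+1})$ are common across agents, and the local TD error is \emph{linear} in $(w^{i},\mu^{i})$; averaging (\ref{eq: local_td}) therefore yields
\begin{align}
\bar w_{l,k+1}=\bar w_{l,k}+\beta\big(\bar r_{l,k+1}-\bar\mu_{l,k}+(\phi(s_{l,k+1})-\phi(s_{l,k}))^{\top}\bar w_{l,k}\big)\phi(s_{l,k}), \nonumber
\end{align}
with $\bar r_{l,k+1}=\frac1N\sum_i r^{i}_{l,k+1}$, and averaging (\ref{eqn:mu}) gives $\bar\mu_{l,k+1}=(1-\beta)\bar\mu_{l,k}+\beta\bar r_{l,k+1}$. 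The key point is that the average TD error depends only on $(\bar w,\bar\mu)$ and not on the consensus errors $Q^{i}$. Moreover, since $A$ is doubly stochastic (Assumption~\ref{ass: gra}), averaging the consensus update (\ref{eq: consensus}) gives $\bar w_{l+1,0}=\bar w_{l,K}$, so a communication round leaves the mean unchanged. Re-indexing $(l,k)$ as a single time index $t=0,\dots,KL$, the pair $(\bar w_t,\bar\mu_t)$ thus evolves exactly as a single-agent average-reward TD(0) iteration, and $\bar w_{L,0}=\bar w_{KL}$; this reduction makes the problem amenable to single-agent techniques such as those in~\citep{SriYin_19}.

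With this reduction in hand, I would set up the Lyapunov function $V_t:=\|\bar w_t-w^{*}\|^{2}+(\bar\mu_t-J_{\pi})^{2}$ and analyze its one-step drift. Writing $e^{w}_t=\bar w_t-w^{*}$, $e^{\mu}_t=\bar\mu_t-J_{\pi}$, and taking expectations under the stationary distribution, the definitions of $\Psi$, $b$, and $w^{*}=-\Psi^{-1}b$ (so $\Psi w^{*}+b=0$) give the mean dynamics
\begin{align}
\begin{pmatrix}e^{w}_{t+1}\\ e^{\mu}_{t+1}\end{pmatrix}\approx\begin{pmatrix}I+\beta\Psi & -\beta\,\mathbb{E}[\phi(s)]\\ 0 & 1-\beta\end{pmatrix}\begin{pmatrix}e^{w}_t\\ e^{\mu}_t\end{pmatrix}. \nonumber
\end{align}
This matrix is upper triangular; since $\Psi$ is negative definite its $w$-block contracts as $1-2c\beta+\mathcal{O}(\beta^2)$ for small $\beta$, while the $\mu$-block contracts at rate $1-\beta$. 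I would absorb the off-diagonal coupling $-\beta\,\mathbb{E}[\phi(s)]\,e^{\mu}_t$ via Young's inequality (equivalently, a suitably weighted quadratic Lyapunov function), which, together with the $\mathcal{O}(\beta^2)$ remainder from squaring the update, yields a per-step contraction $\mathbb{E}[V_{t+1}\mid\mathcal{F}_t]\le(1-c_1\beta)V_t+\mathcal{O}(\beta^{2})$ in the idealized i.i.d.\ case.

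The main obstacle, and the step consuming most of the technical work, is replacing the stationary expectation above by the true conditional expectation under Markovian sampling. Because $s_t$ is correlated with the current iterate, the conditional expectation of the update direction does not equal the mean-field direction, introducing a bias. I would control this bias using the geometric mixing guaranteed by Assumption~\ref{ass: dis} (via~\citep[Theorem 4.9]{LevPer_17}) together with the mixing-time definition in (\ref{eq: mixing time}): conditioning on the state $\tau(\beta)$ steps in the past, the chain has mixed to within $\beta$ of stationarity, while the iterate has drifted by only $\mathcal{O}(\beta\tau(\beta))$ because each increment is bounded ($r^{i}$ is bounded by Assumption~\ref{ass: r_bou}, $\bar\mu_t\in[-r_{\max},r_{\max}]$, and $\|\phi\|\le1$). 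Combining the mixing estimate with this slow-drift bound upgrades the recursion, for $t\ge\tau(\beta)$, to
\begin{align}
\mathbb{E}[V_{t+1}]\le(1-c_1\beta)\,\mathbb{E}[V_t]+c_3'\,\beta^{2}\tau(\beta). \nonumber
\end{align}

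Finally I would unroll this recursion from $t=\tau(\beta)$ to $t=KL$: the geometric sum of the noise terms produces the stationary error floor $c_3\beta\tau(\beta)$, while the homogeneous part contributes $(1-c_1\beta)^{KL-\tau(\beta)}\mathbb{E}[V_{\tau(\beta)}]$. To match the stated form I would then bound the short burn-in phase by the triangle inequality: since each of the first $\tau(\beta)$ updates is bounded (Assumption~\ref{ass: r_bou} and $\|\phi\|\le1$) and $\beta\tau(\beta)=\mathcal{O}(\beta\log\tfrac1\beta)$ is small, the iterate norm grows by a controlled amount, giving $\sqrt{V_{\tau(\beta)}}\le\sqrt{V_0}+\tfrac{r_{\max}}{3}$. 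Squaring this and absorbing the mixing-time and contraction constants into $c_2$ yields exactly (\ref{eq: w_bar_w_star}). Note that the entire argument is independent of $K$ and $L$ separately, entering only through the product $KL$, which is precisely what makes multiple local steps between communications admissible.
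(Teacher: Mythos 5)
Your proposal is correct and takes essentially the same approach as the paper: both rest on the observation that, by linearity of the TD error in $(w^{i},\mu^{i})$, the shared global state, and the doubly stochastic consensus matrix, the averaged pair $(\bar w_{l,k},\bar\mu_{l,k})$ evolves exactly as a single-agent average-reward TD(0) iteration (driven by the averaged reward $\bar r$) and is left unchanged by the consensus step, reducing the lemma to a single-agent finite-time bound over $KL$ effective iterations. The only difference is presentational: the paper verifies the three conditions of and then invokes Theorem~7 of \cite{SriYin_19} as a black box (through its appendix single-agent theorem), whereas you sketch re-deriving that same result via the underlying Lyapunov-drift argument with mixing-time bias control.
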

The average parameter $\bar{w}_{L,0}=\frac{1}{N}\sum_{i\in\mathcal{N}}w^{i}_{L,0}$ corresponds to the updates after $K\times L$ samples and $L$ communication rounds. Lemma \ref{lem: ave_con} shows that $\bar{w}_{L,0}$ converges to solution of the ODE with the rate given by the right-hand-side (RHS) of Eq.~(\ref{eq: w_bar_w_star}). 

\subsection{Main Results}
Now, we state the main convergence result of Algorithm~\ref{alg: local_average}:

\begin{theorem}
Suppose that Assumptions~\ref{ass: dis}-\ref{ass: fea} hold. 
For the given policy, consider the output parameters $\{w^{i}_{L,0}|i\in\mathcal{N}\}$ generated by Algorithm \ref{alg: local_average}. 
If $\beta K\le\min\{\frac{1}{2},\frac{\eta^{N-1}}{4(1-\eta^{N-1})}\}$, 
it then follows that:
\begin{align}
&\mathbb{E}\bigg[\sum_{i=1}^{N}\|w^{i}_{L,0}-w^{*}\|^{2} \bigg]\le
2n\bigg(\kappa_1 \rho^{L}\|Q_{0,0}\|+ \frac{\kappa_2\beta K}{1-\rho}\bigg)^{2} \nonumber \\ &+2N\left(c_2(1-c_1\beta)^{KL-\tau(\beta)}(\sqrt{\|\bar{w}_{0,0}-w^{*}\|^{2}+(\bar{\mu}_{0,0}-J_{\pi})^{2}}\right. \nonumber \\
&\left. +\frac{r_{\max}}{3})^{2}+c_3\beta \tau(\beta)\right), 
\label{eq: convergence}
\end{align}
where $\kappa_1,\kappa_2,c_1,c_2,c_3>0, 0<\rho<1 $ are constants, and $\bar{w}_{0,0}=\frac{1}{N}\sum_{i\in\mathcal{N}}w^{i}_{0,0}$, $\bar{\mu}_{0,0}=\frac{1}{N}\sum_{i\in\mathcal{N}}\mu^{i}_{0,0}$ and $Q_{0,0}$ is the initial consensus error defined in Eq.~(\ref{def: con_err}). 
Furthermore, by letting 
\begin{align*}
\beta\!=\!\Theta(\epsilon\log^{-1}(1/\epsilon)), K\!=\!\Theta(1/\epsilon^{1/2}\log(1/\epsilon)),  L\!=\!\Theta(1/\epsilon^{1/2}\log(1/\epsilon)), 
\end{align*}
we have $\mathbb{E}[\sum_{i=1}^{N}\|w^{i}_{L,0}-w^{*}\|^{2}]= \mathcal{O}(\epsilon)$. The sample complexity is $KL=\mathcal{O}(1/\epsilon\log^{2} (1/\epsilon))$ and the communication complexity is $L=\mathcal{O}(1/\epsilon^{1/2}\log(1/\epsilon))$.
\label{thm}
\end{theorem}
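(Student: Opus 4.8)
The plan is to bound the total error $\mathbb{E}[\sum_{i=1}^N \|w^i_{L,0}-w^*\|^2]$ by splitting each agent's error into a consensus component and an average-convergence component and then invoking the two supporting lemmas. I would begin from the exact identity
\begin{align}
w^i_{L,0}-w^* = (w^i_{L,0}-\bar{w}_{L,0})+(\bar{w}_{L,0}-w^*) = Q^i_{L,0}+(\bar{w}_{L,0}-w^*), \nonumber
\end{align}
where $Q^i_{L,0}$ is the consensus error from \eqref{def: con_err}. Applying $\|a+b\|^2\le 2\|a\|^2+2\|b\|^2$ termwise and summing over $i\in\mathcal{N}$ gives
\begin{align}
\sum_{i=1}^N \|w^i_{L,0}-w^*\|^2 \le 2\sum_{i=1}^N \|Q^i_{L,0}\|^2 + 2N\|\bar{w}_{L,0}-w^*\|^2. \nonumber
\end{align}
Taking expectations, and using $\sum_{i=1}^N \|Q^i_{L,0}\|^2 = \|Q_{L,0}\|_F^2 \le n\|Q_{L,0}\|^2$ to feed the (sure) consensus bound of Lemma~\ref{lem: con_err} into the first term while feeding the mean-square bound of Lemma~\ref{lem: ave_con} into the second, yields exactly inequality \eqref{eq: convergence}.

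For the complexity claim I would then verify, term by term, that the stated choices $\beta=\Theta(\epsilon\log^{-1}(1/\epsilon))$, $K=\Theta(1/\epsilon^{1/2}\log(1/\epsilon))$, $L=\Theta(1/\epsilon^{1/2}\log(1/\epsilon))$ drive the right-hand side of \eqref{eq: convergence} to $\mathcal{O}(\epsilon)$. The central bookkeeping observations are: (i) $\beta K=\Theta(\epsilon^{1/2})$, so the steady consensus bias $\kappa_2\beta K/(1-\rho)$ — with $1-\rho$ bounded below by a constant, since $\rho\to 1-\eta^{N-1}$ as $\epsilon\to 0$ — is $\Theta(\epsilon^{1/2})$ and its square is $\mathcal{O}(\epsilon)$; (ii) $\rho$ is bounded away from $1$, so $\rho^L$ decays faster than any polynomial in $\epsilon$ and the initial-consensus term is negligible; (iii) the product $\beta KL=\Theta(\log(1/\epsilon))$, so with the hidden constants in $\beta,K,L$ chosen large enough the geometric factor obeys $(1-c_1\beta)^{KL-\tau(\beta)}=\mathcal{O}(\epsilon)$; and (iv) $\beta\tau(\beta)=\Theta(\epsilon\log^{-1}(1/\epsilon))\cdot\mathcal{O}(\log(1/\epsilon))=\mathcal{O}(\epsilon)$ using $\tau(\beta)=\mathcal{O}(\log(1/\beta))$. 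With each term at $\mathcal{O}(\epsilon)$, the sample complexity reads off as $KL=\mathcal{O}(1/\epsilon\log^2(1/\epsilon))$ and the communication complexity as $L=\mathcal{O}(1/\epsilon^{1/2}\log(1/\epsilon))$.

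The decomposition and the main inequality \eqref{eq: convergence} are essentially mechanical once the two lemmas are in hand; the genuine difficulty of the result lives inside those lemmas. Within the theorem proof itself, the main obstacle is the simultaneous parameter balancing: the terms impose competing demands — the squared consensus bias wants $\beta K$ small ($\mathcal{O}(\epsilon^{1/2})$), the geometric convergence term wants $\beta KL$ large enough ($\Omega(\log(1/\epsilon))$) to beat $\epsilon$, and the Markovian-noise floor $\beta\tau(\beta)$ wants $\beta=\mathcal{O}(\epsilon/\log(1/\epsilon))$. I would verify that the single triple $(\beta,K,L)$ above meets all of these, and in particular check that $\beta K=\Theta(\epsilon^{1/2})$ respects the standing constraint $\beta K\le\min\{\tfrac12,\eta^{N-1}/(4(1-\eta^{N-1}))\}$ for all sufficiently small $\epsilon$, which is what keeps $0<\rho<1$ and validates Lemma~\ref{lem: con_err}.
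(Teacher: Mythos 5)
Your proposal is correct and takes essentially the same route as the paper's own proof: the identical decomposition $w^i_{L,0}-w^* = Q^i_{L,0}+(\bar{w}_{L,0}-w^*)$, the same bound $\|x+y\|^2\le 2\|x\|^2+2\|y\|^2$ together with $\|Q_{L,0}\|_F^2\le n\|Q_{L,0}\|^2$, and the same invocation of Lemmas~\ref{lem: con_err} and~\ref{lem: ave_con} to obtain \eqref{eq: convergence}. Your term-by-term verification of the parameter choices (including that $\beta K=\Theta(\epsilon^{1/2})$ keeps $\rho$ bounded away from $1$ and satisfies the standing constraint) is also sound, and is in fact more explicit than the paper, which asserts the complexity claims without detail.
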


Note that due to the use of a double-loop structure in Algorithm \ref{alg: local_average}, the parameter $w^{i}_{L,0}$ of agent $i$ corresponds to the result after $K\times L$ samples. 
We remark that to the best of our knowledge, the state-of-the-art sample complexity for the average reward RL in single agent setting is $\mathcal{O}((1/\epsilon)\log^{2}(1/\epsilon))$ \cite{SriYin_19}. The sample complexity of our algorithm in decentralized multi-agent setting, {\em matches this sample complexity in the single-agent setting}.

\subsection{Discussion}
\label{subsec: discussion}
In this section, we provide a comparison of the proposed local TD-update step approach with vanilla and batching approaches in terms of both sample and communication complexities. 

{\bf 1) Sample complexity in comparison with single agent setting:} The sample complexity of our algorithm matches the state-of-the-art sample complexity in the single-agent setting.
Also, compared to the single-agent discounted reward policy evaluation \citep{XuWanLia_20} (a batching method) and its multi-agent counterpart~\citep{CheZhoChe_21}, the sample complexity of local TD-update only differs by a $\log$ factor. 
We note that, in \citep{XuWanLia_20,HaiLiuLu_22,CheZhoChe_21}, the algorithms are complete actor-critic algorithms.
Thus, we only compare our results with their policy evaluation counterparts(i.e., critic steps).

{\bf 2) Communication and sample complexity in comparison with vanilla approach:} In the local TD-update algorithm, between consecutive communication rounds, the number of local TD-update steps for each agent can be $K=\mathcal{O}(1/\epsilon^{1/2}\log (1/\epsilon))$. 
This improved the communication complexity of vanilla distributed TD algorithms \citep{ZhaYanLiu_18,DoaMagRom_19,DoaMagRom_21} by a factor of $K=\mathcal{O}(1/\epsilon^{1/2}\log (1/\epsilon))$. 
The communication complexity of the local TD-update is $L=\mathcal{O}(1/\epsilon^{1/2}\log (1/\epsilon))$. In terms of sample complexity, both approaches require a sample complexity of $\mathcal{O}(1/\epsilon\log^{2} (1/\epsilon))$. This is because as we set local step $K=1$ of local TD approach, it reduces to the vanilla approach.

{\bf 3) Communication and sample complexities in comparison with batching approach:} It is worth noting that {\em ``batching''} \cite{HaiLiuLu_22} is another natural TD learning approach that can achieve infrequent communication among agents via locally updating value function parameters using a batch of $M(\ge 1)$ samples, then performing consensus. 
Specifically, instead of repeatedly updating $w^{i}$ for each sample locally as in Line 10 in Algorithm \ref{alg: local_average}, at each communication round $l\in\{0,\cdots,L-1\}$, the batching approach performs the following update:
\begin{align}
\tilde{w}^{i}_l\leftarrow w^{i}_l+\frac{1}{M}\sum_{\tau=0}^{M-1} \delta^{i}_{l,\tau}(w^{i}_l)\cdot \phi(s_{l,\tau}), \nonumber
\end{align}
which is followed by a consensus update same as Line~12 in Algorithm~\ref{alg: local_average} for $\{\tilde{w}^{i}_l\}_{i=1}^{N}$. 
The full algorithm description of the batching approach can be found in \cite[Algorithm~1]{HaiLiuLu_22}. 
The key difference between batching and local TD-update approaches is that the $w$-parameters are updated repeatedly with each sample in local TD-update, whereas in batching, the $w$-parameters are updated {\em only once} through a batch of samples.

Under the average reward setting, the local TD-update approach achieves the same communication complexity.
However, the local TD-update approach outperforms the batching approach in terms of sample complexity.
Specifically, the sample complexity upper bound of the local TD-update approach is $\mathcal{O}(1/\epsilon\log^{2} (1/\epsilon))$.
In contrast, the sample complexity of the batching approach is $\mathcal{O}(1/\epsilon^{3/2}\log (1/\epsilon))$, which is worse than that of the local TD-update approach by a factor of $\mathcal{O}(1/\epsilon^{1/2}/\log(1/\epsilon))$. 

To conclude the comparisons, we list the sample and communication complexities of different approaches in Table \ref{tab: com}.

\begin{table}[t!]
\caption{Comparison of sample and communication complexities.}
\begin{center}
\begin{tabular}{c|c|c}
\hline
Approaches & Sample Complexity & Communication Complexity  \\
\hline
Vanilla & $\mathcal{O}(1/\epsilon\log^{2} (1/\epsilon))$ & $\mathcal{O}(1/\epsilon\log^{2} (1/\epsilon))$ \\
\hline
Batching & $\mathcal{O}(1/\epsilon^{3/2}\log (1/\epsilon))$ & $\mathcal{O}(1/\epsilon^{1/2}\log(1/\epsilon))$ \\
\hline
Local TD & $\mathcal{O}(1/\epsilon\log^{2} (1/\epsilon))$  & $\mathcal{O}(1/\epsilon^{1/2}\log(1/\epsilon))$ \\
\hline
\end{tabular}
\label{tab: com}
\end{center}
\end{table}

\section{Experimental results} \label{sec:numerical}
In this section, we conduct numerical experiments to compare our proposed algorithm, TD learning with local steps, with vanilla TD learning \citep{ZhaYanLiu_18,DoaMagRom_19,DoaMagRom_21} and the batch TD learning \citep{HaiLiuLu_22,CheZhoChe_21} in both synthetic settings as in \citep{ZhaYanLiu_18} and cooperative navigation tasks as in \citep{LowWuTam_17}. 

\subsection{Performance with Synthetic Experiments}

{\bf 1) Synthetic Experiment Setup:}
We consider the same setting as in Section~6.1 of \citep{ZhaYanLiu_18}. 
There are $N=20$ agents, each of which has a binary-valued action space, i.e., $\mathcal{A}^{i}=\{0,1\}$ for all $i\in\mathcal{N}$. There are $|\mathcal{S}|=10$ states. The entries in the transition matrix are uniformly sampled from the interval $[0,1]$ and normalized to be stochastic. For each agent $i$ and global state action pair $(s,a)$, the reward $r^{i}(s,a)$ is sampled uniformly from $[0,4]$ and the instantaneous rewards $\{r^{i}_t\}$ are sampled uniformly within the set $[r^{i}(s,a)-0.5,r^{i}(s,a)+0.5]$. The policy considered in the simulation is $\pi^{i}(\cdot|s)=0.5$ for all $i\in\mathcal{N}$, $s\in\mathcal{S}$. 
The entries of feature matrix $\Phi$ are sampled uniformly at random from $[0,1]$ with feature dimension $n=5$ and ensured to be full rank and satisfy Assumption \ref{ass: fea}. In addition, we set each feature vector to be of unit length.
The network topology is chosen as a ring network with diagonal elements being 0.4 and off-diagonal elements being 0.3. 
The simulation results are averaged over 10 trials. 
We choose the step sizes for our algorithm to be 0.005, vanilla TD to be 0.1, and batch TD to be 0.1. We note that these step sizes are chosen to be best for the corresponding algorithms. 

The objective error is defined as the normalized version of convergence term (LHS of Eq.~(\ref{eq: convergence})), i.e., the sample mean errors divided by the number of agents $N$ and the dimension number $n$:
\begin{align}
    &\text{Objective Error} \nonumber \\
    &:=\text{sample average of } \frac{\sqrt{\sum_{i=1}^{N}\|w^{i}_{l,k}-w^{*}\|^{2}}}{nN} \text{ for 10 trials}. \nonumber
\end{align}

We remark that due to the fact that the transition matrix is not dependent on joint action, the steady state distribution can be computed and so is the value of $w^{*}$, whose definition is in Eq.~(\ref{eq: ode}).

\begin{figure}[h!]
     \begin{subfigure}[b]{0.23\textwidth}
         \centering
         \includegraphics[width=\textwidth]{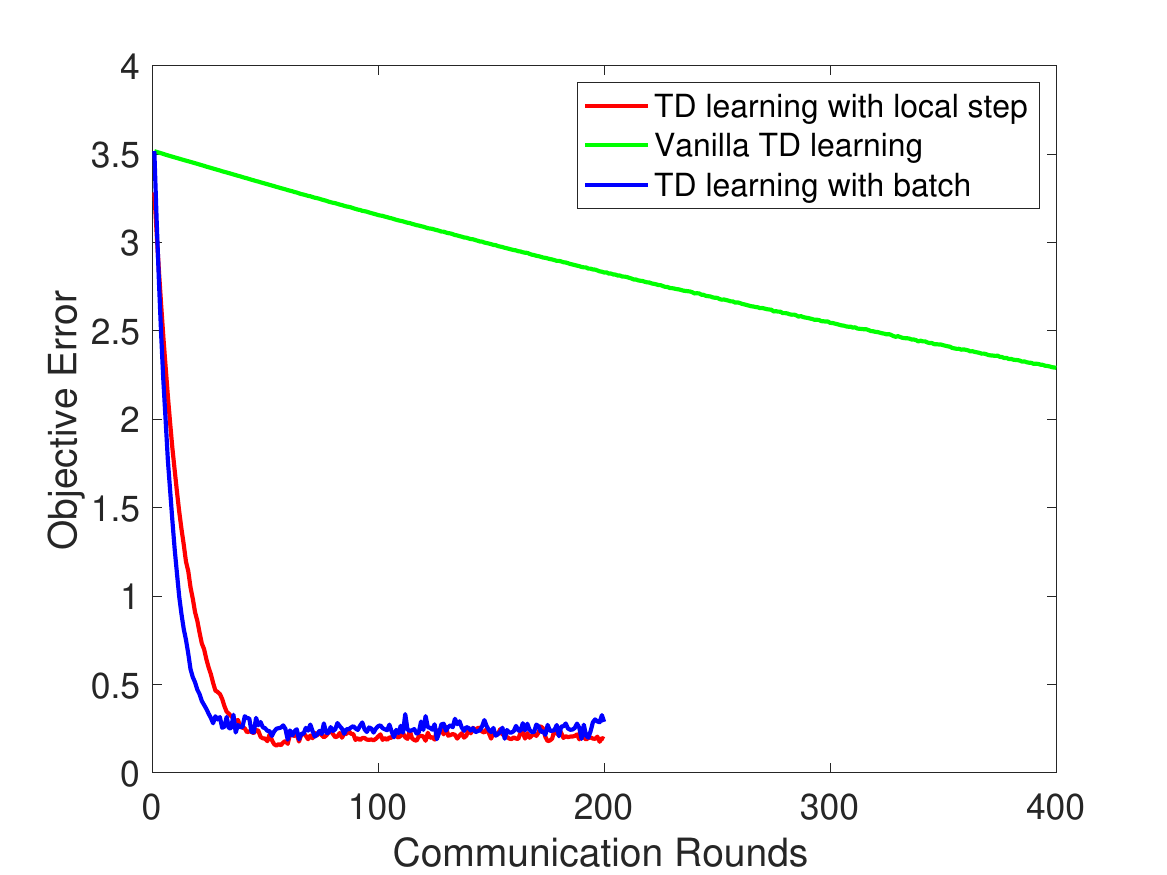}
         \caption{$K=50,L=200$.}
         \label{fig: com_1a}
     \end{subfigure}
     \hfill
     \begin{subfigure}[b]{0.23\textwidth}
         \centering
         \includegraphics[width=\textwidth]{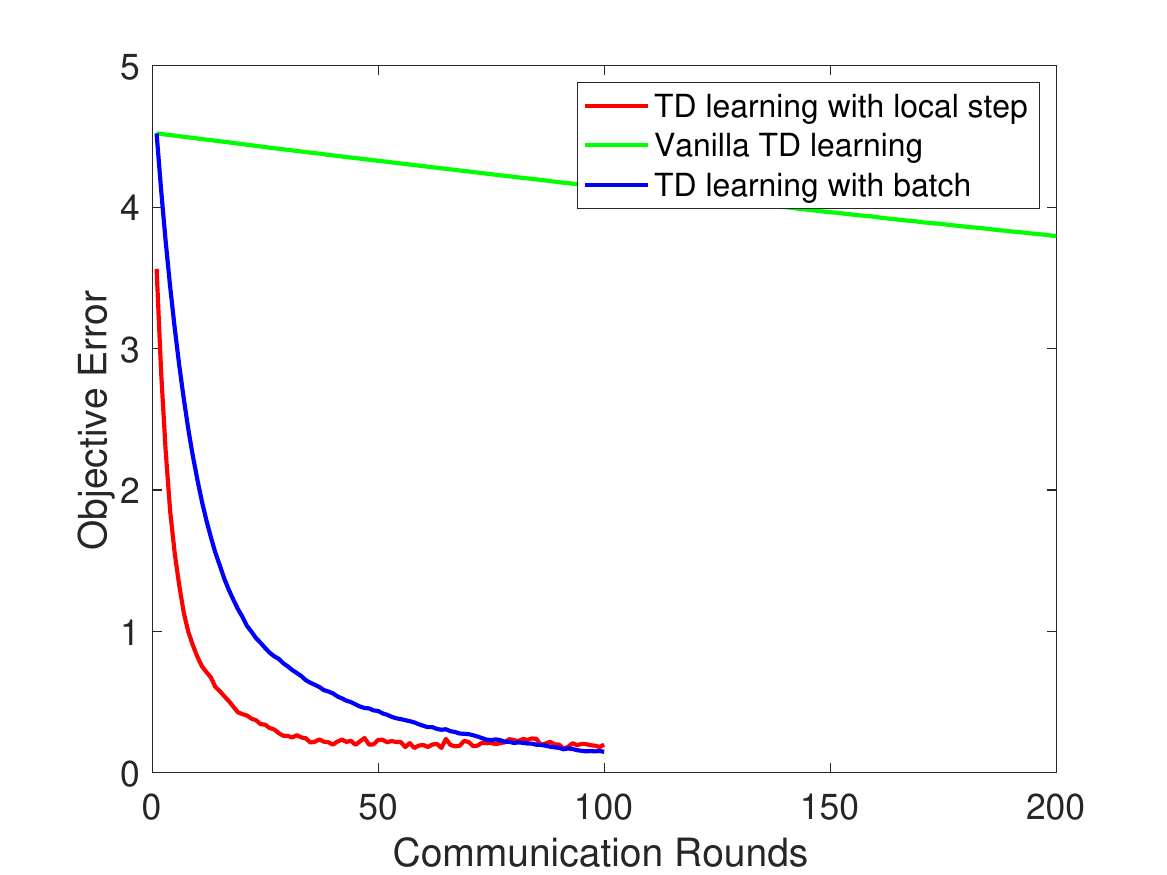}
         \caption{$K=100,L=100$.}
         \label{fig: com_1b}
     \end{subfigure}
     \hfill
    \begin{subfigure}[b]{0.23\textwidth}
         \centering
         \includegraphics[width=\textwidth]{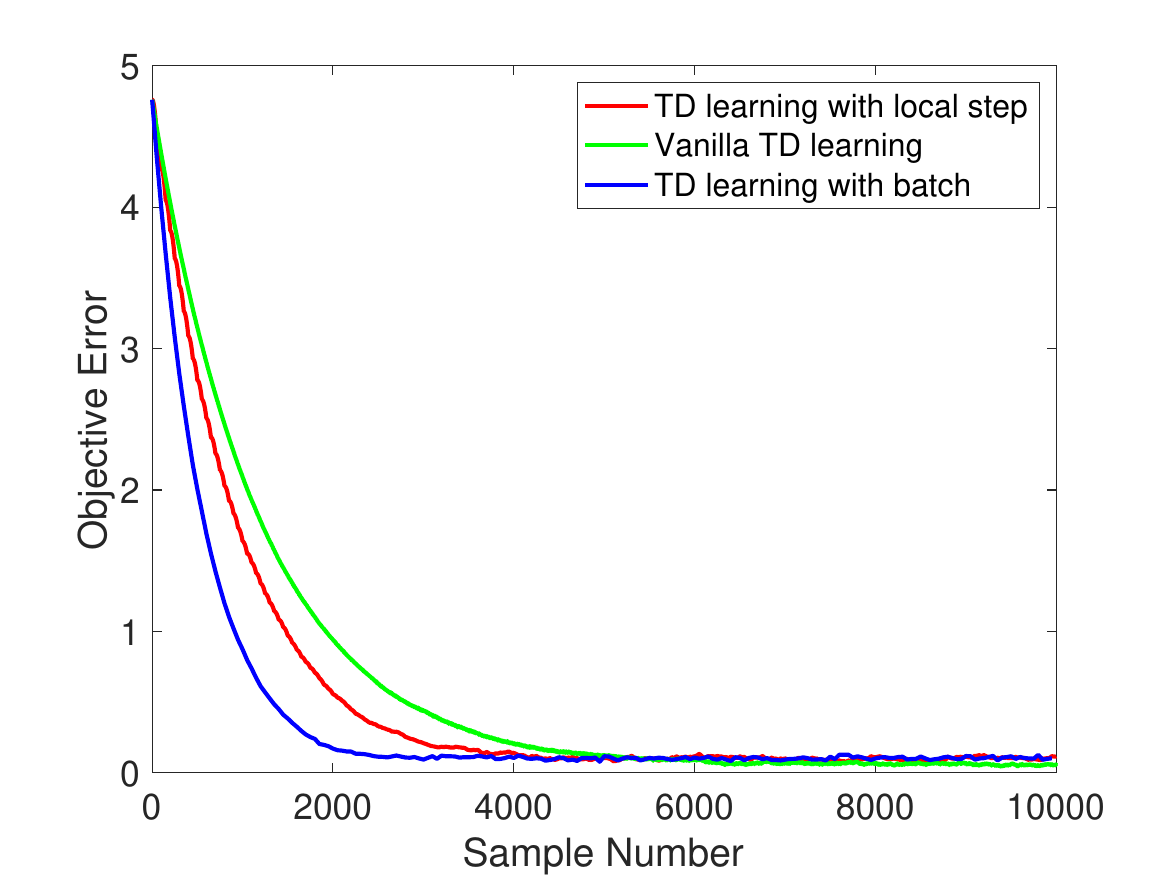}
         \caption{$K=50,L=200$.}
         \label{fig: L200}
     \end{subfigure}
     \hfill   
     \begin{subfigure}[b]{0.23\textwidth}
         \centering
         \includegraphics[width=\textwidth]{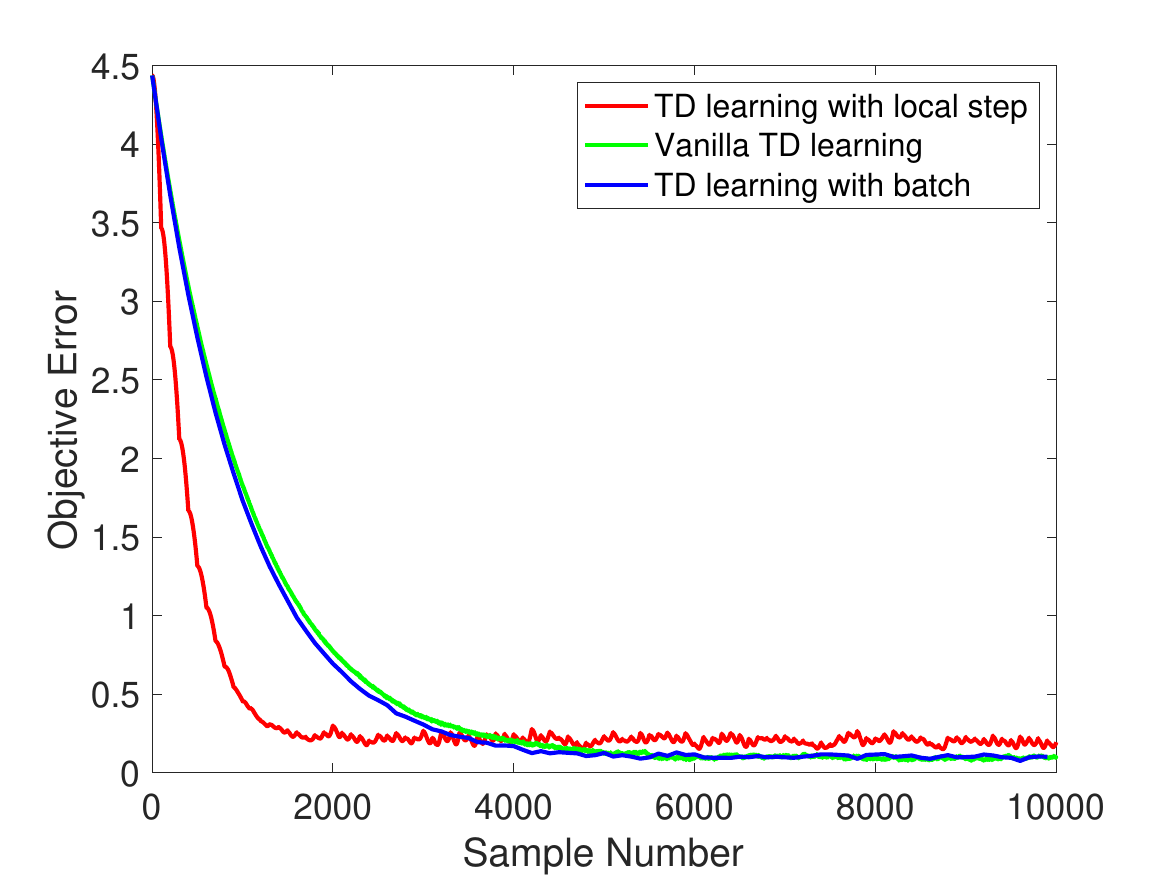}
         \caption{$K=100,L=100$.}
         \label{fig: L100}
     \end{subfigure}
        \caption{Convergence with respect to the number of communication rounds and samples.}
        \label{fig: comp}
        \vspace{-.1in}
\end{figure}

\smallskip
{\bf 2) Convergence Performance:}
In Fig.~\ref{fig: comp}, the y-axis is the normalized convergence error of the LHS of Eq.~(\ref{eq: convergence}) and the x-axes are the numbers of communication rounds in Figure \ref{fig: com_1a},\ref{fig: com_1b} and sample numbers in Figs.~\ref{fig: L200} and \ref{fig: L100}. 
For fair comparisons between the local TD-update and batching approaches, we keep the local TD-update step number and batch size to be the same for the majority of the comparisons except for Fig.~\ref{fig: add_comp}, where we compare the results for various local TD-update step numbers and batch sizes.

In Fig.~\ref{fig: com_1a}, we illustrate the convergence results with respect to the communication rounds for all three algorithms, where the local TD-update step $K=50$ for the local TD-update approach and the batch size is 50 for batch algorithm. 
Under such a setting, both local TD-update and batched TD algorithms perform consensus communication every 50 samples.  
We can see that within 200 communication rounds, both local TD-update and batching algorithms converge to a very similar error level, yet the vanilla TD algorithm does not converge even after 400 rounds of communication. 
Between local TD-update and batching, both algorithms perform similarly, which means similar communication rounds to converge. 
In Fig.~\ref{fig: com_1b}, when local TD-update step $K=100$ and the batch size is $100$, the local TD-update approach requires the least amount of communication rounds to converge compared to the batching approach. 
On the other hand, local TD-update again performs significantly better compared to vanilla TD. 
In Fig.~\ref{fig: L100} and \ref{fig: L200}, we illustrate the corresponding convergence results with respect to the number of samples. 
We can see that vanilla TD eventually converges but requires consensus operation at every sample. 
Fig.~\ref{fig: comp} verifies the theoretical analysis that allowing local TD-update steps does reduce the number of communication rounds compared to vanilla TD. 
In addition, the communication rounds of local TD-update algorithm is similar in the setting of Fig.~\ref{fig: com_1a} and significantly better in the setting of Fig.~\ref{fig: com_1b}. 

In addition, we compare local TD-update approach under different number of local TD-update steps $K$ and communication rounds $L$ with batching approach under different batch sizes $M$ and communication rounds $L$ in Fig.~\ref{fig: add_comp}. 
In general, the local TD-update approach converges faster than the batching approach, but with a slightly larger objective error. 
As the number of local TD-update steps increases, the convergence speed of the local TD-update approach converges also increases, but the objective error becomes larger.
This verifies the ``agent-drift'' phenomenon.
In contrast, as the batch size increases, the convergence speed becomes slower, and the objective error continues to improve.

\begin{figure}[t!]
     \begin{subfigure}[b]{0.23\textwidth}
         \centering
         \includegraphics[width=\textwidth]{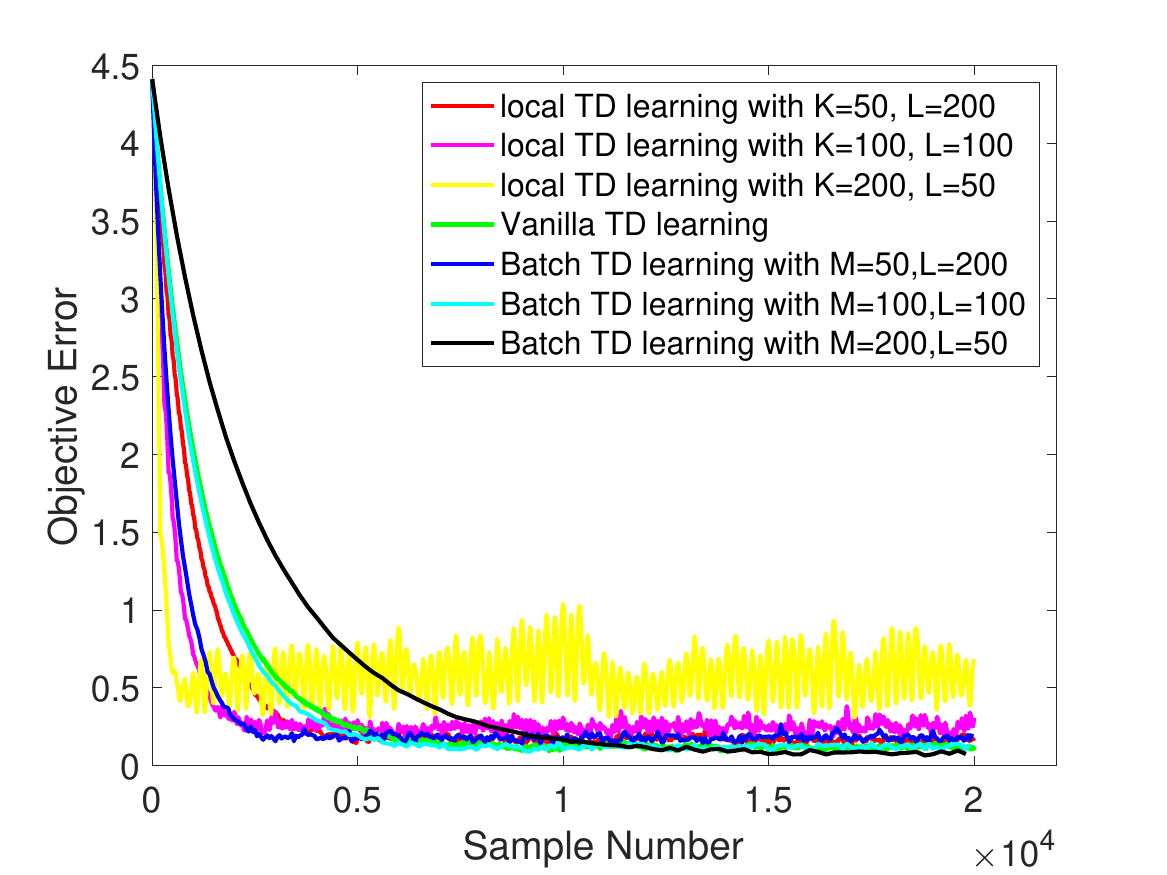}
         \caption{Different $(K,M,L)$ values.}
         \label{fig: add_comp}
     \end{subfigure}
     \begin{subfigure}[b]{0.23\textwidth}
         \centering
         \includegraphics[width=\textwidth]{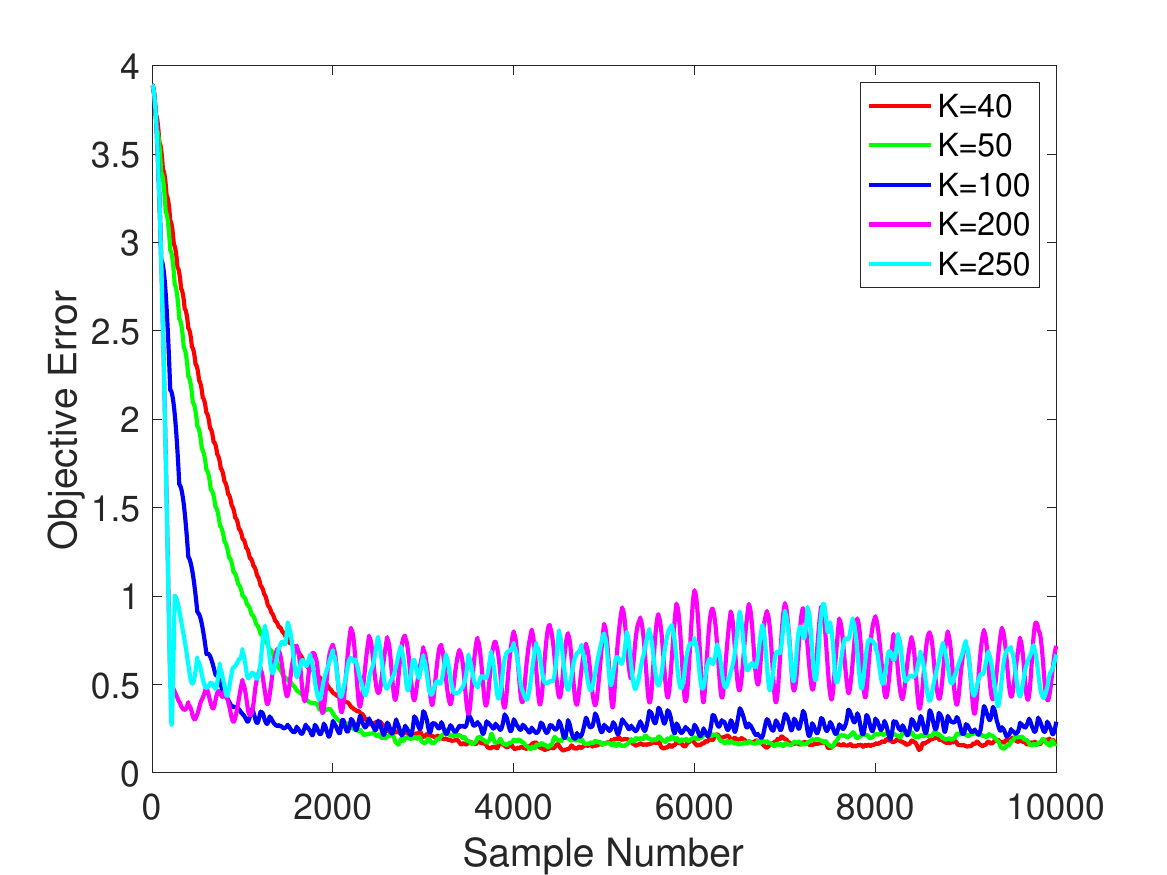}
         \caption{The impacts of $K$.}
         \label{fig: local_steps}
     \end{subfigure}
     \caption{Convergence comparisons with different settings of $(K,L)$ and the impact of local TD-update steps $K$ on convergence performance.}
     \vspace{-.1in}
     \centering
\end{figure}

\smallskip
{\bf 3) Impacts of the Number of Local TD-Updates:}
Next, we further investigate the effect of the number of local TD-update steps on the convergence of the local TD-update approaches and the agent-drift phenomenon. 
In Fig.~\ref{fig: local_steps}, we vary the number of local steps from $K=40$ to $K=250$. 
There are two interesting observations from our experiments. 
First, the initial dropping of objective error increases as the number of local TD-update steps increases. 
For example, when $K=100$ or larger, the curves drop much more rapidly in the beginning compared to the curves with a smaller $K$. 
Second, the objective error floor increases as the number of local steps increases. 
For example, when $K\le 100$, the objective error floor is relatively low and stable.
However, as $K$ increases to 200 or 250, the objective error floor also increases with a larger oscillation magnitude. 
This observation is consistent with our theoretical analysis in Lemma~\ref{lem: con_err}, where the second term on the RHS of Eq.~(\ref{eq: con_err}) is proportional to the product of step size $\beta$ and local TD-update step $K$. 
This term indicates that the objective error will only converge to neighborhood of zero, whose size depends on $\beta K$. 
As a result, for a larger $K$-value, the objective error will oscillate with a larger magnitude. 
This is similar to the constant error term in the convergence of the dencentralized SGD method~\citep{NedOzd_09}.
Also, the agent-drift phenomenon worsens as the number of local TD-update steps increases, which can be seen by the result of $K \ge 200$ in Fig.~\ref{fig: local_steps}.
To summarize, under a fixed step size, more local TD-update steps improve the initial convergence speed, but will eventually result in a larger objective error floor. 

\begin{figure}[t!]
  \centering
  \vspace{-.1in}
  \includegraphics[width=.2\textwidth]{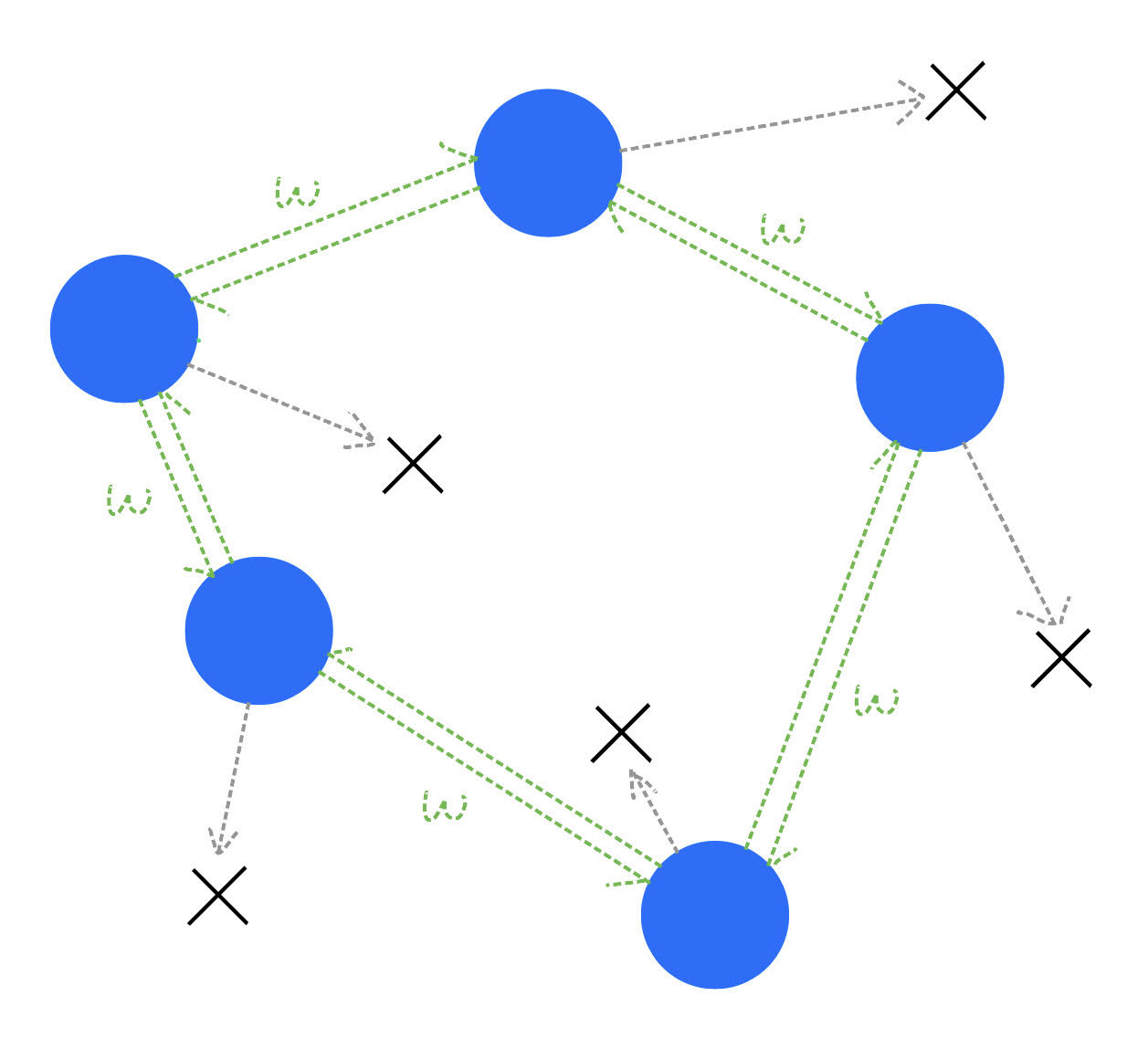}
  \caption{A cooperative navigation task.}
  \label{fig: task}
  \vspace{-.2in}
\end{figure}

\subsection{Performance with Cooperative Navigation}
\label{sec: coop_navi}
As illustrated in Fig.~\ref{fig: task}, in the cooperative navigation task~\cite{LowWuTam_17,ZhaYanLiu_18}, the agents (blue circles) are trained to cover the landmarks (crosses). 
Agents observe positions of all other agents and all landmarks and collaboratively cover the landmarks while avoiding collisions. 
The rewards for agents are defined through the proximity to the nearest landmarks. 
Unlike the synthetic experiments, the fixed point of the corresponding ODE as in Eq.~(\ref{eq: ode}) is difficult to compute.
Thus, we use the mean squared Bellman error(MSBE) as the performance metric. 
Due to space limitation, we relegate some experimental results to our online technical report~\cite{HaiZhaLiu_23}, including discussions on various network typologies, local TD-update steps, batch sizes, step sizes, and consensus error metrics.

\smallskip
{\bf 1) Experiment Setup and Performance Metrics:}
We consider a cooperative navigation task that is adapted from one of the multi-agent environments~\citep{LowWuTam_17}. 
There are \textit{N} = 9 agents in total, and the goal is to cover 9 landmarks collaboratively. 
Each agent chooses from the action space $\mathcal{A}^{i}=$\{no action, move left, move right, move down, move up\} based on the given policy $\pi$. 
The policy considered in the simulation is $\pi^{i}(\cdot|s)=0.2$ for all actions and $i\in\mathcal{N}$, $s\in\mathcal{S}$, i.e. uniformly random policy. 
The local rewards are given by the distance between the agents and the nearest goal landmarks. 
However, if the agents collide with each other, a penalty will incur. 
The agents are trained to cover landmarks and reach the destination, while avoiding to collide with other agents, and the entire learning process is fully decentralized. 
The feature dimension here is $n=36$, which includes all agents' self positions, landmark relative positions, and other agent relative positions. 
We choose step sizes for the TD-update and the vanilla TD approaches to be both $0.1$. 
We note that such step sizes are chosen for the best performance for the corresponding algorithms.

As mentioned earlier, we adopt the mean squared Bellman error (MSBE) as our performance metric. 
Given $w$-parameters and samples $(s_k,s_{k+1})$, the empirical squared Bellman error (SBE) of the $\kappa$-th sample is defined as:
\begin{align}
&\text{SBE}\left(\left\{w_{\kappa}^i\right\}_{i=1}^{N}, s_{\kappa},s_{\kappa + 1}\right) \nonumber \\
:&=\frac{1}{N} \sum_{i \in \mathcal{N}}\left(\phi(s_{\kappa})^{T}w^{i}_{\kappa}+\bar{\mu}_{\kappa}-\bar{r}_{\kappa}- \phi(s_{\kappa+1})^{T}w^{i}_{\kappa}\right)^2, \nonumber
\end{align}
where $\bar{r}_{\kappa}=\frac{1}{N}\sum_{i\in\mathcal{N}} r^{i}_\kappa$ and $\bar{\mu}_\kappa=\frac{1}{N}\sum_{i\in\mathcal{N}} \mu^{i}_\kappa$.
Then, MSBE up to the $k$-th sample is defined as the average of SBEs over the history, which is as follows:
$$\text{MSBE}:=\frac{1}{k} \sum_{\kappa=1}^k \text{SBE}\left(\left\{w^i_\kappa\right\}_{i=1}^{N}, s_\kappa,s_{\kappa+1}\right).$$

\begin{figure}[h!]
     \begin{subfigure}[b]{0.23\textwidth}
         \includegraphics[width=\textwidth]{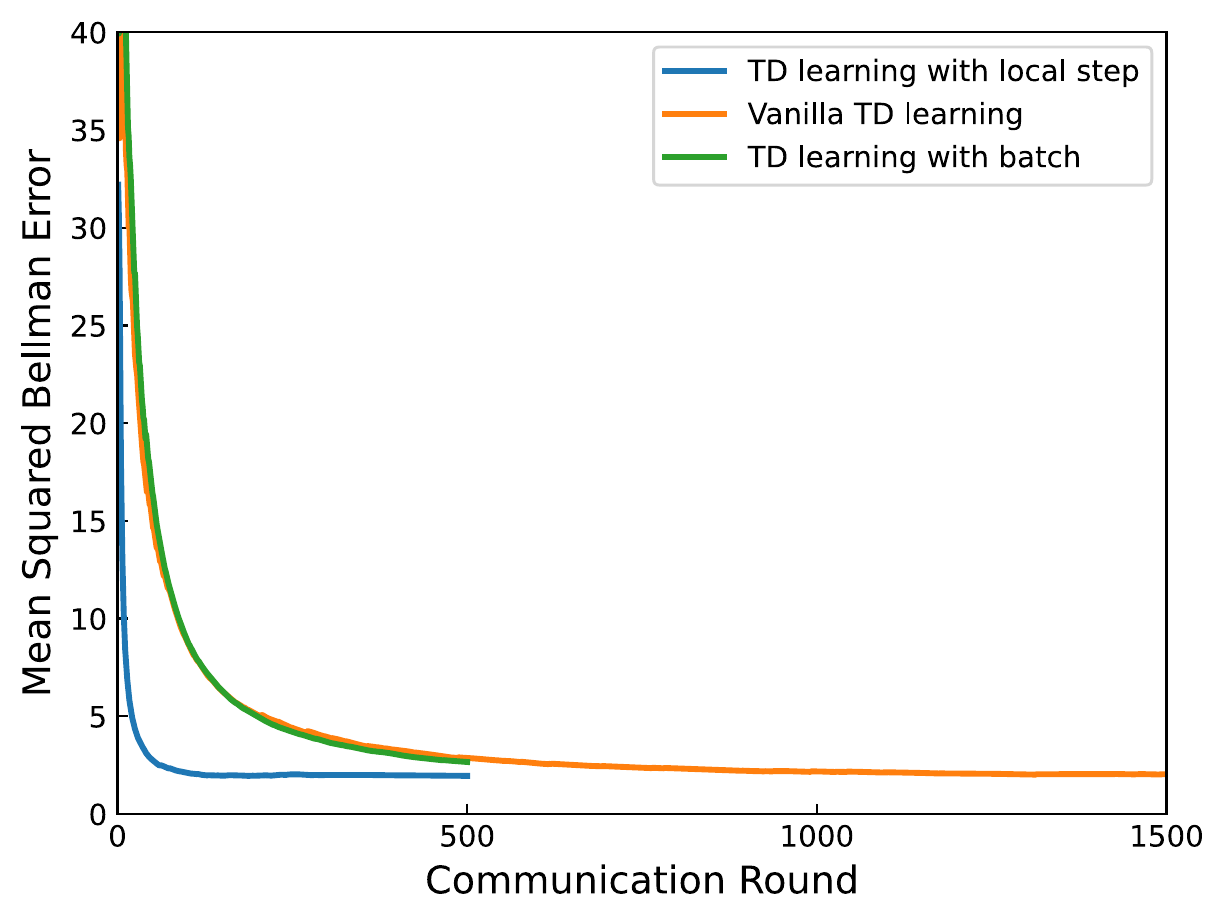}
         \caption{Bellman error with respect to communication rounds.}
         \label{fig: bell_com}
     \end{subfigure}
     \begin{subfigure}[b]{0.23\textwidth}
         \includegraphics[width=\textwidth]{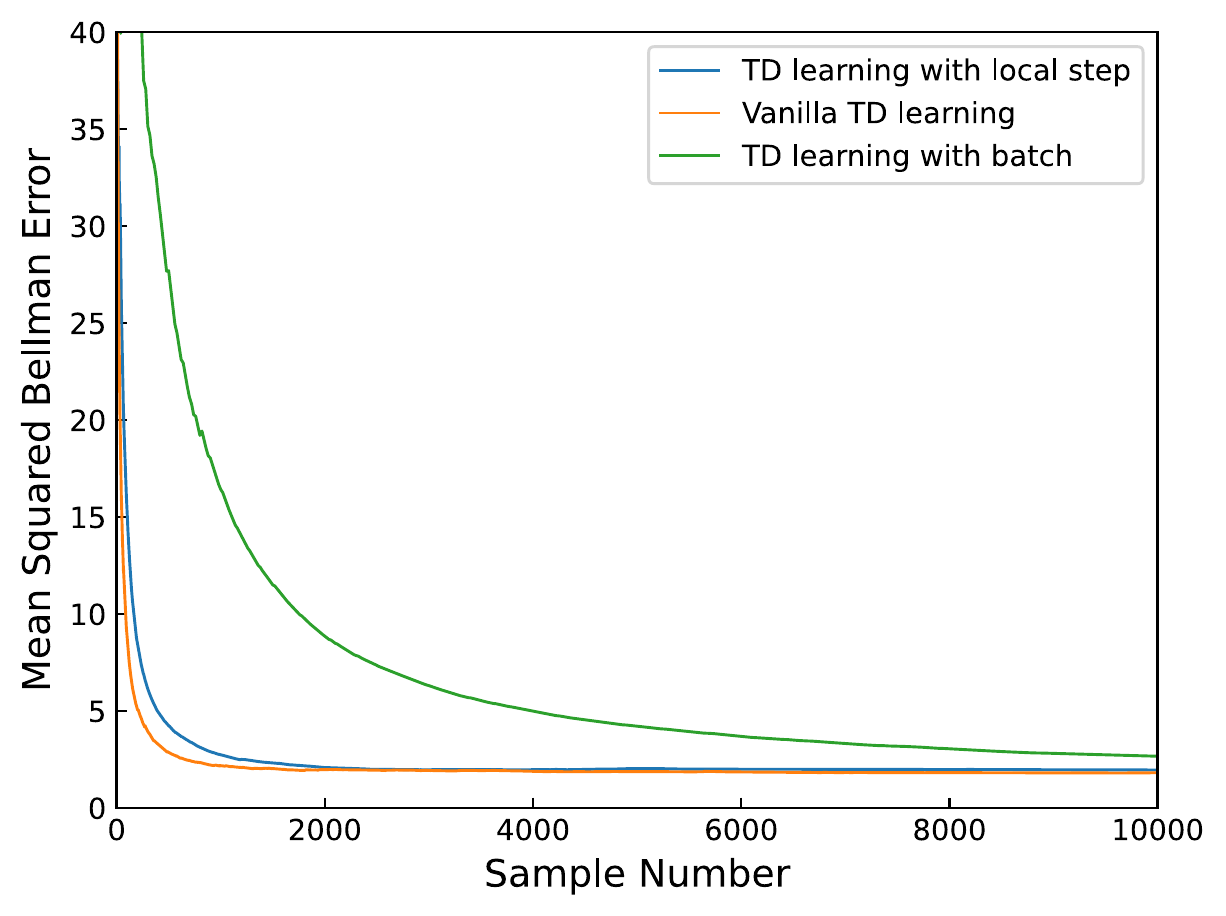}
         \caption{Bellman error with respect to the number of samples.}
         \label{fig: bell_samp}
     \end{subfigure}
     \caption{Convergence in terms of the number of communication rounds and training samples.}
     \label{fig: bell}
     \vspace{-.1in}
\end{figure}

\smallskip
{\bf 2) Convergence Performance:}
In Fig.~\ref{fig: bell_com} and \ref{fig: bell}, we illustrate the results of MSBEs with respect to the number of communication rounds and training samples, where $N=9$ agents are connected through an Erdos-Renyi (ER) network. 
We set the number of local TD-update steps and the batch size both to be $20$ for the local TD-update and batching approaches, respectively. 
Similar to the synthetic experiments, all algorithms converge to similar levels of MSBE as shown in Fig.~\ref{fig: bell}. 
This again verifies our theoretical analysis that allowing local TD-update steps and performing infrequent communications do not affect convergence. 
Moreover, in this setting, the local TD-update algorithm converges much faster in terms of the number of communication rounds. 
Specifically, in Fig.~\ref{fig: bell_com}, the local TD-update algorithm requires roughly 250 rounds of communication to converge, while both the batching and vanilla TD algorithms perform similarly and require more than 500 rounds of communication to converge.

\section{Conclusion} \label{sec:conclusion}
In this paper, we investigated the question of whether the local TD-update approach can achieve low sample and communication complexities for multi-agent reinforcement learning policy evaluation (MARL-PE) under the average reward setting and, if so, how is the performance in comparison with other approaches under the average reward setting. 
Our theoretical analysis and experimental results show that the local TD-update approach can significantly lower the communication complexity compared to the vanilla TD learning. 
In addition, our theoretical analysis also shows that the number of local TD-update steps can be as large as $K=\mathcal{O}(1/\epsilon^{1/2}\log(1/\epsilon))$ to converge to an $\epsilon$-neighborhood of the solution of the corresponding ODE for MARL-PE. 
Compared with the batching approach for solving MARL-PE under average reward, the local TD-update approach achieves the same communication complexity as that of the batching approach, while enjoying a better sample complexity by a factor of $\mathcal{O}(1/\epsilon^{1/2})$ than that of the batching approach. 
Our experimental results also verify our theoretical findings in both synthetic and real-world data settings.

\appendix
\section{Single-Agent Policy Evaluation Convergence under the Average Reward Setting}

In this section, we provide the finite-time convergence result for single-agent RL in the average reward setting, as the update of average parameter $\bar{w}$ in Lemma \ref{lem: ave_con} is essentially a centralized single-agent TD learning.
The finite time convergence for a more general form of stochastic approximation has been established in \cite{SriYin_19}. We utilize such results by verifying the conditions in \cite{SriYin_19}. 
\subsection{Single Agent RL in Average Reward Setting}
We first describe the single agent TD(0) algorithm in the average reward setting in Algorithm \ref{alg: sa_average}.

\begin{algorithm}[t]
  \SetKwInOut{Input}{Input}
  \SetKwInOut{Output}{Output}
  \Input{Initial state $s_0,\pi$, feature map $\phi$, initial parameters $w_{0}, \mu_0$, step size $\beta$, traning iteration $T$}
  \BlankLine
  \For{$t=0,\cdots,T-1$}{
	Execute action $a_{t}\sim\pi(\cdot|s_t)$\;
	Observe the state $s_{t+1}$ and reward $r_{t+1}$\;
	Update $\delta_{t}\leftarrow r_{t+1}-\mu_{t}+\phi(s_{t+1})^{T}w_{t}-\phi(s_t)^{T}w_{t}$\;
 	Update $\mu_{t+1}\leftarrow \beta r_{t+1}+(1-\beta)\mu_{t}$\;
	TD Step: $w_{t+1}\leftarrow w_{t}+\beta\delta_{t}\cdot\phi(s_t)$\;
  }
  \Output{$w_T$}
  \caption{Single Agent TD(0) Learning in Average Reward Setting}
  \label{alg: sa_average}
\end{algorithm}

\begin{theorem}
Suppose $N=1$ and Assumptions \ref{ass: dis}-\ref{ass: fea} hold. For the parameter generated by Algorithm \ref{alg: sa_average}, we have following results: 
\begin{align}
\mathbb{E}[\|w_T-w^{*}\|^{2}]\le & c_2(1-c_1\beta)^{T-\tau(\beta)}(\sqrt{\|w_{0}-w^{*}\|^{2}+(\mu_0-J_{\pi})^{2}} \nonumber \\
& +\frac{r_{\max}}{3})^{2}+c_3\beta \tau(\beta),
\label{eq: single_agent}
\end{align}
where $c_1, c_2, c_3>0$ are constants that are independent of step size $\beta$ and iteration number $T$; and $\tau(\beta)=\mathcal{O}(\log \frac{1}{\beta})$ is the mixing time.
\label{thm: single_agent}
\end{theorem}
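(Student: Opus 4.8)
The plan is to recognize that the coupled updates of $w_t$ and $\mu_t$ in Algorithm~\ref{alg: sa_average} together constitute a single \emph{linear} stochastic approximation (LSA) recursion driven by a geometrically mixing Markov chain, and then to invoke the general finite-time LSA bound of \cite{SriYin_19} by verifying its hypotheses. Concretely, I would stack the two iterates into one vector $\theta_t = (w_t^\top, \mu_t)^\top \in \mathbb{R}^{n+1}$ and rewrite the TD(0) step as
\begin{align}
\theta_{t+1} = \theta_t + \beta\left(A(O_t)\theta_t + b(O_t)\right), \nonumber
\end{align}
where $O_t = (s_t, s_{t+1}, r_{t+1})$ is the Markovian observation and
\begin{align}
A(O_t) = \begin{pmatrix} \phi(s_t)(\phi(s_{t+1})-\phi(s_t))^\top & -\phi(s_t) \\ \mathbf{0}^\top & -1 \end{pmatrix}, \quad b(O_t) = \begin{pmatrix} r_{t+1}\phi(s_t) \\ r_{t+1} \end{pmatrix}. \nonumber
\end{align}
Taking expectations under the stationary distribution yields the limiting matrix $\bar A$ and vector $\bar b$. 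The fixed point $\bar A\theta^\star + \bar b = 0$ decouples, because the last row forces $\mu^\star = \mathbb{E}[r] = J_\pi$, and substituting this into the first block recovers the TD fixed-point relation $\Psi^\top w^\star = -b$, i.e.\ exactly $\theta^\star = (w^{*\top}, J_\pi)^\top$ with $w^*$ as in Eq.~(\ref{eq: ode}). This is precisely why $(\mu_0 - J_\pi)^2$ appears alongside $\|w_0 - w^*\|^2$ in the bound: together they form $\|\theta_0 - \theta^\star\|^2$.

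Next I would verify the three structural conditions required by \cite{SriYin_19}. First, uniform boundedness of $A(O_t)$ and $b(O_t)$ is immediate from $\|\phi(s)\|\le 1$ (Assumption~\ref{ass: fea}) and $|r_t|\le r_{\max}$ (Assumption~\ref{ass: r_bou}). Second, geometric ergodicity of $\{s_t\}$, and hence of $\{O_t\}$, follows from irreducibility and aperiodicity on the finite state space (Assumption~\ref{ass: dis}) via \citep[Theorem~4.9]{LevPer_17}, which is exactly what yields $\tau(\beta)=\mathcal{O}(\log(1/\beta))$. Third, and most importantly, I must show that $\bar A$ is Hurwitz with a quantifiable stability margin, since this produces the contraction factor $(1-c_1\beta)$. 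Because $\bar A$ is block upper-triangular, its spectrum is the spectrum of $\mathbb{E}[\phi(s)(\phi(s')-\phi(s))^\top]=\Psi^\top$ together with the eigenvalue $-1$; negative definiteness of $\Psi$ (established in \citep{TsiVan_99}) then places every eigenvalue strictly in the open left half-plane.

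The main obstacle I anticipate is translating this spectral, block-triangular Hurwitz property into the \emph{symmetric} negative-definiteness estimate $\theta^\top \bar A \theta \le -c_1\|\theta\|^2$ (in a suitable inner product) that the Lyapunov argument of \cite{SriYin_19} actually consumes, since block triangularity by itself does not imply symmetric negative definiteness. I would resolve this by solving the Lyapunov equation $\bar A^\top P + P\bar A = -I$ for a positive-definite $P$ (guaranteed to exist because $\bar A$ is Hurwitz) and running the analysis in the $P$-weighted norm; equivalently, a linear change of variables that decouples the $\mu$-dynamics from the $w$-dynamics removes the off-diagonal coupling $-\phi(s_t)$ and reduces the contraction estimate to the known TD contraction governed by $\Psi$ plus the trivial $-1$ contraction of the $\mu$-coordinate. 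Once the contraction constant $c_1$, the mixing time $\tau(\beta)$, and the gradient-noise bound (from which the additive $\tfrac{r_{\max}}{3}$ term and the $c_3\beta\tau(\beta)$ term emerge) are all in place, the theorem of \cite{SriYin_19} delivers the bound on $\mathbb{E}[\|\theta_T-\theta^\star\|^2]$; discarding the nonnegative summand $(\mu_T-J_\pi)^2$ then yields exactly Eq.~(\ref{eq: single_agent}).
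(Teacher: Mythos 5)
Your proposal follows essentially the same route as the paper's proof: stacking $(w_t,\mu_t)$ into a single linear stochastic-approximation recursion, identifying the fixed point $(w^*, J_\pi)$, verifying the three conditions of \cite{SriYin_19} (uniform boundedness of $A(\cdot)$ and $b(\cdot)$ from Assumptions~\ref{ass: r_bou} and \ref{ass: fea}, geometric mixing from Assumption~\ref{ass: dis}, and the Hurwitz property of the block-triangular drift matrix, handled via the Lyapunov equation exactly as the paper does when specifying $c_1,c_2,c_3$), then invoking Theorem~7 of \cite{SriYin_19} and discarding the nonnegative $(\mu_T-J_\pi)^2$ term. The only differences are cosmetic (ordering of the stacked vector, and your slightly more explicit derivation of the fixed-point decoupling), so your proof is correct and matches the paper's argument.
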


\begin{proof}
To apply Theorem~7 in \cite{SriYin_19}, we need to verify the three conditions in \cite[Section~2.1]{SriYin_19}. We have the following notations 
$Z_t=(s_t,a_t)$ and $r_{t+1}=r(Z_t)$. For TD(0) learning under the average reward setting, we have 
\begin{align}
\mu_{t+1}&=\mu_t+\beta (r_{t+1}-\mu_t), \nonumber \\
w_{t+1}&=w_t+\beta (r_{t+1}-\mu_t+\phi^{T}(Z_{t+1})w_t-\phi^{T}(Z_t)w_t)\phi(Z_t). \nonumber
\end{align}
Equivalently, the matrix form is 
\begin{align}
{\begin{pmatrix}
\mu_{t+1} \\[\jot]
w_{t+1}
\end{pmatrix}}={\begin{pmatrix}
\mu_{t} \\[\jot]
w_{t}
\end{pmatrix}}+&\beta
\begin{pmatrix}
 -1 & 0 \\[\jot] -\phi(Z_t) & \phi(Z_t)(\phi(Z_{t+1})-\phi(Z_t))^{T}
\end{pmatrix}\cdot\begin{pmatrix}
 \mu_t \\[\jot] w_t
\end{pmatrix} \nonumber \\
+&\beta\begin{pmatrix}
r_{t+1} \\[\jot] \phi(Z_t)r_{t+1}
\end{pmatrix}.
\label{eq: sde}
\end{align}
So the corresponding ODE can be written as:
\begin{align}
{\begin{pmatrix}
\dot{\mu} \\[\jot]
\dot{w}
\end{pmatrix}}=\begin{pmatrix}
 -1 & 0 \\[\jot] -\Phi^{T}D^{s}\mathbf{1} & \Phi^{T}D^{s}(P^{\pi}-I)\Phi
\end{pmatrix}\cdot\begin{pmatrix}
 \mu \\[\jot] w
\end{pmatrix}+\begin{pmatrix}
J_{\pi} \\[\jot] \Phi^{T}D^{s}\bar{R}
\end{pmatrix},
\label{eq: ODE}
\end{align}
where $P^{\pi}$ is the state transition matrix induced by the policy $\pi$, $D^{s}=\text{diag}(d(s_1),\cdots, d(s_{|\mathcal{S}|}))$ and $\bar{R}:=[\bar{R}(s),s\in\mathcal{S}]^{T}$, where $\bar{R}(s)=\sum_{a}\pi(a|s)r(s,a)$.
Now, using the notation in \cite{SriYin_19}, we have
\begin{align}
\bar{A}=\tilde{A}=\begin{pmatrix}
 -1 & 0 \\[\jot] -\Phi^{T}D^{s}\mathbf{1} & \Phi^{T}D^{s}(P^{\pi}-I)\Phi
\end{pmatrix} \nonumber 
\end{align}
and 
\begin{align}
\tilde{b}=\begin{pmatrix}
J_{\pi} \\[\jot] \Phi^{T}D^{s}\bar{R}
\end{pmatrix}. \nonumber
\end{align}
Next, by centering 
$\begin{pmatrix}
 \mu \\[\jot] w
\end{pmatrix}\leftarrow \begin{pmatrix}
 \mu \\[\jot] w
\end{pmatrix}-\begin{pmatrix}
J_{\pi} \\[\jot] w^{*}
\end{pmatrix}$ and defining, we have 
\begin{align}
X_k&=(Z_k,Z_{k+1})^{T}, \nonumber \\
A(X_k)&=\begin{pmatrix}
 -1 & 0 \\[\jot] -\phi(Z_t) & \phi(Z_t)(\phi(Z_{t+1})-\phi(Z_t))^{T}
\end{pmatrix}, \nonumber \\
b(X_k)&=\begin{pmatrix}
r_{t+1} \\[\jot] \phi(Z_t)r_{t+1}
\end{pmatrix}-A(X_k)\cdot\begin{pmatrix}
J_{\pi} \\[\jot] w^{*}
\end{pmatrix}, \nonumber \\
\bar{b}&=0. \nonumber
\end{align}

Note that $\bar{A}\begin{pmatrix}
J_{\pi} \\[\jot] w^{*}
\end{pmatrix}=\tilde{b}$.
Next, consider the following conditions:
\begin{itemize}
\item \textbf{Condition 1:} Note that 
\begin{align}
&||E[b(X_k|X_0=(Z_0,Z_1)=(z_0,z_1))]|| \nonumber \\
=&||\sum_{i}(P(Z_k=i|(Z_0,Z_1)=(z_0,z_1))-d(i)) \nonumber \\
&\cdot\left( \begin{pmatrix}
\bar{r}(i) \\[\jot] \phi(i)\bar{r}(i)
\end{pmatrix}
-\begin{pmatrix}
 -1 & 0 \\[\jot] -\phi(i) & \phi(i)(\sum_{j}p^{\pi}_{ij}\phi(j)-\phi(i))^{T}
\end{pmatrix}
\begin{pmatrix}
J_{\pi} \\[\jot] w^{*}
\end{pmatrix}\right)|| \nonumber \\
\le&||\sum_{i}(P(Z_k=i|(Z_0,Z_1)=(z_0,z_1))-d(i))||\cdot
b_{\max}, \nonumber 
\end{align}
where $b_{\max}=2(r_{\max}+J_{\pi})+2w^{*}$, where we used Assumption~\ref{ass: fea} and 
\begin{align}
&||\bar{A}-E[A(X_k)|X_0=(Z_0,Z_1)=(z_0,z_1)]|| \nonumber \\
=&||\sum_{i}(P(Z_k=i|(Z_0,Z_1)=(z_0,z_1))-d(i)) \nonumber \\
&\cdot\begin{pmatrix}
 -1 & 0 \\[\jot] -\phi(i) & \phi(i)(\sum_{j}p^{\pi}_{ij}\phi(j)-\phi(i))^{T}
\end{pmatrix}|| \nonumber \\
&\le 4||\sum_{i}(P(Z_k=i|(Z_0,Z_1)=(z_0,z_1))-d(i))||. \nonumber
\end{align}
Since $\{Z_k\}$ is a finite state, aperiodic and irreducible Markov chain, it has a geometric mixing rate, so Assumption \ref{ass: dis} holds.

\item \textbf{Condition 2:} By Assumption \ref{ass: fea}, $\max_{i\in \mathcal{S}}||\phi(i)||\le 1<\infty$ and $\max_{i\in \mathcal{S}\times\mathcal{A}}r(i)=r_{\max}$, it implies
\begin{align}
||A(X_k)||&=||\begin{pmatrix}
 -1 & 0 \\[\jot] -\phi(i) & \phi(i)(\phi(j)-\phi(i))^{T}
\end{pmatrix}||\le 4. \nonumber
\end{align}
Hence it is bounded. 
To normalize, we can set $A(i)\leftarrow \frac{A(i)}{4}$ and $b(i)\leftarrow \frac{b(i)}{4}$ to ensure the $||\bar{A}||\le 1$.
\item \textbf{Condition 3:} By the standard assumptions on the feature vectors in \cite{TsiVan_99}, we have that (1) $\Phi$ is full rank; (2) for every $\nu\in R^{n}$, $\Phi\nu\neq \mathbf{1}$. This ensures that real parts of all eigenvalues of $\bar{A}$ are strictly negative.
\end{itemize}
As a result, by directly applying Theorem 7 of \cite{SriYin_19}, we have 
\begin{align}
&\mathbb{E}[\|w_T-w^{*}\|^{2}+|\mu_T-J_{\pi}|^{2}] \nonumber \\
\le & c_2(1-c_1\beta)^{T-\tau(\beta)}(\sqrt{\|w_{0}-w^{*}\|^{2}+(\mu_0-J_{\pi})^{2}} \nonumber \\
& +\frac{r_{\max}}{3})^{2}+c_3\beta \tau(\beta), \nonumber
\end{align}
from which the result in Eq.~(\ref{eq: single_agent}) follows. 
\end{proof}

\subsection{Details of the Constants $c_1$, $c_2$, and $c_3$ in Lemma~\ref{lem: ave_con}}
\label{sec: lemma_2}
The average parameter is signaled by the average of the rewards, i.e. $\bar{r}_{l,k}=\frac{1}{N}\sum_{i\in\mathcal{N}}r^{i}_{l,k}$ during both the local TD-update and consensus steps. 
Therefore, the method updates similar to a centralized TD learning in a single-agent setting.
By applying Theorem \ref{thm: single_agent} for the average parameter, we have following results:
\begin{align}
\mathbb{E}[||\bar{w}_{L,0}-w^{*}||^{2}]\le &c_2(1-c_1\beta)^{KL-\tau(\beta)}(\sqrt{||\bar{w}_0-w^{*}||^{2}+(\mu_0-J_{\pi})^{2}} \nonumber \\
&+\frac{r_{\max}}{3})^{2}+c_3\beta \tau(\beta),
\nonumber
\end{align}
where $\tau(\beta)$ is a mixing time. Under Assumption \ref{ass: dis}, $\tau(\beta)=O(\log\frac{1}{\beta})$. 
To specify the constants $c_1, c_2, c_3$, recall the definition of $\Psi$ in Eq.~(\ref{eq: Psi_def}), which is negative definite \cite{TsiVan_99}. 
Further, define
\begin{align}
\tilde{\Psi}:=\begin{pmatrix}
 -1 & 0 \\[\jot] -\Phi^{T}D^{s}\mathbf{1} & \Psi
\end{pmatrix}, \nonumber
\end{align}
where $D^{s}=\text{diag}(d(s_1),\cdots, d(s_{|\mathcal{S}|}))$ and $\Phi$ is the feature matrix.
It is easy to see the lower diagonal block matrix $\tilde{\Psi}$ is a Hurwitz matrix due to the fact that both diagonal blocks are Hurwitz.
Therefore, we have a symmetric matrix $U>0$ \cite{SriYin_19} such that 
\begin{align}
    \tilde{\Psi}^{T} U+U\tilde{\Psi} +I=0,  \nonumber 
\end{align}
which is referred to as the Lyapunov equation. 
For symmetric matrix $U$, there exist the largest and smallest eigenvalues $\lambda_{\max}$ and $\lambda_{\min}$, respectively. 
In addition, $\lambda_{\max}$ and $\lambda_{\min}$ are both positive. 
As a result, by \cite[Theorem~7]{SriYin_19}, the constants are: 
\begin{align}
    c_1&=\frac{0.9}{\lambda_{\max}}, \nonumber \\
    c_2&=2.25\frac{\lambda_{\max}}{\lambda_{\min}}, \nonumber \\
    c_3&=\frac{2\lambda^{2}_{\max}(r^{2}_{\max}+55(1+r_{\max})^{3})}{0.9\lambda_{\min}}. \nonumber
\end{align}
 
\section{Cooperative Navigation Task}
\label{sec: nav}
In this section, we provide further experimental details on cooperative navigation task in addition to Section \ref{sec: coop_navi}. Moreover, we use consensus error as another performance metric, which is defined as following
$$\text{CE}\left(\left\{w_k^i\right\}_{i=1}^{N}\right):=\frac{1}{N} \sum_{i \in \mathcal{N}}\left\|w_k^i-\overline{w}_k\right\|^2.$$
\begin{figure}[H]
    \centering
    \begin{subfigure}{.3\linewidth}
        \centering
        \includegraphics[width = \linewidth]{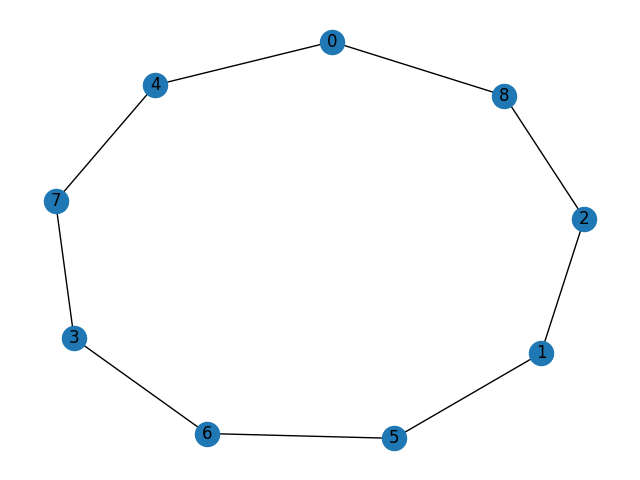}
        \caption{Ring \\Network}
    \end{subfigure}%
    \begin{subfigure}{.3\linewidth}
        \centering
        \includegraphics[width = \linewidth]{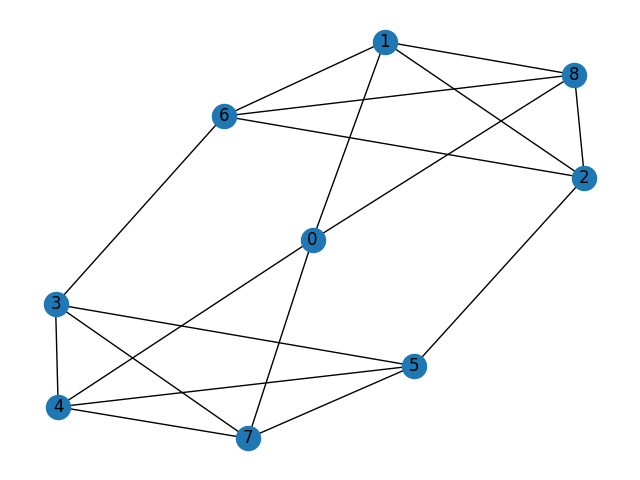}
        \caption{4-Regular \\Network}
    \end{subfigure}
    \begin{subfigure}{.3\linewidth}
        \centering
        \includegraphics[width = \linewidth]{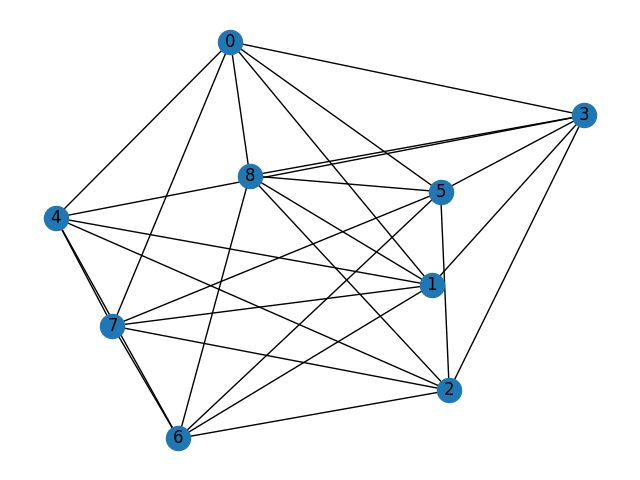}
        \caption{6-Regular \\Network}
    \end{subfigure}%
    
    \begin{subfigure}{.3\linewidth}
        \centering
        \includegraphics[width = \linewidth]{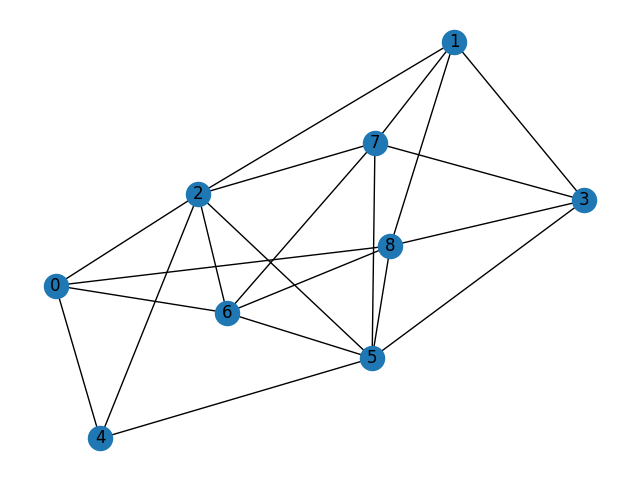}
        \caption{ER \\Network}
    \end{subfigure}
    \begin{subfigure}{.3\linewidth}
        \centering
        \includegraphics[width = \linewidth]{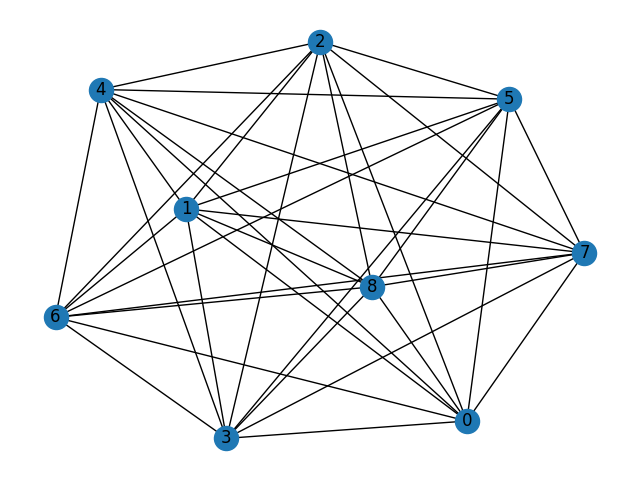}
        \caption{Complete \\Network}
    \end{subfigure}
    \caption{Network Topology}
    \label{fig: network}
    \vspace{-.2in}
\end{figure}

\subsection{Network Topology}
We compare all algorithms under five different network typologies. 
These are ring network, 4-regular network, 6-regular network, Erdos-Renyi(ER) network with 0.5 connection probability, and fully connected network. 
These network topologies are illustrated in Figure \ref{fig: network}. For simplicity, the local aggregation is the average of neighboring nodes for all networks.

\subsection{Convergence Performance}
First, we show the empirical convergence performances of all algorithms in terms of MSBEs and CEs.
Here we choose the number of local steps to be $K=20$ for local TD-update algorithm, the batch size to be $20$ for batch TD algorithm. 
For all algorithms, we set the total sample number to be $10000$. 
The comparisons among the algorithms are shown in Figure \ref{fig: all1}-\ref{fig: all4} over various network topologies. Left columns of the Figure \ref{fig: all1}-\ref{fig: all4} demonstrate the mean squared Bellman error(MSBE), and right columns demonstrate the consensus error(CE). 

We can see that all algorithms converge. More specifically, in terms of MSBE, the error floor of vanilla TD is the lowest, our proposed local TD-update approach is the second best and batch TD algorithm is the worst with a significant error gap. Similarly for CE, vanilla TD shows the lowest consensus error, our local TD-update approach shows slightly higher error, while batch TD algorithm shows the largest and oscillating consensus error across all network topologies.
This verifies our analysis that allowing local steps and performing infrequent communications is feasible and can converge.
In this parameter setting, vanilla TD algorithm performs 10000 communication rounds, which is the most, batch TD algorithm performs 1000 communication rounds, while local TD-update algorithm only performs 500 communication rounds. For more details, see discussion on the communication round in Section \ref{sec: navi_coop_comm}.

In Figure \ref{fig: topo_local}, we present the topology effect on our proposed algorithm. It is, in general, as the network becomes more and more connected the consensus error fluctuates less. Intuitively, with denser network, after local consensus aggregation, the parameter can be closer to the global average of the network.

\begin{figure}[H]
    \centering
    \begin{subfigure}{.47\linewidth}
        \centering
        \includegraphics[width = \linewidth]{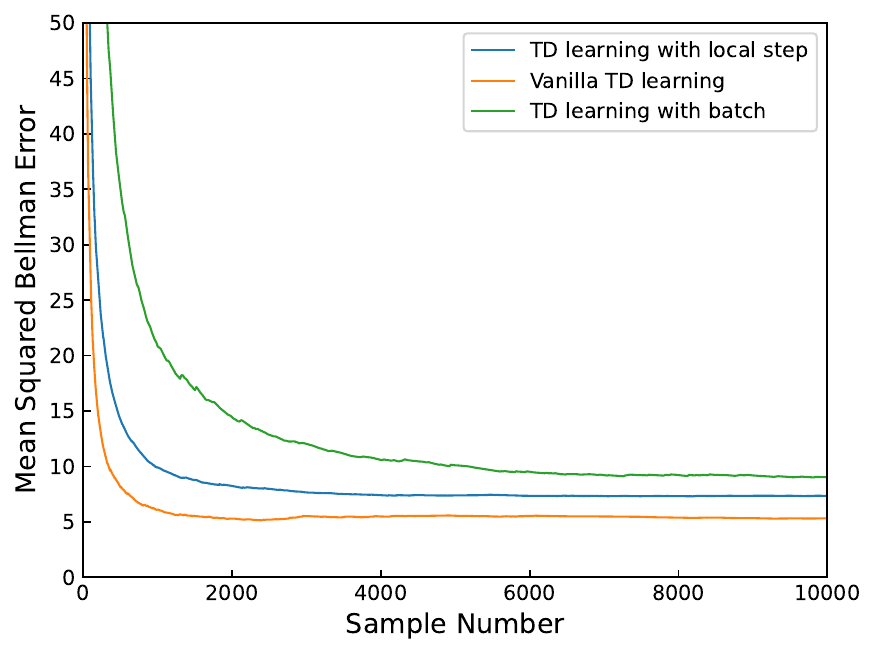}
        \caption{Mean Squared Bellman Error}
    \end{subfigure}%
    \begin{subfigure}{.47\linewidth}
        \centering
        \includegraphics[width = \linewidth]{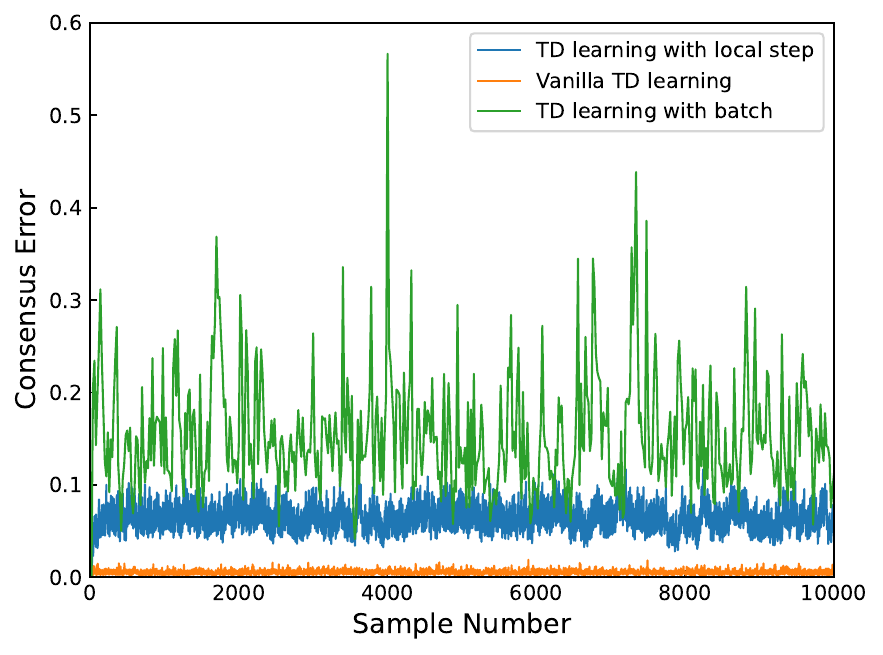}
        \caption{Consensus Error}
    \end{subfigure}%
    \caption{Comparison among Algorithms in Ring Network}
    \label{fig: all1}
    \vspace{-.2in}
\end{figure}

\begin{figure}[H]
    \centering
    \begin{subfigure}{.47\linewidth}
        \centering
        \includegraphics[width = \linewidth]{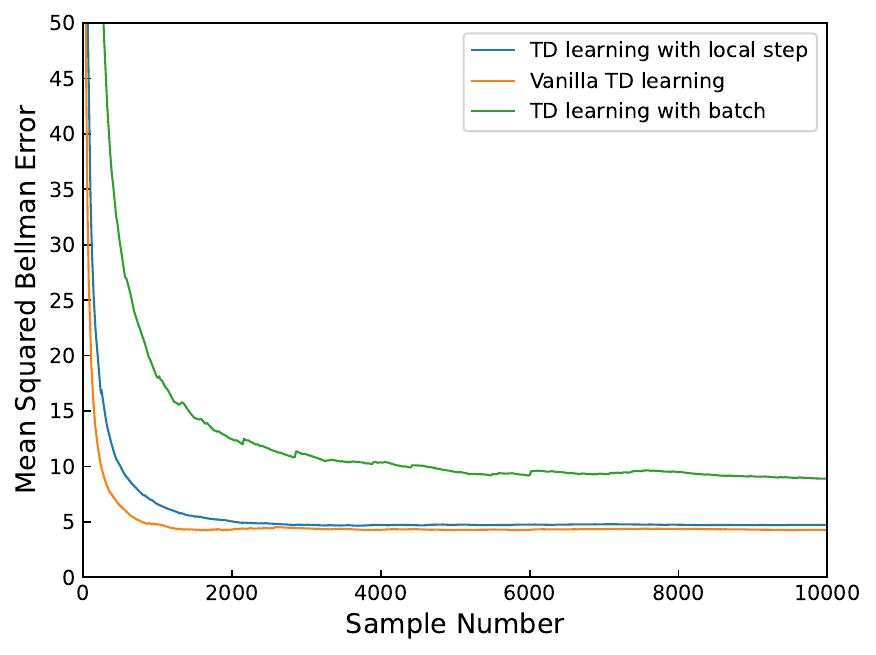}
        \caption{Mean Squared Bellman Error}
    \end{subfigure}%
    \begin{subfigure}{.47\linewidth}
        \centering
        \includegraphics[width = \linewidth]{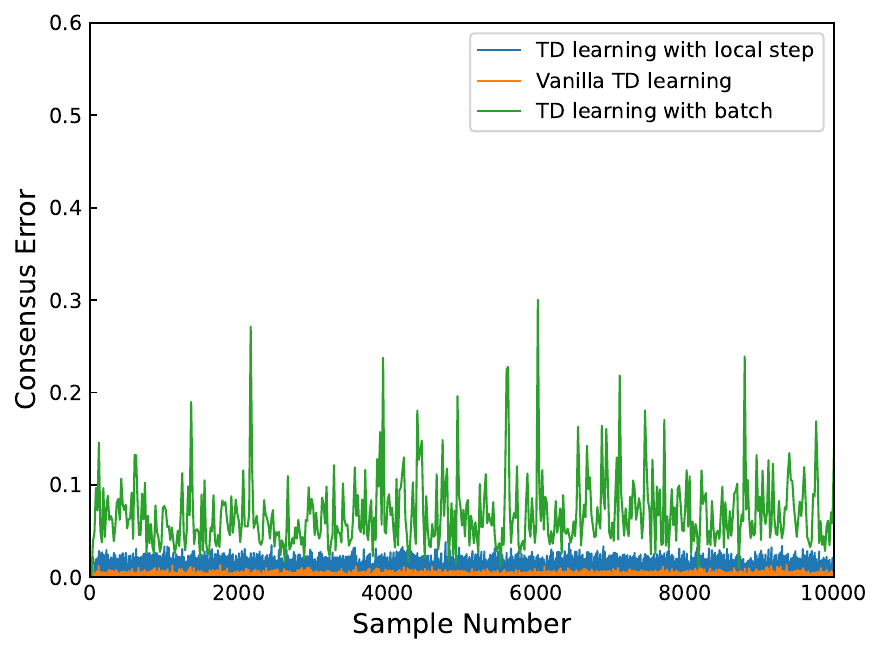}
        \caption{Consensus Error}
    \end{subfigure}%
    \caption{Comparison among Algorithms in 4-Regular Network}
    \label{fig: all2}
    \vspace{-.2in}
\end{figure}

\begin{figure}[H]
    \centering
    \begin{subfigure}{.47\linewidth}
        \centering
        \includegraphics[width = \linewidth]{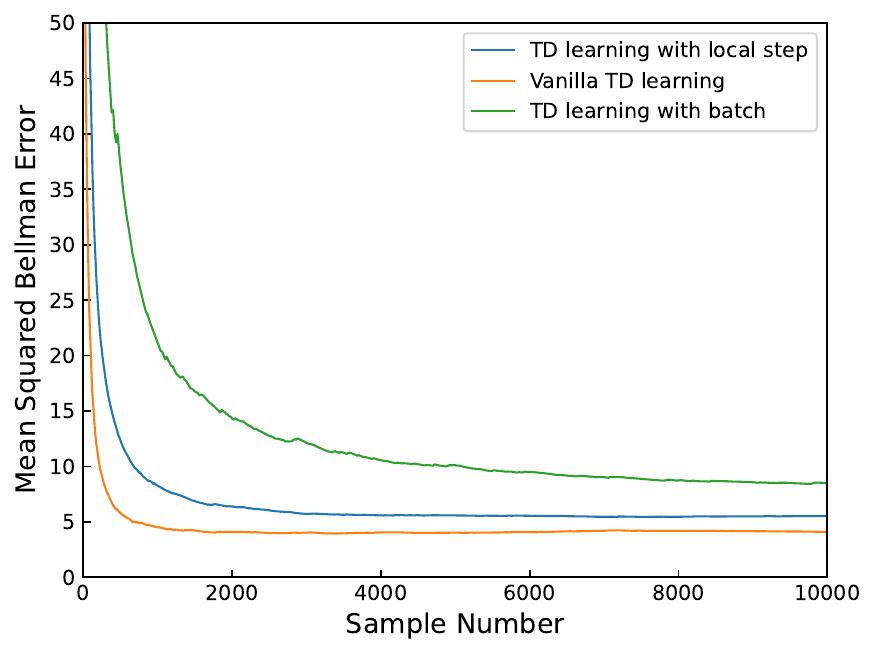}
        \caption{Mean Squared Bellman Error}
    \end{subfigure}%
    \begin{subfigure}{.47\linewidth}
        \centering
        \includegraphics[width = \linewidth]{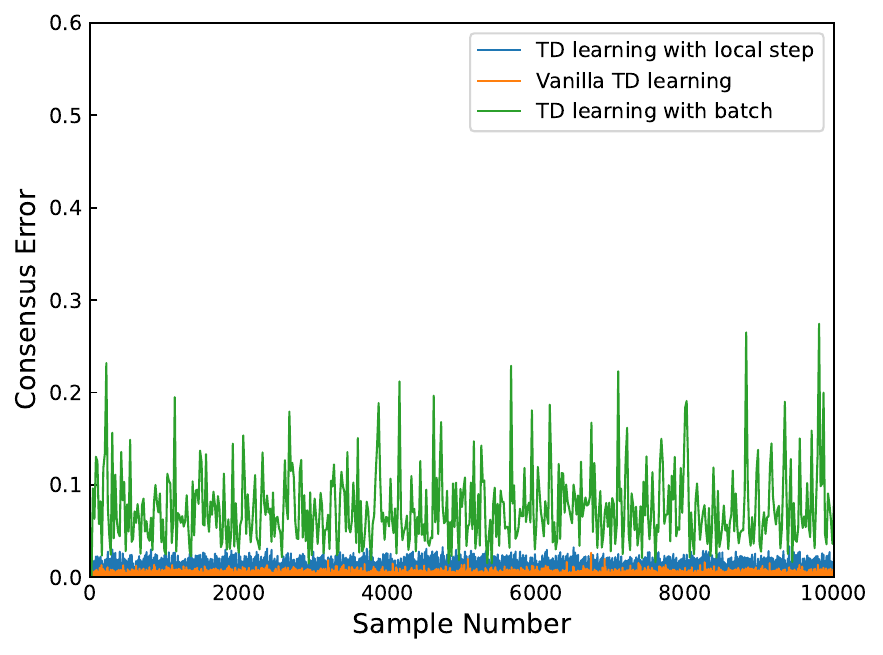}
        \caption{Consensus Error}
    \end{subfigure}%
    \caption{Comparison among Algorithms in 6-Regular Network}
    \label{fig: all3}
    \vspace{-.2in}
\end{figure}

\begin{figure}[H]
    \centering
    \begin{subfigure}{.47\linewidth}
        \centering
        \includegraphics[width = \linewidth]{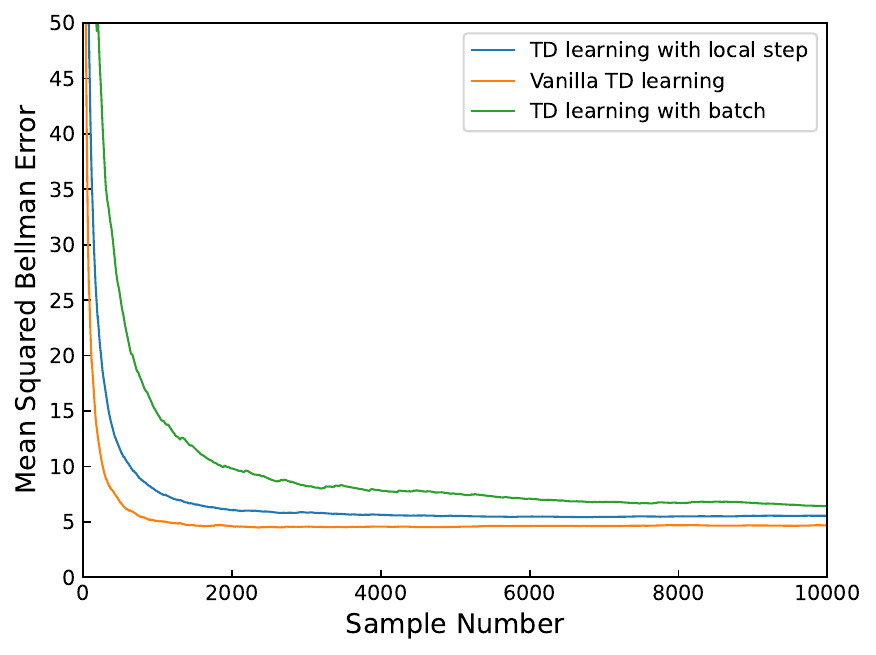}
        \caption{Mean Squared Bellman Error}
    \end{subfigure}%
    \begin{subfigure}{.47\linewidth}
        \centering
        \includegraphics[width = \linewidth]{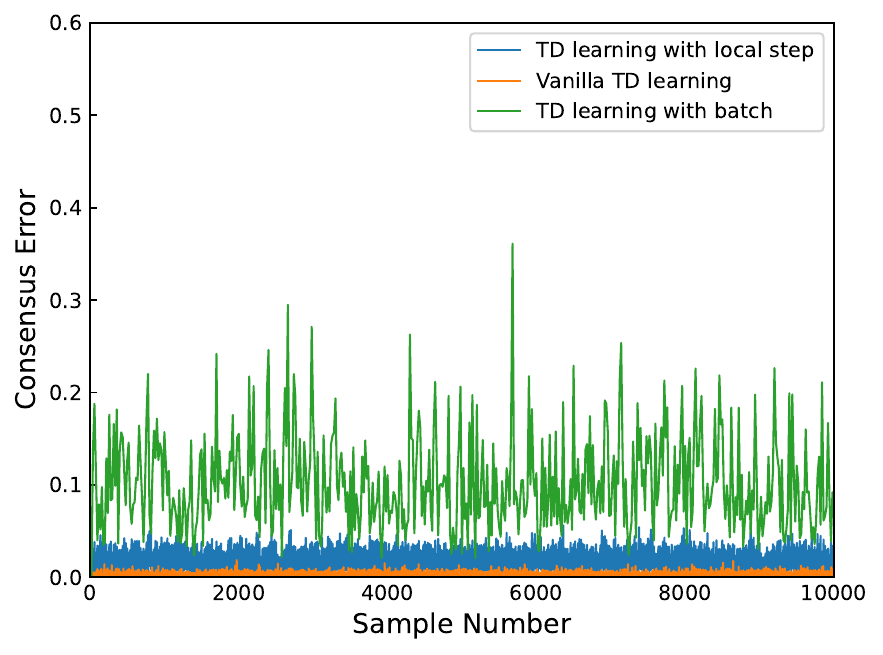}
        \caption{Consensus Error}
    \end{subfigure}%
    \caption{Comparison among Algorithms in ER Network}
    \label{fig: all5}
    \vspace{-.2in}
\end{figure}

\begin{figure}[H]
    \centering
    \begin{subfigure}{.47\linewidth}
        \centering
        \includegraphics[width = \linewidth]{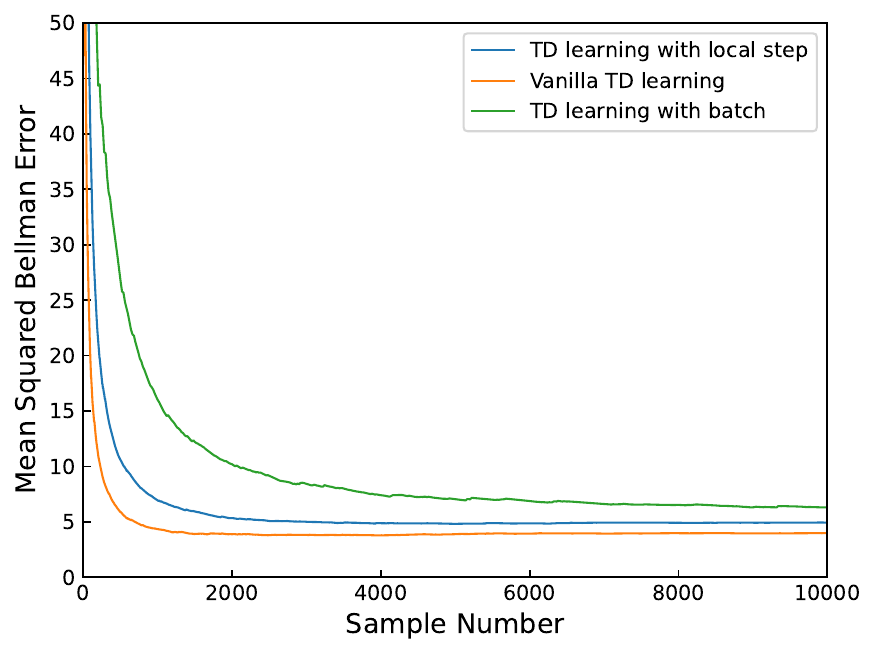}
        \caption{Mean Squared Bellman Error}
    \end{subfigure}%
    \begin{subfigure}{.47\linewidth}
        \centering
        \includegraphics[width = \linewidth]{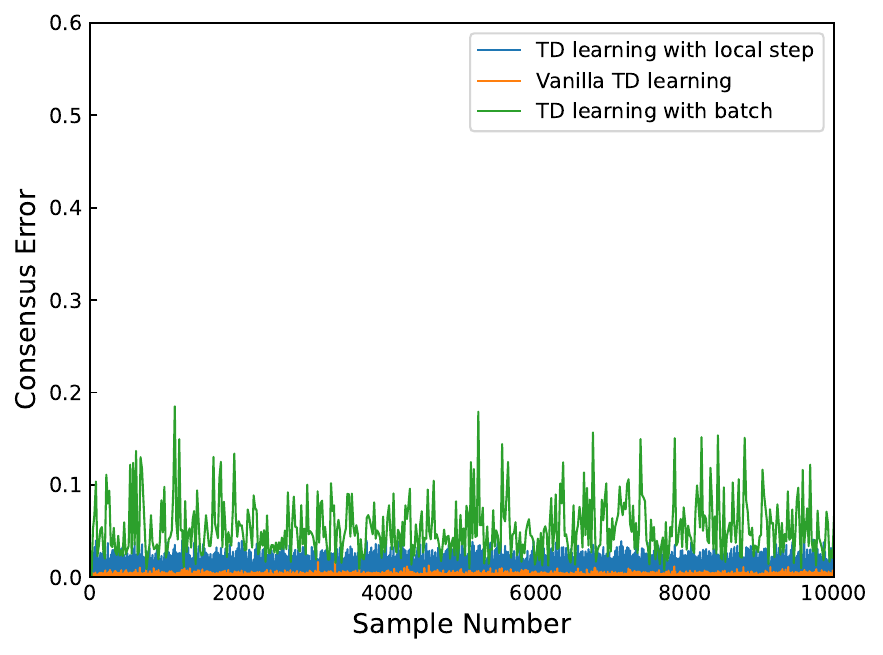}
        \caption{Consensus Error}
    \end{subfigure}%
    \caption{Comparison among Algorithms in Complete Network}
    \label{fig: all4}
    \vspace{-.2in}
\end{figure}

\begin{figure}[H]
  \centering
  \vspace{-.1in}
  \includegraphics[width=.45\textwidth]{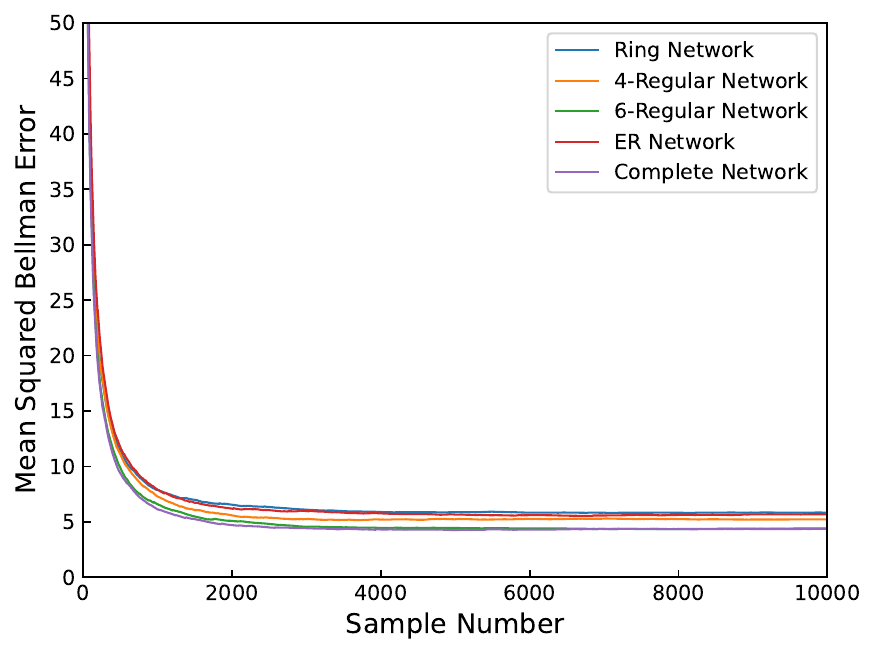}
  \caption{Topology on Local TD Algorithm}
  \label{fig: topo_local}
  \vspace{-.1in}
\end{figure}

In addition, we have compared different pairs of local step and communication round for our proposed algorithm and different pairs of batch size and communication round for batch TD algorithm in Figure \ref{fig: add_comp_task} for cooperative navigation task. In this setting, all algorithms converge to a similar Bellman error level. However, the convergence for local TD algorithm is faster than batch TD algorithms in all parameter settings. Moreover, with the increase of batch size, batch TD algorithm seems to converge slower while with the increase of local step, local TD algorithm convergence increases in general but not significantly.
\begin{figure}[H]
  \centering
  \vspace{-.1in}
  \includegraphics[width=.45\textwidth]{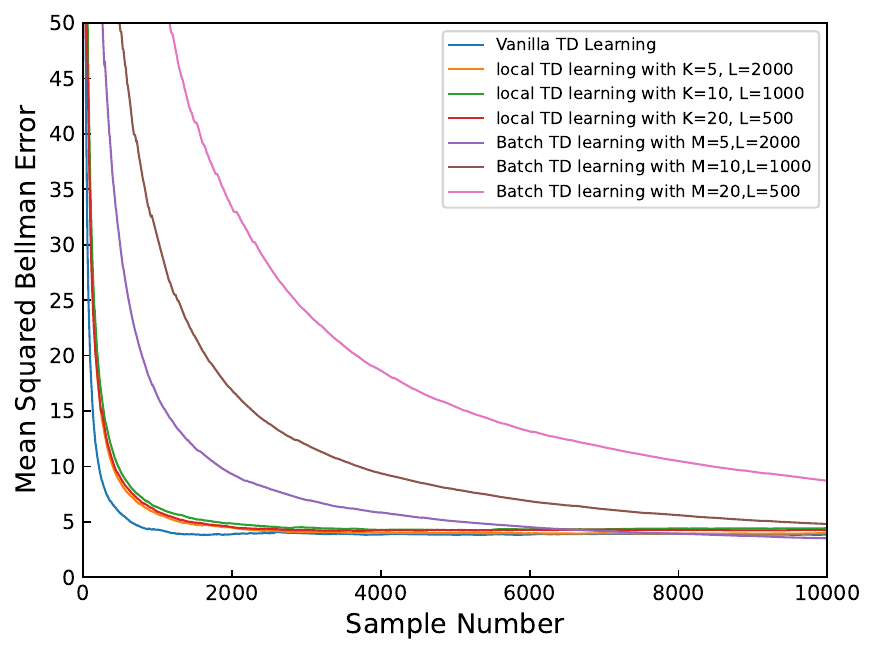}
  \caption{Additional Comparisons}
  \label{fig: add_comp_task}
  \vspace{-.1in}
\end{figure}

\subsection{Convergence Performance With Respect to Communication Rounds}
\label{sec: navi_coop_comm}
In Figure \ref{fig: comm}, we provide the convergence results with respect to the communication rounds for all algorithms, where the local step $K=20$ for local TD-update algorithm and the batch size is 20 for batch TD algorithm. We can see that within 500 communication rounds, local TD-update algorithm converges and requires much less communication round than vanilla TD algorithm, the convergence of which requires more than 1000 communication rounds. On the other hand, local TD-update approach converges to a lower error floor compared to batch TD algorithm. We can observe such empirical results across all network topologies.
\begin{figure*}
    \centering
    \begin{subfigure}{.2\linewidth}
        \centering
        \includegraphics[width = \linewidth]{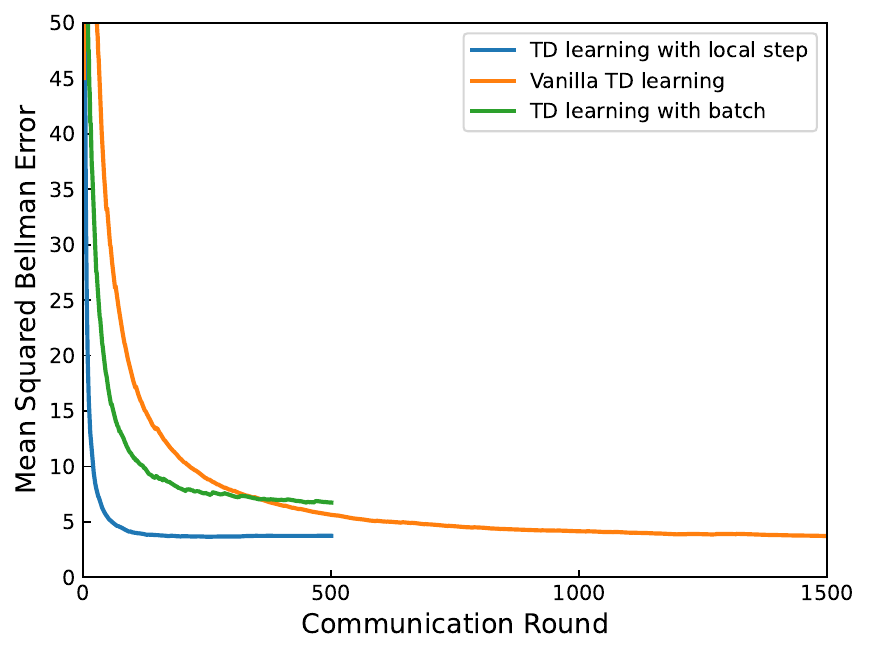}
        \caption{ring network}
    \end{subfigure}%
    \begin{subfigure}{.2\linewidth}
        \centering
        \includegraphics[width = \linewidth]{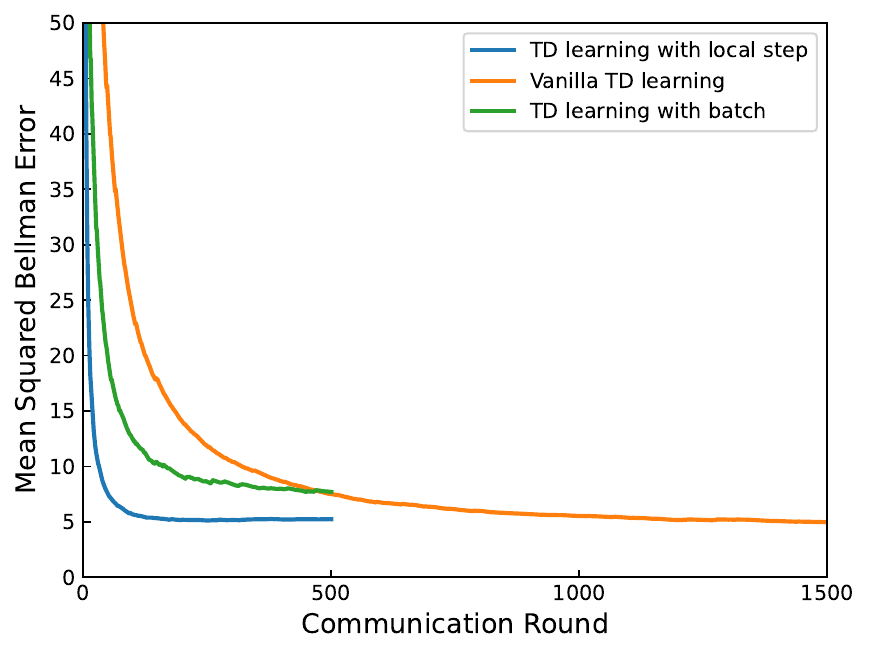}
        \caption{4-regular network}
    \end{subfigure}%
    \begin{subfigure}{.2\linewidth}
        \centering
        \includegraphics[width = \linewidth]{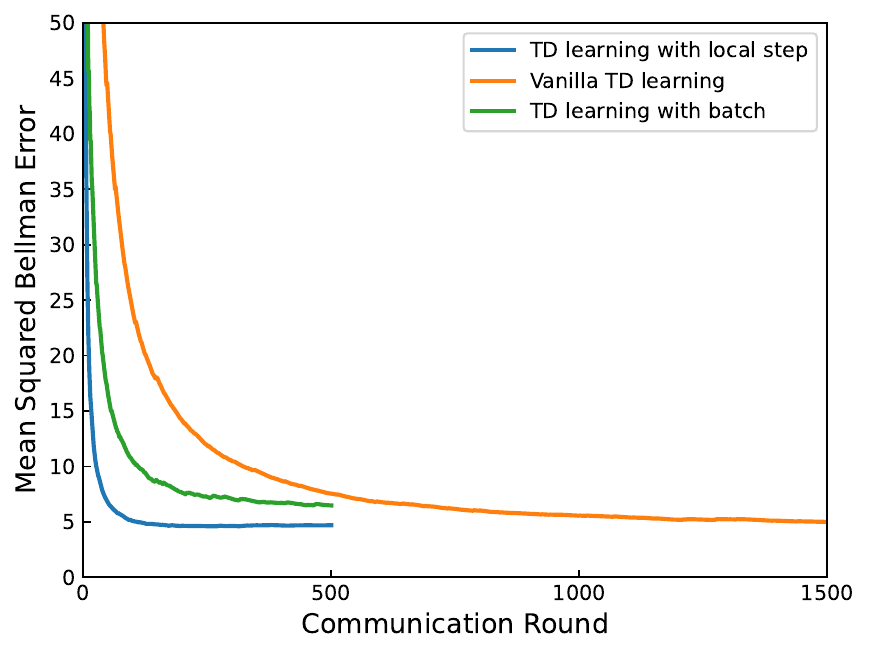}
        \caption{6-regular network}
    \end{subfigure}%
    \begin{subfigure}{.2\linewidth}
        \centering
        \includegraphics[width = \linewidth]{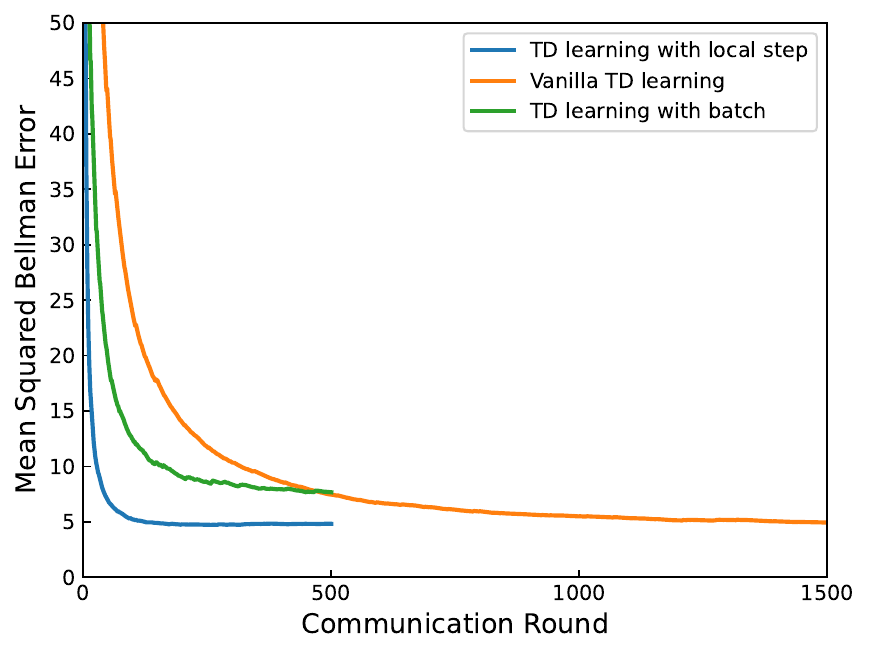}
        \caption{ER network}
    \end{subfigure}%
    \begin{subfigure}{.2\linewidth}
        \centering
        \includegraphics[width = \linewidth]{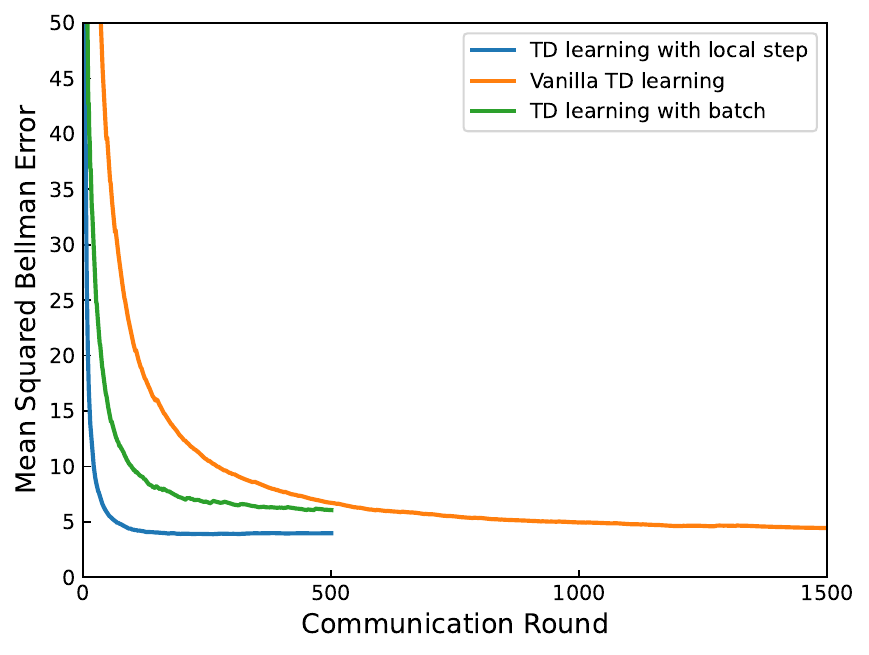}
        \caption{complete network}
    \end{subfigure}%
    \caption{Convergence Respect to Communication Rounds}
    \label{fig: comm}
    \vspace{-.2in}
\end{figure*}

\subsection{Impacts of the Number of Local Steps on Convergence}

Next, we illustrate the effect of the number of local steps $K$ on the convergence for our proposed algorithm. 
In Figure \ref{fig: local}, we vary the number of local steps from $K=10$ to $K=200$.
The right column is the consensus error of first 2000 samples, which displays a better view.
As the number of local steps increases, the mean squared Bellman error converges to a higher error level, on the other hand, the consensus error oscillates more.
In summary, larger local steps helps with saving more communication cost while it also result in converging to a higher mean squared Bellman error and a greater fluctuation of the consensus error. This conclusion echoes the results in synthetic experiment in Figure \ref{fig: local_steps}.

\begin{figure*}
    \centering
    \begin{subfigure}{.3\linewidth}
        \centering
        \includegraphics[width = \linewidth]{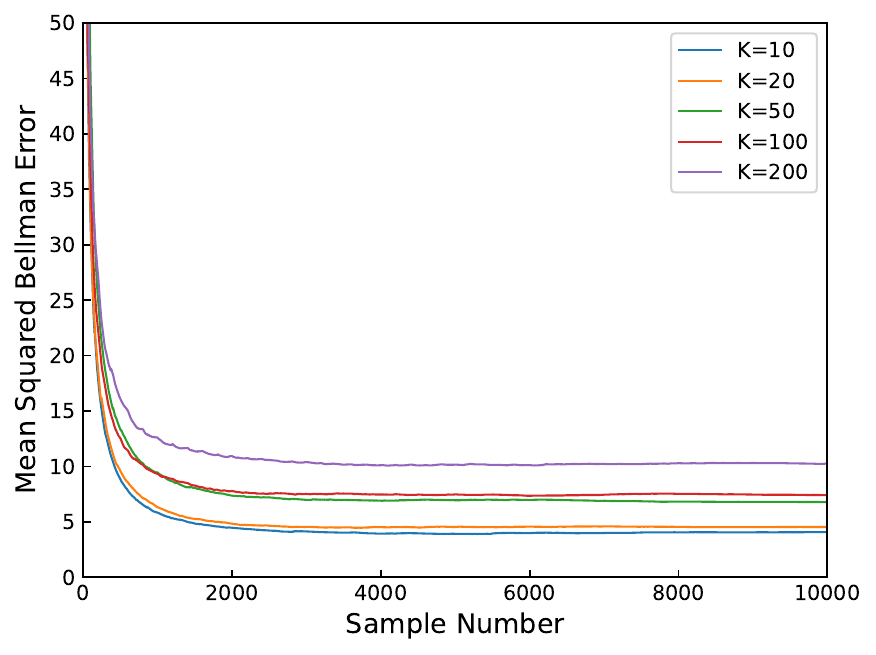}
        \caption{Mean Squared Bellman Error}
    \end{subfigure}%
    \begin{subfigure}{.3\linewidth}
        \centering
        \includegraphics[width = \linewidth]{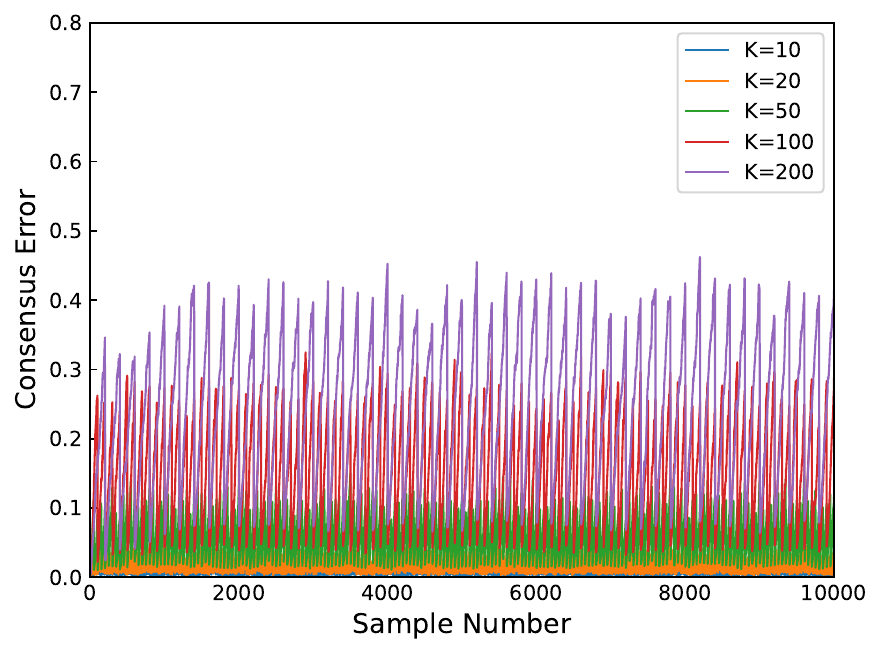}
        \caption{Consensus Error}
    \end{subfigure}%
    \begin{subfigure}{.3\linewidth}
        \centering
        \includegraphics[width = \linewidth]{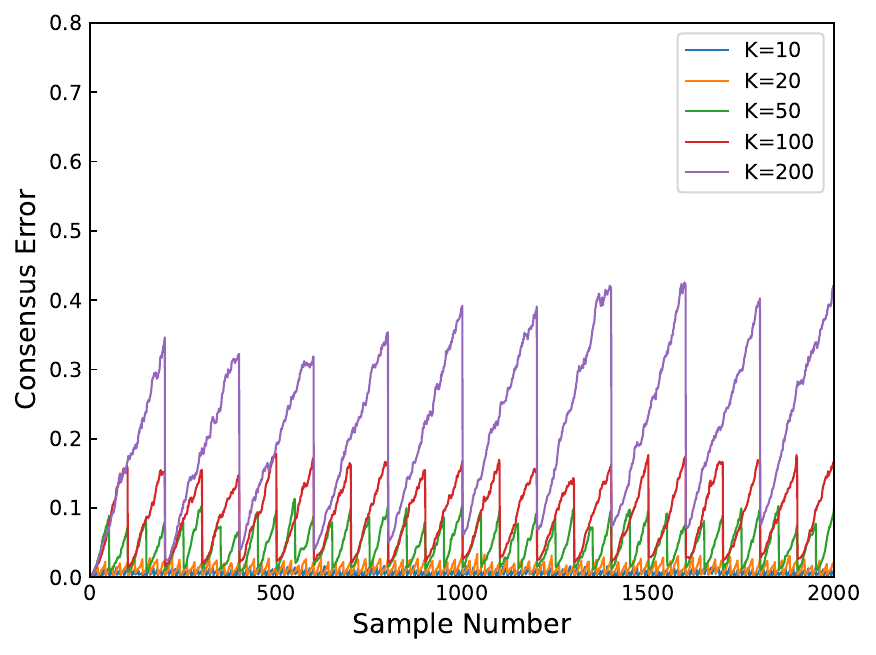}
        \caption{Zoomed Consensus Error}
    \end{subfigure}%
    \caption{Comparison among Different Local Steps in 4 Regular Network}
    \label{fig: local}
    \vspace{-.2in}
\end{figure*}

\subsection{Impacts of Step Size on Convergence}
Here, we illustrate the effect of step size $\beta$ on the convergence for our proposed algorithm and batch TD algorithm over 4-regular network to shed lights on the choice of step sizes for batch TD algorithm and our proposed local TD algorithm.
First of all, Figure \ref{fig: batch5}-\ref{fig: batch20} show the performance of batch TD algorithm over various batch sizes.
The purple line shows the performance of our proposed algorithm as a baseline comparison.
In general, in terms of mean squared Bellman error, larger step size $\beta$ leads to faster convergence speed and larger error level. 
Smaller step size $\beta$ leads to slower convergence speed, but could eventually converge to a smaller error floor.
Also, larger step size $\beta$ results in a greater consensus error.
Thus, we set step size $\beta=0.1$ and batch size to be $20$ for batch TD algorithm.

\begin{figure}[H]
    \centering
    \begin{subfigure}{.5\linewidth}
        \centering
        \includegraphics[width = \linewidth]{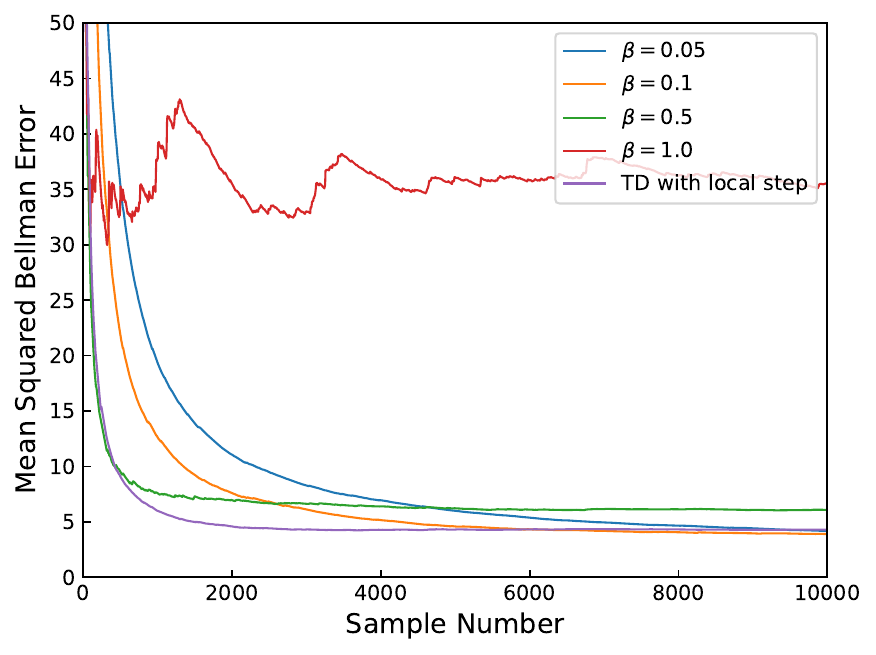}
        \caption{Mean Squared Bellman Error}
    \end{subfigure}%
    \begin{subfigure}{.5\linewidth}
        \centering
        \includegraphics[width = \linewidth]{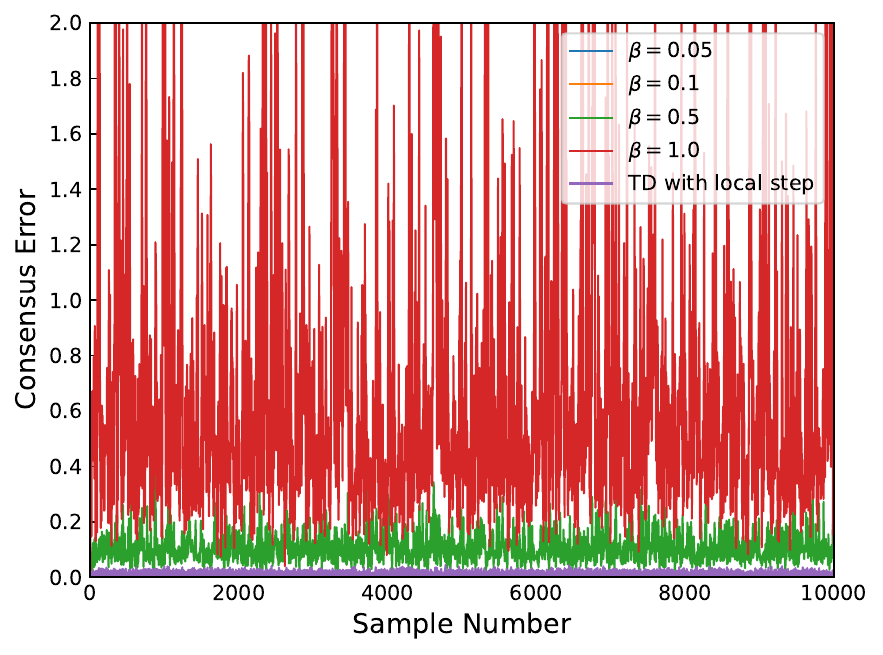}
        \caption{Consensus Error}
    \end{subfigure}%
    \caption{Comparison among Different step sizes $\beta$ for TD Learning with Batch Size 5 in 4 Regular Network}
    \label{fig: batch5}
    \vspace{-.2in}
\end{figure}

\begin{figure}[H]
    \centering
    \begin{subfigure}{.5\linewidth}
        \centering
        \includegraphics[width = \linewidth]{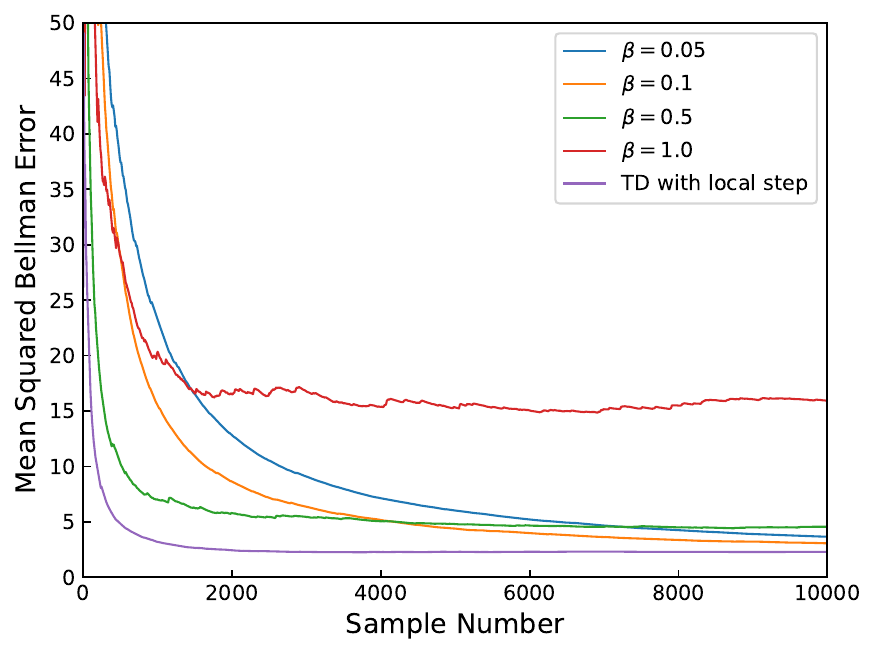}
        \caption{Mean Squared Bellman Error}
    \end{subfigure}%
    \begin{subfigure}{.5\linewidth}
        \centering
        \includegraphics[width = \linewidth]{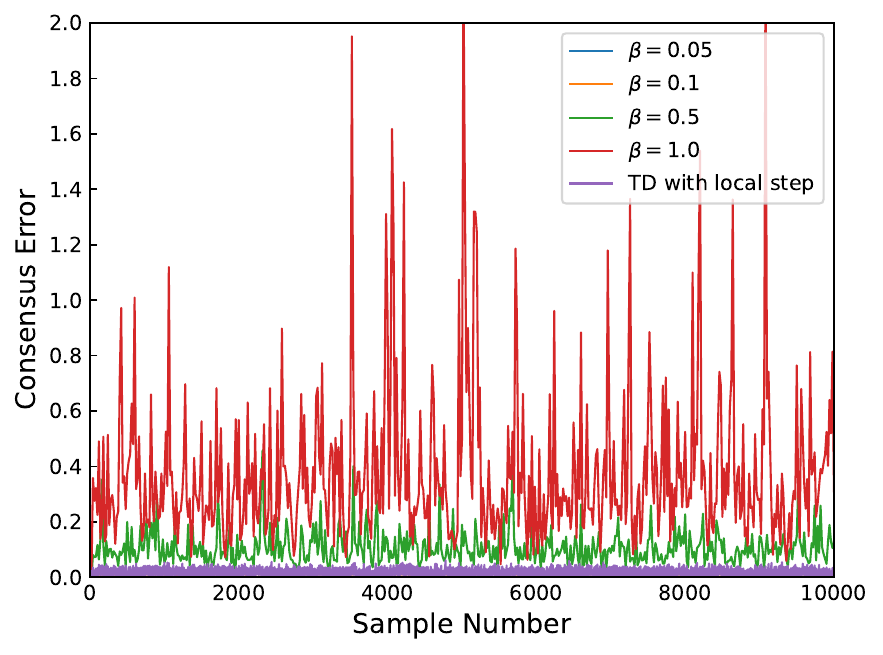}
        \caption{Consensus Error}
    \end{subfigure}%
    \caption{Comparison among Different step sizes $\beta$ for TD Learning with Batch Size 10 in 4 Regular Network}
    \label{fig: batch10}
    \vspace{-.2in}
\end{figure}

\begin{figure}[H]
    \centering
    \begin{subfigure}{.5\linewidth}
        \centering
        \includegraphics[width = \linewidth]{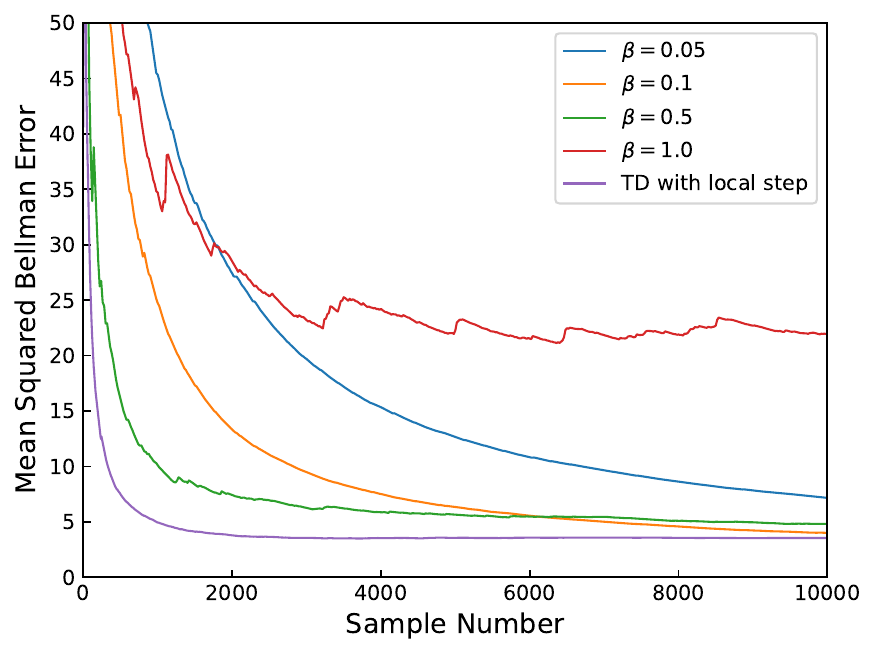}
        \caption{Mean Squared Bellman Error}
    \end{subfigure}%
    \begin{subfigure}{.5\linewidth}
        \centering
        \includegraphics[width = \linewidth]{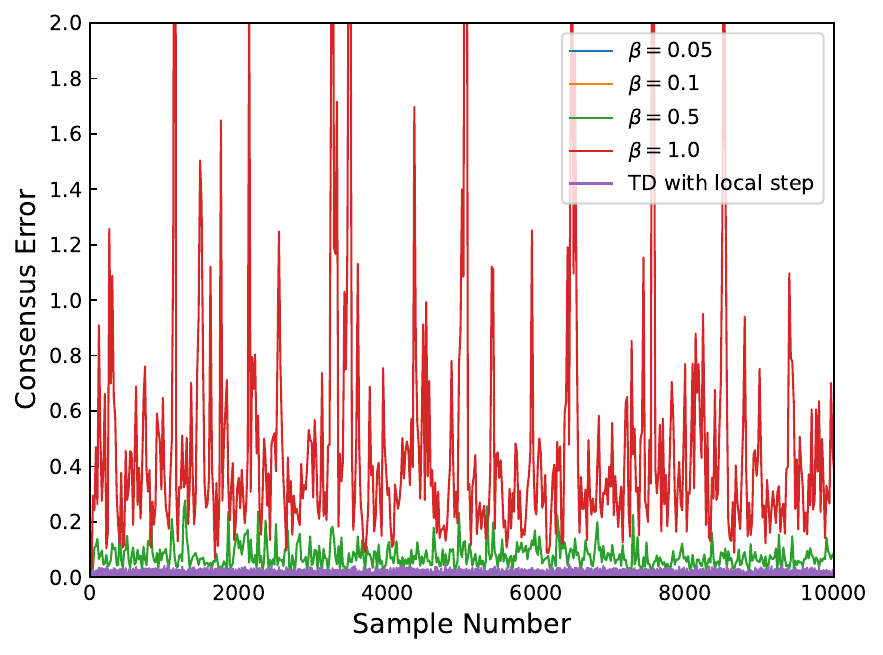}
        \caption{Consensus Error}
    \end{subfigure}%
    \caption{Comparison among Different step size $\beta$ for TD Learning with Batch Size 20 in 4 Regular Network}
    \label{fig: batch20}
    \vspace{-.2in}
\end{figure}
In Figure \ref{fig: local10}-\ref{fig: local50}, we show the effect of step size $\beta$ on the convergence performance of local TD-update algorithm.
Similar to batch TD algorithm, larger step size $\beta$ leads to faster convergence speed and larger mean squared Bellman and consensus error floor in local TD-update algorithm. 
However, the error floor differences among different step sizes $\beta$ are smaller compared to batch TD algorithm.
In order to balance among convergence speed, error floor, and communication cost, we decide to use step size $\beta=0.05$ and local step $K=20$.

\begin{figure}[H]
    \centering
    \begin{subfigure}{.5\linewidth}
        \centering
        \includegraphics[width = \linewidth]{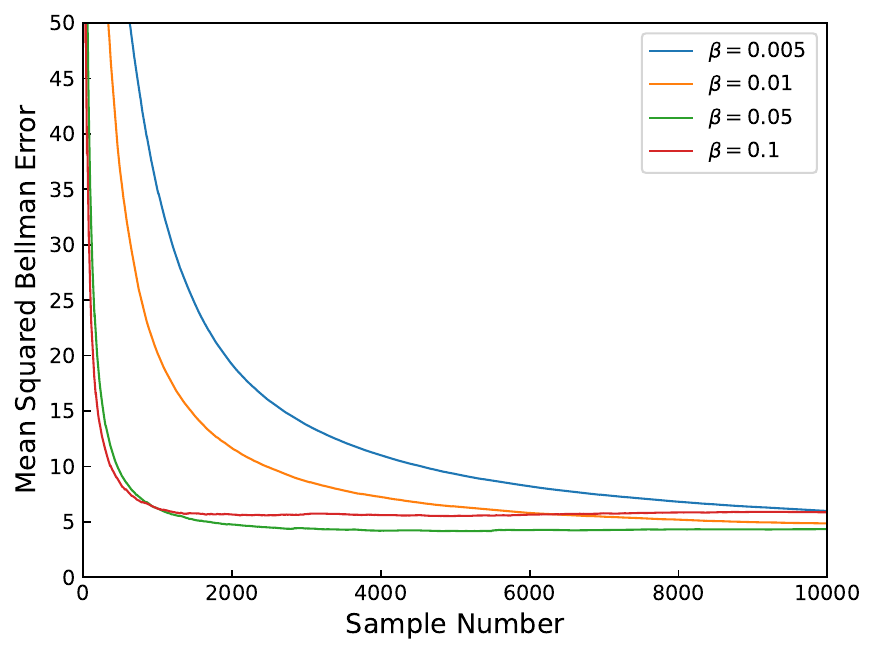}
        \caption{Mean Squared Bellman Error}
    \end{subfigure}%
    \begin{subfigure}{.5\linewidth}
        \centering
        \includegraphics[width = \linewidth]{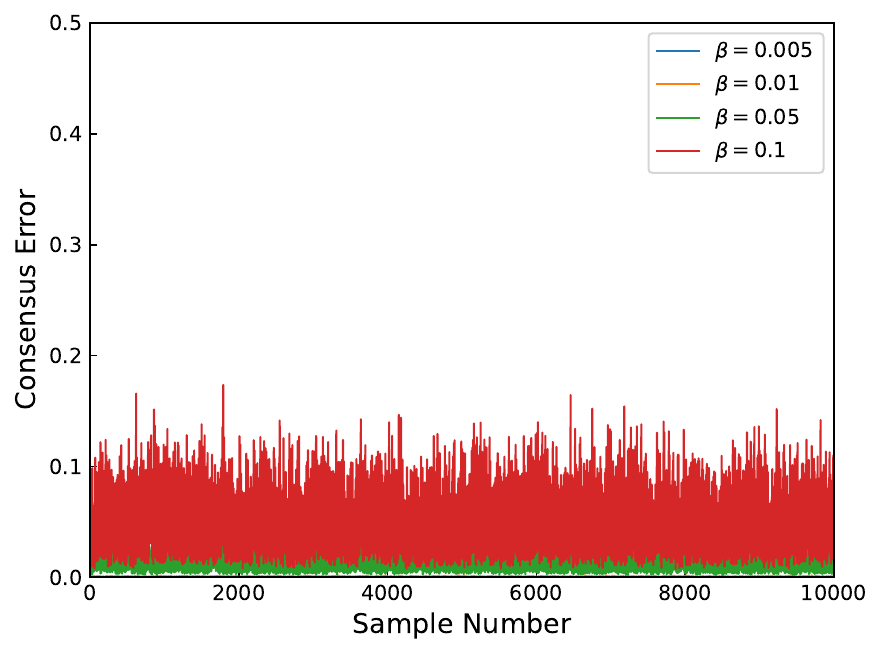}
        \caption{Consensus Error}
    \end{subfigure}%
    \caption{Comparison among Different step sizes $\beta$ for TD Learning with Local Step $K=10$ in 4 Regular Network}
    \label{fig: local10}
    \vspace{-.2in}
\end{figure}

\begin{figure}[H]
    \centering
    \begin{subfigure}{.5\linewidth}
        \centering
        \includegraphics[width = \linewidth]{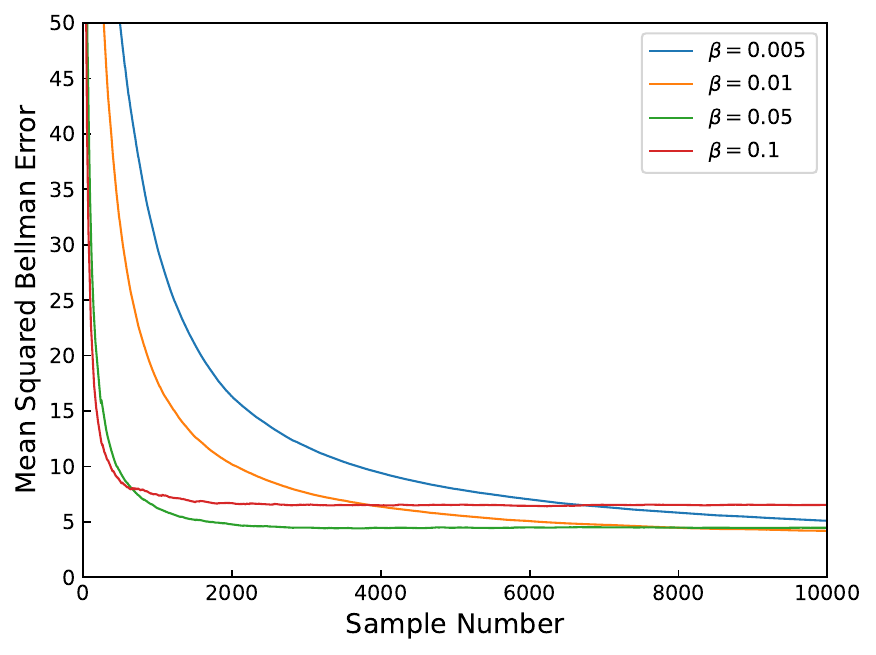}
        \caption{Mean Squared Bellman Error}
    \end{subfigure}%
    \begin{subfigure}{.5\linewidth}
        \centering
        \includegraphics[width = \linewidth]{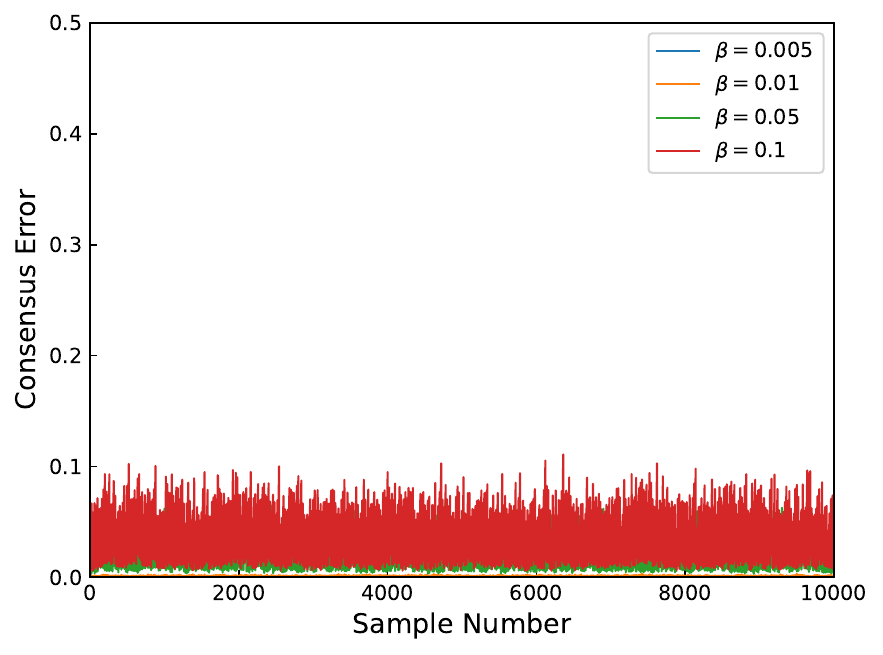}
        \caption{Consensus Error}
    \end{subfigure}%
    \caption{Comparison among Different step sizes $\beta$ for TD Learning with Local Step $K=20$ in 4 Regular Network}
    \label{fig: local20}
    \vspace{-.2in}
\end{figure}

\begin{figure}[H]
    \centering
    \begin{subfigure}{.5\linewidth}
        \centering
        \includegraphics[width = \linewidth]{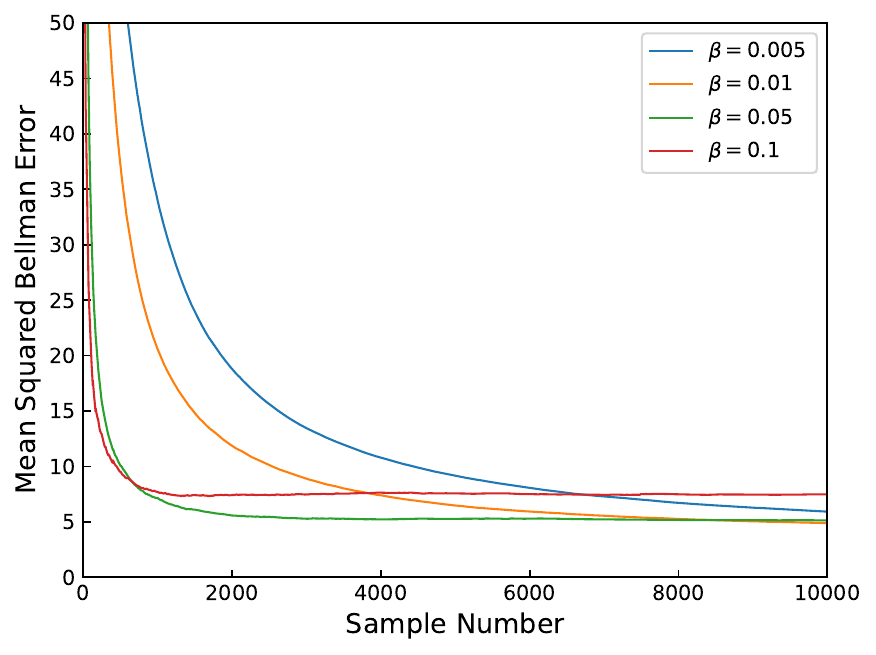}
        \caption{Mean Squared Bellman Error}
    \end{subfigure}%
    \begin{subfigure}{.5\linewidth}
        \centering
        \includegraphics[width = \linewidth]{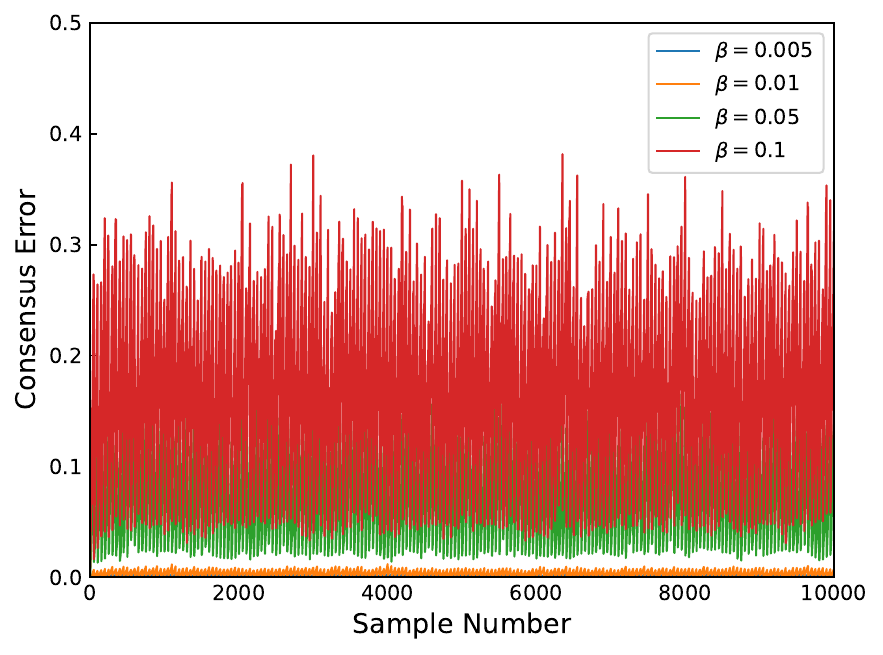}
        \caption{Consensus Error}
    \end{subfigure}%
    \caption{Comparison among Different step sizes $\beta$ for TD Learning with Local Step $K=50$ in 4 Regular Network}
    \label{fig: local50}
    \vspace{-.2in}
\end{figure}

\section{Proofs of Lemma and Theorem For Average Reward Setting}
In this section, we provide the derivation for the consensus error. Then, we prove the Lemma \ref{lem: con_err} and Theorem \ref{thm}.
\subsection{The derivation of consensus error}
Within each communication round $0\le l\le L-1$, the parameter update $w^{i}_{l,k}$ for agent $i$ at local step $k$ can be written as follows
\begin{align}
&w^{i}_{l,k+1} \nonumber \\
&=w^{i}_{l,k}+\beta \delta^{i}_{l,k}\cdot\phi(s_{l,k}) \nonumber \\
&=\left(I+\beta\phi(s_{l,k})[\phi(s_{l,k+1})-\phi(s_{l,k})]^{T}\right)w^{i}_{l,k}
+\beta (r^{i}_{l,k+1}-\mu^{i}_{l,k})\phi(s_{l,k}) \nonumber \\
&=B_{l,k}w^{i}_{l,k}+c^{i}_{l,k} 
\label{eq: w_i}
\end{align}
where $B_{l,k}:=I+\beta\phi(s_{l,k})[\phi(s_{l,k+1})-\phi(s_{l,k})]^{T}$ and $c^{i}_{l,k}:=\beta (r^{i}_{l,k+1}-\mu^{i}_{l,k})\phi(s_{l,k})$.
Then, from local step 0 to $K-1$, we have
\begin{align}
w^{i}_{l,K}&=\prod_{k=0}^{K-1}B_{l,k}w^{i}_{l,0}+\sum_{k=0}^{K-1}\prod_{n=k+1}^{K-1}B_{l,n} c^{i}_{l,k}. \nonumber
\end{align}

After a consensus update, the parameter for agent $i$ will be 
\begin{align}
w^{i}_{l+1,0}&=\sum_{j\in\mathcal{N}_i} A(i,j)\cdot w^{j}_{l,K} \nonumber \\
&=\sum_{j\in\mathcal{N}_i} A(i,j)\cdot \left(\prod_{k=0}^{K-1}B_{l,k}w^{j}_{l,0}+\sum_{k=0}^{K-1}\prod_{n=k+1}^{K-1}B_{l,n} c^{j}_{l,k}\right) \nonumber \\
&=\sum_{j\in\mathcal{N}_i} A(i,j)\cdot \prod_{k=0}^{K-1}B_{l,k}w^{j}_{l,0}+\sum_{j\in\mathcal{N}_i} A(i,j)\cdot \sum_{k=0}^{K-1}\prod_{n=k+1}^{K-1}B_{l,n} c^{j}_{l,k}.   \nonumber
\end{align}
The equation above shows the parameter update between two consecutive communication rounds.
Now we consider the average dynamics of the parameters across all agents. Recall $\bar{w}_{l,k}=\frac{1}{N}\sum_{i\in\mathcal{N}}w^{i}_{l,k}$, then within each communication round $0\le l\le L-1$, using \eqref{eq: w_i} we have 
\begin{align}
\bar{w}_{l,k+1}&=\frac{1}{N}\sum_{i\in\mathcal{N}}w^{i}_{l,k+1} \nonumber \\
&=\frac{1}{N}\sum_{i\in\mathcal{N}} (B_{l,k}w^{i}_{l,k}+c^{i}_{l,k}) \nonumber \\
&=B_{l,k}\bar{w}_{l,k}+\frac{1}{N}\sum_{i\in\mathcal{N}}c^{i}_{l,k} \nonumber \\
&=B_{l,k}\bar{w}_{l,k}+\bar{c}_{l,k}
\label{eq: w_bar}
\end{align}
where $\bar{c}_{l,k}:=\frac{1}{N}\sum_{i\in\mathcal{N}}c^{i}_{l,k}$.
Hence, the average dynamics from local step 0 to $K-1$ will be
\begin{align}
\bar{w}_{l,K}&=\prod_{k=0}^{K-1}B_{l,k}\bar{w}_{l,0}+\sum_{k=0}^{K-1}\prod_{n=k+1}^{K-1}B_{l,n} \bar{c}_{l,k}. \nonumber
\end{align}
After a consensus update, we have
\begin{align}
\bar{w}_{l+1,0}=\bar{w}_{l,K}. \label{eq: w_bar_con}
\end{align}
Note that the equation above means that consensus step will not change the average dynamics and average dynamic will only be updated during local steps.

For an agent $i\in\mathcal{N}$, we consider the consensus error at communication round $l$ and local step $k$, where $0\le k\le K-1$ and recall $Q^{i}_{l,k}=w^{i}_{l,k}-\bar{w}_{l,k}$. Then, we have
\begin{align}
Q^{i}_{l,k+1}=&w^{i}_{l,k+1}-\bar{w}_{l,k+1} \nonumber \\
=&B_{l,k}w^{i}_{l,k}+c^{i}_{l,k}-B_{l,k}\bar{w}_{l,k}-\bar{c}_{l,k} \nonumber \\
=&B_{l,k}(w^{i}_{l,k}-\bar{w}_{l,k})+c^{i}_{l,k}-\bar{c}_{l,k} \nonumber \\
=&B_{l,k}Q^{i}_{l,k}+c^{i}_{l,k}-\bar{c}_{l,k}.  \nonumber
\end{align}
Then, for the matrix form $Q_{l,k}=[Q^{1}_{l,k},\cdots,Q^{N}_{l,k}]\in R^{d\times N}$, we have 
\begin{align}
Q_{l,k+1}=&B_{l,k}Q_{l,k}+C_{l,k}(I-\frac{1}{N}\mathbf{1}\mathbf{1}^{T}) \nonumber
\end{align}
where $C_{l,k}:=[c^{1}_{l,k} \cdots c^{N}_{l,k}]$ and $\mathbf{1}$ denotes the all-1 column vector.
Then, for communication round $l$, we have 
\begin{align}
Q_{l,K}=&\prod_{k=0}^{K-1} B_{l,k}Q_{l,0}+\sum_{t=0}^{K-1}\prod_{\tilde{t}>t}^{K-1}B_{l,\tilde{t}}C_{l,k}(I-\frac{1}{N}\mathbf{1}\mathbf{1}^{T}) \nonumber
\end{align}
After a consensus update, we have 
\begin{align}
w^{i}_{l+1,0}-\bar{w}_{l,k}&=\sum_{j\in\mathcal{N}_i} A(i,j)w^{j}_{l,K}-\bar{w}_{l,K} \nonumber \\
&=\sum_{j\in\mathcal{N}_i} A(i,j)(w^{j}_{l,K}-\bar{w}_{l,K})=\sum_{j\in\mathcal{N}_i} A(i,j)Q^{j}_{l,K}. \nonumber
\end{align}
As a result, we have
\begin{align}
Q_{l+1,0}&=Q_{l,K}A^{T} \nonumber \\
&=\prod_{k=0}^{K-1} B_{l,k}Q_{l,0}A^{T}+\sum_{t=0}^{K-1}\prod_{\tilde{t}>t}^{K-1}B_{l,\tilde{t}}C_{l,t}(I-\frac{1}{N}\mathbf{1}\mathbf{1}^{T})A^{T}. \nonumber
\end{align}
After $L$ communication rounds, we have 
\begin{align}
Q_{L,0}&=\prod_{l=0}^{L-1}\prod_{k=0}^{K-1}B_{l,k}Q_{0,0} (A^{T})^{L} \nonumber \\
&\quad+\sum_{l=0}^{L-1}\prod_{j=1}^{L-1-l}\prod_{k=0}^{K-1}B_{l+j,k}\sum_{t=0}^{K-1}\prod_{\tilde{t}>t}^{K-1} B_{l,\tilde{t}} C_{l,t}(I-\frac{1}{N}\mathbf{1}\mathbf{1}^{T})(A^{T})^{L-l} \nonumber 
\end{align}

Note that for the second term when $l=L-1$, inside the summation, the summand becomes 
$\sum_{t=0}^{K-1}\prod_{\tilde{t}>t}^{K-1} B_{L-1,\tilde{t}} C_{L-1,t}(I-\frac{1}{N}\mathbf{1}\mathbf{1}^{T})A^{T}$. In other words, the matrix multiplier in front becomes an identity matrix.

\subsection{Proof of Lemma \ref{lem: con_err}}
\label{sec: proof_lemma1}
The norm of the consensus error is following
\begin{align}
&||Q_{L,0}|| \nonumber \\
=&||\prod_{l=0}^{L-1}\prod_{k=0}^{K-1}B_{l,k}Q_{0,0} (A^{T})^{L} \nonumber \\
&\quad+\sum_{l=0}^{L-1}\prod_{j=1}^{L-1-l}\prod_{k=0}^{K-1}B_{l+j,k}\sum_{t=0}^{K-1}\prod_{\tilde{t}>t}^{K-1} B_{l,\tilde{t}} C_{l,t}(I-\frac{1}{N}\mathbf{1}\mathbf{1}^{T})(A^{T})^{L-l}|| \nonumber \\
\le&||\prod_{l=0}^{L-1}\prod_{k=0}^{K-1}B_{l,k}Q_{0,0} (A^{T})^{L}|| \nonumber \\
&+||\sum_{l=0}^{L-1}\prod_{j=1}^{L-1-l}\prod_{k=0}^{K-1}B_{l+j,k}\sum_{t=0}^{K-1}\prod_{\tilde{t}>t}^{K-1} B_{l,\tilde{t}} C_{l,t}(I-\frac{1}{N}\mathbf{1}\mathbf{1}^{T})(A^{T})^{L-l}||. 
\label{eq: w_i_w_bar}
\end{align}

Before obtaining bounds on the terms of the consensus error in \eqref{eq: w_i_w_bar}, we first provide some useful bounds on $B_{l,k}$ and $C_{l,k}$. First, we have
\begin{align}
||B_{l,k}||&=||I+\beta\phi(s_{l,k})[\phi(s_{l,k+1})-\phi(s_{l,k})]^{T}|| \nonumber \\
&\le 1+\beta||\phi(s_{l,k})||(||\phi(s_{l,k+1})||+||\phi(s_{l,k})|| ) \nonumber \\
&\le 1+2\beta \nonumber
\end{align}
where the second inequality is due to Assumption \ref{ass: fea}.
Then, we have $||C_{k,l}||\le 2\beta \sqrt{N} r_{\max}$, where $r_{\max}=\sup_{i,s,a} r^{i}(s,a)$ by Assumption \ref{ass: r_bou}. This is because 
\begin{align}
||C_{k,l}||&=||\beta\phi(s_{k,l})\left((r^{1}_{k,l+1}, \cdots, r^{N}_{l,k+1})-(\mu^{1}_{k,l}, \cdots, \mu^{N}_{l,k})\right)|| \nonumber \\
&\le \beta||\phi(s_{k,l})||\cdot (||(r^{1}_{k,l+1}, \cdots, r^{N}_{l,k+1})||+||(\mu^{1}_{k,l}, \cdots, \mu^{N}_{l,k})||) \nonumber \\
&=2\beta\sqrt{N}r_{\max}. \nonumber
\end{align}

Next, inspired by \cite{SriYin_19}, we want to use the following bound
\begin{align}
(1+x)^{K}\le 1+2xK \nonumber
\end{align}
for small $x$. Note that 
\begin{align}
(1+x)^{K}|_{x=0}=1+2xK|_{x=0} \nonumber
\end{align}
and when $x\le\frac{\log 2}{K-1}$,
\begin{align}
\frac{\partial}{\partial x}(1+x)^{K}=K(1+x)^{K-1}\le Ke^{x(K-1)}\le 2K=\frac{\partial}{\partial x}(1+2xK) \nonumber
\end{align}
where the first inequality is due to the fact $\log (1+x)\le x$ for $x\ge 0$ and the second inequality is due to the fact $x\le\frac{\log 2}{K-1}$. Let $2\beta=x$ and $\beta\le\frac{1}{2K}\le\frac{\log 2}{2(K-1)}$.

For the first term in \eqref{eq: w_i_w_bar}, when $\beta \le\frac{1}{2K}$, we have that 
\begin{align}
||\prod_{l=0}^{L-1}\prod_{k=0}^{K-1}B_{l,k}Q_{0,0} (A^{T})^{L}||&\le ||\prod_{l=0}^{L-1}\prod_{k=0}^{K-1}B_{l,k}||\cdot ||Q_{0,0} (A^{T})^{L}|| \nonumber \\
&\le \kappa(1+2\beta)^{KL}(1-\eta^{N-1})^{L} \nonumber \\
&\le \kappa(1+4\beta K)^{L}(1-\eta^{N-1})^{L} \nonumber \\
&= \kappa \rho^{L} \nonumber
\end{align}
where we define $\rho:=(1+4\beta K)(1-\eta^{N-1})$. When $0<\beta K<\min\{\frac{1}{2},\frac{\eta^{N-1}}{4(1-\eta^{N-1})}\}$, we have $0<\rho<1$. The second inequality comes from the following two results.

First, consider the case where $A$ is a symmetric matrix for simplicity, then we have
\begin{align}
||Q_{0,0}A^{L}_{1,:}||&=||Q_{0,0}A^{L}_{1,:}-Q_{0,0}\frac{1}{N}\mathbf{1}|| \nonumber \\
&=||\sum_{i\in\mathcal{N}}(A^{L}_{1,i}-\frac{1}{N})Q^{i}_{0,0}|| \nonumber \\
&\le\sum_{i\in\mathcal{N}}|A^{L}_{1,i}-\frac{1}{N}|\cdot ||Q^{i}_{0,0}|| \nonumber \\
&\le N\cdot 2\frac{1+\eta^{-(N-1)}}{1-\eta^{N-1}}(1-\eta^{N-1})^{L}\cdot\max_{i\in\mathcal{N}}||Q^{i}_{0,0}|| \nonumber \\
&\le 2N\frac{1+\eta^{-(N-1)}}{1-\eta^{N-1}}(1-\eta^{N-1})^{L}\cdot ||Q_{0,0}||, \nonumber
\end{align}
where the second inequality is from \cite{NedOzd_09} (Proposition 1). Hence, $||Q_{0,0}A^{L}||\le2N^{2}\frac{1+\eta^{-(N-1)}}{1-\eta^{N-1}}(1-\eta^{N-1})^{L}\cdot ||Q_{0,0}||=\kappa_1 (1-\eta^{N-1})^{L} ||Q_{0,0}||$, where $\kappa_1=2N^{2}\frac{1+\eta^{-(N-1)}}{1-\eta^{N-1}}$. Second, we have
\begin{align}
||\prod_{l=0}^{L-1}\prod_{k=0}^{K-1}B_{l,k}||&\le \prod_{l=0}^{L-1}\prod_{k=0}^{K-1}|| B_{l,k}|| \nonumber \\
&\le  \prod_{l=0}^{L-1}\prod_{k=0}^{K-1}(1+2\beta) =(1+2\beta)^{KL}. \nonumber
\end{align}

To bound the second term of \eqref{eq: w_i_w_bar}, we have
\begin{align}
|| (I-\frac{1}{N}\mathbf{1}\mathbf{1}^{T})A^{L-l}||&=||A^{L-l}-\frac{1}{N}\mathbf{1}\mathbf{1}^{T}|| \nonumber \\
&\le  2N^{2}(1+\eta^{-(N-1)})(1-\eta^{N-1})^{L-l-1}. \nonumber
\end{align}
where the inequality is also from \cite{NedOzd_09} (Proposition 1).
Then, we also have
\begin{align}
&\sum_{t=0}^{K-1}\prod_{\tilde{t}>t}^{K-1}||B_{l,\tilde{t}}||\cdot ||C_{l,t}|| \nonumber \\
\le&\sum_{t=0}^{K-1}(1+2\beta)^{K-1-t}\cdot 2\beta\sqrt{N} r_{\max} \nonumber \\
=&2\beta\sqrt{N} r_{\max} \sum_{t=0}^{K-1}(1+2\beta)^{K-1-t} \nonumber \\
\le&4\beta K \sqrt{N} r_{\max}. \nonumber 
\end{align}
Then, for the multipliers, we have
\begin{align}
||\prod_{j=1}^{L-1-l}\prod_{k=0}^{K-1}B_{l+j,k}||\le(1+2\beta)^{(L-l-1)K}\le (1+4\beta K)^{L-l-1}. \nonumber
\end{align}
Finally, for the second term in consensus error \eqref{eq: w_i_w_bar}, we have 
\begin{align}
&||\sum_{l=0}^{L-1}\prod_{j=1}^{L-1-l}\prod_{k=0}^{K-1}B_{l+j,k}\sum_{t=0}^{K-1}\prod_{\tilde{t}>t}^{K-1} B_{l,\tilde{t}} C_{l,t}(I-\frac{1}{N}\mathbf{1}\mathbf{1}^{T})(A^{T})^{L-l}||\nonumber \\
\le&\sum_{l=0}^{L-1}||\prod_{j=1}^{L-1-l}\prod_{k=0}^{K-1}B_{l+j,k}||\cdot||\sum_{t=0}^{K-1}\prod_{\tilde{t}>t}^{K-1} B_{l,\tilde{t}} C_{l,t}||\cdot||(I-\frac{1}{N}\mathbf{1}\mathbf{1}^{T})(A^{T})^{L-l}|| \nonumber \\
\le&\sum_{l=0}^{L-1}  (1+4\beta K)^{L-l-1} \cdot 4\beta K\sqrt{N}r_{\max} \cdot  2N^{2}(1+\eta^{-(N-1)})(1-\eta^{N-1})^{L-l-1} \nonumber \\
\le&\kappa_2 \beta K\sum_{l=0}^{L-1} \rho^{L-l-1} \nonumber \\
\le& \frac{\kappa_2 \beta K}{1-\rho} \nonumber
\end{align}
where $\kappa_2=8(1+\eta^{-(N-1)})N^{\frac{5}{2}}r_{\max}$.

As a result, we have the results consensus bound of \eqref{eq: con_err} in Lemma \ref{lem: con_err}.

\subsection{Proof of the Theorem \ref{thm}}
For the mean square error, we have
\begin{align}
&\mathbb{E}[\sum_{i=1}^{N}\|w^{i}_{L,0}-w^{*}\|^{2}] \nonumber \\
=&\mathbb{E}[\sum_{i=1}^{N}\|w^{i}_{L,0}-\bar{w}_{L,0}+\bar{w}_{L,0}-w^{*}\|^{2}] \nonumber \\
\le& 2\mathbb{E}[\sum_{i=1}^{N}\|w^{i}_{L,0}-\bar{w}_{L,0}\|^{2}]+2\mathbb{E}[\sum_{i=1}^{N}\|\bar{w}_{L,0}-w^{*}\|^{2}] \nonumber \\
\le& 2d\mathbb{E}[\|Q_{L,0}\|^{2}]+2N\mathbb{E}[\|\bar{w}_{L,0}-w^{*}\|^{2}]
\label{eq: last}
\end{align}
where the first inequality is due to $\|x+y\|^{2}\le 2\|x\|^{2}+2\|y\|^{2}$ and the second inequality $\|X\|_{F}\le \sqrt{d}\|X\|$ for $X\in \mathbb{R}^{d\times N}$. 
Then, the stated result in \eqref{eq: convergence} follows from Lemmas~\ref{lem: con_err} and \ref{lem: ave_con}, and \eqref{eq: last}.  

This concludes the proof.

\balance

\bibliographystyle{ACM-Reference-Format}
\bibliography{refs}


\begin{thebibliography}{36}


\ifx \showCODEN    \undefined \def \showCODEN     #1{\unskip}     \fi
\ifx \showDOI      \undefined \def \showDOI       #1{#1}\fi
\ifx \showISBNx    \undefined \def \showISBNx     #1{\unskip}     \fi
\ifx \showISBNxiii \undefined \def \showISBNxiii  #1{\unskip}     \fi
\ifx \showISSN     \undefined \def \showISSN      #1{\unskip}     \fi
\ifx \showLCCN     \undefined \def \showLCCN      #1{\unskip}     \fi
\ifx \shownote     \undefined \def \shownote      #1{#1}          \fi
\ifx \showarticletitle \undefined \def \showarticletitle #1{#1}   \fi
\ifx \showURL      \undefined \def \showURL       {\relax}        \fi
\providecommand\bibfield[2]{#2}
\providecommand\bibinfo[2]{#2}
\providecommand\natexlab[1]{#1}
\providecommand\showeprint[2][]{arXiv:#2}

\bibitem[\protect\citeauthoryear{Bhandari, Russo, and Singal}{Bhandari et~al\mbox{.}}{2018}]%
        {BhaRusSin_18}
\bibfield{author}{\bibinfo{person}{Jalaj Bhandari}, \bibinfo{person}{Daniel Russo}, {and} \bibinfo{person}{Raghav Singal}.} \bibinfo{year}{2018}\natexlab{}.
\newblock \showarticletitle{A finite time analysis of temporal difference learning with linear function approximation}. In \bibinfo{booktitle}{\emph{Conference on learning theory}}. PMLR, \bibinfo{pages}{1691--1692}.
\newblock


\bibitem[\protect\citeauthoryear{Chen, Zhang, Giannakis, and Basar}{Chen et~al\mbox{.}}{2018}]%
        {CheZhaGia_18}
\bibfield{author}{\bibinfo{person}{Tianyi Chen}, \bibinfo{person}{Kaiqing Zhang}, \bibinfo{person}{Georgios~B Giannakis}, {and} \bibinfo{person}{Tamer Basar}.} \bibinfo{year}{2018}\natexlab{}.
\newblock \showarticletitle{Communication-efficient distributed reinforcement learning}.
\newblock \bibinfo{journal}{\emph{arXiv preprint arXiv:1812.03239}} (\bibinfo{year}{2018}).
\newblock


\bibitem[\protect\citeauthoryear{Chen, Qu, Tang, Low, and Li}{Chen et~al\mbox{.}}{2022}]%
        {CheQuTan_22}
\bibfield{author}{\bibinfo{person}{Xin Chen}, \bibinfo{person}{Guannan Qu}, \bibinfo{person}{Yujie Tang}, \bibinfo{person}{Steven Low}, {and} \bibinfo{person}{Na Li}.} \bibinfo{year}{2022}\natexlab{}.
\newblock \showarticletitle{Reinforcement learning for selective key applications in power systems: Recent advances and future challenges}.
\newblock \bibinfo{journal}{\emph{IEEE Transactions on Smart Grid}} (\bibinfo{year}{2022}).
\newblock


\bibitem[\protect\citeauthoryear{Chen, Zhou, Chen, and Zou}{Chen et~al\mbox{.}}{2021}]%
        {CheZhoChe_21}
\bibfield{author}{\bibinfo{person}{Ziyi Chen}, \bibinfo{person}{Yi Zhou}, \bibinfo{person}{Rongrong Chen}, {and} \bibinfo{person}{Shaofeng Zou}.} \bibinfo{year}{2021}\natexlab{}.
\newblock \showarticletitle{Sample and Communication-Efficient Decentralized Actor-Critic Algorithms with Finite-Time Analysis}.
\newblock \bibinfo{journal}{\emph{arXiv preprint arXiv:2109.03699}} (\bibinfo{year}{2021}).
\newblock


\bibitem[\protect\citeauthoryear{Doan, Maguluri, and Romberg}{Doan et~al\mbox{.}}{2019}]%
        {DoaMagRom_19}
\bibfield{author}{\bibinfo{person}{Thinh Doan}, \bibinfo{person}{Siva Maguluri}, {and} \bibinfo{person}{Justin Romberg}.} \bibinfo{year}{2019}\natexlab{}.
\newblock \showarticletitle{Finite-time analysis of distributed TD (0) with linear function approximation on multi-agent reinforcement learning}. In \bibinfo{booktitle}{\emph{International Conference on Machine Learning}}. PMLR, \bibinfo{pages}{1626--1635}.
\newblock


\bibitem[\protect\citeauthoryear{Doan, Maguluri, and Romberg}{Doan et~al\mbox{.}}{2021}]%
        {DoaMagRom_21}
\bibfield{author}{\bibinfo{person}{Thinh~T Doan}, \bibinfo{person}{Siva~Theja Maguluri}, {and} \bibinfo{person}{Justin Romberg}.} \bibinfo{year}{2021}\natexlab{}.
\newblock \showarticletitle{Finite-time performance of distributed temporal-difference learning with linear function approximation}.
\newblock \bibinfo{journal}{\emph{SIAM Journal on Mathematics of Data Science}} \bibinfo{volume}{3}, \bibinfo{number}{1} (\bibinfo{year}{2021}), \bibinfo{pages}{298--320}.
\newblock


\bibitem[\protect\citeauthoryear{Hairi, Liu, and Lu}{Hairi et~al\mbox{.}}{2022}]%
        {HaiLiuLu_22}
\bibfield{author}{\bibinfo{person}{FNU Hairi}, \bibinfo{person}{Jia Liu}, {and} \bibinfo{person}{Songtao Lu}.} \bibinfo{year}{2022}\natexlab{}.
\newblock \showarticletitle{Finite-Time Convergence and Sample Complexity of Multi-Agent Actor-Critic Reinforcement Learning with Average Reward}. In \bibinfo{booktitle}{\emph{International Conference on Learning Representations}}.
\newblock


\bibitem[\protect\citeauthoryear{Hairi, Zhang, and Liu}{Hairi et~al\mbox{.}}{2023}]%
        {HaiZhaLiu_23}
\bibfield{author}{\bibinfo{person}{FNU Hairi}, \bibinfo{person}{Zifan Zhang}, {and} \bibinfo{person}{Jia Liu}.} \bibinfo{year}{2023}\natexlab{}.
\newblock \showarticletitle{Local TD-Update is More Sample-Efficient Than Batching for MARL Policy Evaluation with Average Reward}.
\newblock \bibinfo{journal}{\emph{https://kevinliu-osu.github.io/publications/MARL-PE-LocalTD-TR.pdf}} (\bibinfo{year}{2023}).
\newblock


\bibitem[\protect\citeauthoryear{Kim, Moon, Hostallero, Kang, Lee, Son, and Yi}{Kim et~al\mbox{.}}{2019}]%
        {KimMooHos_19}
\bibfield{author}{\bibinfo{person}{Daewoo Kim}, \bibinfo{person}{Sangwoo Moon}, \bibinfo{person}{David Hostallero}, \bibinfo{person}{Wan~Ju Kang}, \bibinfo{person}{Taeyoung Lee}, \bibinfo{person}{Kyunghwan Son}, {and} \bibinfo{person}{Yung Yi}.} \bibinfo{year}{2019}\natexlab{}.
\newblock \showarticletitle{Learning to schedule communication in multi-agent reinforcement learning}.
\newblock \bibinfo{journal}{\emph{arXiv preprint arXiv:1902.01554}} (\bibinfo{year}{2019}).
\newblock


\bibitem[\protect\citeauthoryear{Lakshminarayanan and Szepesvari}{Lakshminarayanan and Szepesvari}{2018}]%
        {LakSze_18}
\bibfield{author}{\bibinfo{person}{Chandrashekar Lakshminarayanan} {and} \bibinfo{person}{Csaba Szepesvari}.} \bibinfo{year}{2018}\natexlab{}.
\newblock \showarticletitle{Linear stochastic approximation: How far does constant step-size and iterate averaging go?}. In \bibinfo{booktitle}{\emph{International Conference on Artificial Intelligence and Statistics}}. PMLR, \bibinfo{pages}{1347--1355}.
\newblock


\bibitem[\protect\citeauthoryear{Lee, Yoon, and Hovakimyan}{Lee et~al\mbox{.}}{2018}]%
        {LeeYooHov_18}
\bibfield{author}{\bibinfo{person}{Donghwan Lee}, \bibinfo{person}{Hyungjin Yoon}, {and} \bibinfo{person}{Naira Hovakimyan}.} \bibinfo{year}{2018}\natexlab{}.
\newblock \showarticletitle{Primal-dual algorithm for distributed reinforcement learning: distributed GTD}. In \bibinfo{booktitle}{\emph{2018 IEEE Conference on Decision and Control (CDC)}}. IEEE, \bibinfo{pages}{1967--1972}.
\newblock


\bibitem[\protect\citeauthoryear{Levin and Peres}{Levin and Peres}{2017}]%
        {LevPer_17}
\bibfield{author}{\bibinfo{person}{David~A Levin} {and} \bibinfo{person}{Yuval Peres}.} \bibinfo{year}{2017}\natexlab{}.
\newblock \bibinfo{booktitle}{\emph{Markov chains and mixing times}}. Vol.~\bibinfo{volume}{107}.
\newblock \bibinfo{publisher}{American Mathematical Soc.}
\newblock


\bibitem[\protect\citeauthoryear{Lian, Zhang, Zhang, Hsieh, Zhang, and Liu}{Lian et~al\mbox{.}}{2017}]%
        {LiaZhaZha_17}
\bibfield{author}{\bibinfo{person}{Xiangru Lian}, \bibinfo{person}{Ce Zhang}, \bibinfo{person}{Huan Zhang}, \bibinfo{person}{Cho-Jui Hsieh}, \bibinfo{person}{Wei Zhang}, {and} \bibinfo{person}{Ji Liu}.} \bibinfo{year}{2017}\natexlab{}.
\newblock \showarticletitle{Can decentralized algorithms outperform centralized algorithms? a case study for decentralized parallel stochastic gradient descent}.
\newblock \bibinfo{journal}{\emph{Advances in Neural Information Processing Systems}}  \bibinfo{volume}{30} (\bibinfo{year}{2017}).
\newblock


\bibitem[\protect\citeauthoryear{Lin, Zhang, Yang, Wang, Ba{\c{s}}ar, Sandhu, and Liu}{Lin et~al\mbox{.}}{2019}]%
        {LinZhaYan_19}
\bibfield{author}{\bibinfo{person}{Yixuan Lin}, \bibinfo{person}{Kaiqing Zhang}, \bibinfo{person}{Zhuoran Yang}, \bibinfo{person}{Zhaoran Wang}, \bibinfo{person}{Tamer Ba{\c{s}}ar}, \bibinfo{person}{Romeil Sandhu}, {and} \bibinfo{person}{Ji Liu}.} \bibinfo{year}{2019}\natexlab{}.
\newblock \showarticletitle{A communication-efficient multi-agent actor-critic algorithm for distributed reinforcement learning}. In \bibinfo{booktitle}{\emph{2019 IEEE 58th Conference on Decision and Control (CDC)}}. IEEE, \bibinfo{pages}{5562--5567}.
\newblock


\bibitem[\protect\citeauthoryear{Lowe, Wu, Tamar, Harb, Pieter~Abbeel, and Mordatch}{Lowe et~al\mbox{.}}{2017}]%
        {LowWuTam_17}
\bibfield{author}{\bibinfo{person}{Ryan Lowe}, \bibinfo{person}{Yi~I Wu}, \bibinfo{person}{Aviv Tamar}, \bibinfo{person}{Jean Harb}, \bibinfo{person}{OpenAI Pieter~Abbeel}, {and} \bibinfo{person}{Igor Mordatch}.} \bibinfo{year}{2017}\natexlab{}.
\newblock \showarticletitle{Multi-agent actor-critic for mixed cooperative-competitive environments}.
\newblock \bibinfo{journal}{\emph{Advances in neural information processing systems}}  \bibinfo{volume}{30} (\bibinfo{year}{2017}).
\newblock


\bibitem[\protect\citeauthoryear{Macua, Chen, Zazo, and Sayed}{Macua et~al\mbox{.}}{2014}]%
        {MacCheZaz_14}
\bibfield{author}{\bibinfo{person}{Sergio~Valcarcel Macua}, \bibinfo{person}{Jianshu Chen}, \bibinfo{person}{Santiago Zazo}, {and} \bibinfo{person}{Ali~H Sayed}.} \bibinfo{year}{2014}\natexlab{}.
\newblock \showarticletitle{Distributed policy evaluation under multiple behavior strategies}.
\newblock \bibinfo{journal}{\emph{IEEE Trans. Automat. Control}} \bibinfo{volume}{60}, \bibinfo{number}{5} (\bibinfo{year}{2014}), \bibinfo{pages}{1260--1274}.
\newblock


\bibitem[\protect\citeauthoryear{Nedic and Ozdaglar}{Nedic and Ozdaglar}{2009}]%
        {NedOzd_09}
\bibfield{author}{\bibinfo{person}{Angelia Nedic} {and} \bibinfo{person}{Asuman Ozdaglar}.} \bibinfo{year}{2009}\natexlab{}.
\newblock \showarticletitle{Distributed subgradient methods for multi-agent optimization}.
\newblock \bibinfo{journal}{\emph{IEEE Trans. Automat. Control}} \bibinfo{volume}{54}, \bibinfo{number}{1} (\bibinfo{year}{2009}), \bibinfo{pages}{48--61}.
\newblock


\bibitem[\protect\citeauthoryear{Pu and Nedi{\'c}}{Pu and Nedi{\'c}}{2021}]%
        {PuNed_21}
\bibfield{author}{\bibinfo{person}{Shi Pu} {and} \bibinfo{person}{Angelia Nedi{\'c}}.} \bibinfo{year}{2021}\natexlab{}.
\newblock \showarticletitle{Distributed stochastic gradient tracking methods}.
\newblock \bibinfo{journal}{\emph{Mathematical Programming}} \bibinfo{volume}{187}, \bibinfo{number}{1} (\bibinfo{year}{2021}), \bibinfo{pages}{409--457}.
\newblock


\bibitem[\protect\citeauthoryear{Qiu, Yang, Ye, and Wang}{Qiu et~al\mbox{.}}{2021}]%
        {QiuYanYe_21}
\bibfield{author}{\bibinfo{person}{Shuang Qiu}, \bibinfo{person}{Zhuoran Yang}, \bibinfo{person}{Jieping Ye}, {and} \bibinfo{person}{Zhaoran Wang}.} \bibinfo{year}{2021}\natexlab{}.
\newblock \showarticletitle{On Finite-Time Convergence of Actor-Critic Algorithm}.
\newblock \bibinfo{journal}{\emph{IEEE Journal on Selected Areas in Information Theory}} \bibinfo{volume}{2}, \bibinfo{number}{2} (\bibinfo{year}{2021}), \bibinfo{pages}{652--664}.
\newblock


\bibitem[\protect\citeauthoryear{Qu, Lin, Wierman, and Li}{Qu et~al\mbox{.}}{2020}]%
        {QuLinWie_20}
\bibfield{author}{\bibinfo{person}{Guannan Qu}, \bibinfo{person}{Yiheng Lin}, \bibinfo{person}{Adam Wierman}, {and} \bibinfo{person}{Na Li}.} \bibinfo{year}{2020}\natexlab{}.
\newblock \showarticletitle{Scalable multi-agent reinforcement learning for networked systems with average reward}.
\newblock \bibinfo{journal}{\emph{Advances in Neural Information Processing Systems}}  \bibinfo{volume}{33} (\bibinfo{year}{2020}), \bibinfo{pages}{2074--2086}.
\newblock


\bibitem[\protect\citeauthoryear{Ren and Haupt}{Ren and Haupt}{2019}]%
        {RenHau_19}
\bibfield{author}{\bibinfo{person}{Jineng Ren} {and} \bibinfo{person}{Jarvis Haupt}.} \bibinfo{year}{2019}\natexlab{}.
\newblock \showarticletitle{A communication efficient hierarchical distributed optimization algorithm for multi-agent reinforcement learning}. In \bibinfo{booktitle}{\emph{Real-world sequential decision making workshop at international conference on machine learning}}.
\newblock


\bibitem[\protect\citeauthoryear{Riedmiller, Moore, and Schneider}{Riedmiller et~al\mbox{.}}{2000}]%
        {RieMooSch_00}
\bibfield{author}{\bibinfo{person}{Martin Riedmiller}, \bibinfo{person}{Andrew Moore}, {and} \bibinfo{person}{Jeff Schneider}.} \bibinfo{year}{2000}\natexlab{}.
\newblock \showarticletitle{Reinforcement learning for cooperating and communicating reactive agents in electrical power grids}. In \bibinfo{booktitle}{\emph{Workshop on Balancing Reactivity and Social Deliberation in Multi-Agent Systems}}. Springer, \bibinfo{pages}{137--149}.
\newblock


\bibitem[\protect\citeauthoryear{Shalev-Shwartz, Shammah, and Shashua}{Shalev-Shwartz et~al\mbox{.}}{2016}]%
        {ShaShaSha_16}
\bibfield{author}{\bibinfo{person}{Shai Shalev-Shwartz}, \bibinfo{person}{Shaked Shammah}, {and} \bibinfo{person}{Amnon Shashua}.} \bibinfo{year}{2016}\natexlab{}.
\newblock \showarticletitle{Safe, multi-agent, reinforcement learning for autonomous driving}.
\newblock \bibinfo{journal}{\emph{arXiv preprint arXiv:1610.03295}} (\bibinfo{year}{2016}).
\newblock


\bibitem[\protect\citeauthoryear{Srikant and Ying}{Srikant and Ying}{2019}]%
        {SriYin_19}
\bibfield{author}{\bibinfo{person}{Rayadurgam Srikant} {and} \bibinfo{person}{Lei Ying}.} \bibinfo{year}{2019}\natexlab{}.
\newblock \showarticletitle{Finite-time error bounds for linear stochastic approximation andtd learning}. In \bibinfo{booktitle}{\emph{Conference on Learning Theory}}. PMLR, \bibinfo{pages}{2803--2830}.
\newblock


\bibitem[\protect\citeauthoryear{Sutton}{Sutton}{1988}]%
        {Sut_88}
\bibfield{author}{\bibinfo{person}{Richard~S Sutton}.} \bibinfo{year}{1988}\natexlab{}.
\newblock \showarticletitle{Learning to predict by the methods of temporal differences}.
\newblock \bibinfo{journal}{\emph{Machine learning}} \bibinfo{volume}{3}, \bibinfo{number}{1} (\bibinfo{year}{1988}), \bibinfo{pages}{9--44}.
\newblock


\bibitem[\protect\citeauthoryear{Sutton and Barto}{Sutton and Barto}{2018}]%
        {SutBar_18}
\bibfield{author}{\bibinfo{person}{Richard~S Sutton} {and} \bibinfo{person}{Andrew~G Barto}.} \bibinfo{year}{2018}\natexlab{}.
\newblock \bibinfo{booktitle}{\emph{Reinforcement learning: An introduction}}.
\newblock \bibinfo{publisher}{MIT press}.
\newblock


\bibitem[\protect\citeauthoryear{Tsitsiklis and Van~Roy}{Tsitsiklis and Van~Roy}{1997}]%
        {TsiVan_97}
\bibfield{author}{\bibinfo{person}{J.N. Tsitsiklis} {and} \bibinfo{person}{B. Van~Roy}.} \bibinfo{year}{1997}\natexlab{}.
\newblock \showarticletitle{An analysis of temporal-difference learning with function approximation}.
\newblock \bibinfo{journal}{\emph{IEEE Trans. Automat. Control}} \bibinfo{volume}{42}, \bibinfo{number}{5} (\bibinfo{year}{1997}), \bibinfo{pages}{674--690}.
\newblock
\urldef\tempurl%
\url{https://doi.org/10.1109/9.580874}
\showDOI{\tempurl}


\bibitem[\protect\citeauthoryear{Tsitsiklis and Van~Roy}{Tsitsiklis and Van~Roy}{1999}]%
        {TsiVan_99}
\bibfield{author}{\bibinfo{person}{John~N Tsitsiklis} {and} \bibinfo{person}{Benjamin Van~Roy}.} \bibinfo{year}{1999}\natexlab{}.
\newblock \showarticletitle{Average cost temporal-difference learning}.
\newblock \bibinfo{journal}{\emph{Automatica}} \bibinfo{volume}{35}, \bibinfo{number}{11} (\bibinfo{year}{1999}), \bibinfo{pages}{1799--1808}.
\newblock


\bibitem[\protect\citeauthoryear{Tsitsiklis and Van~Roy}{Tsitsiklis and Van~Roy}{2002}]%
        {TsiVan_02}
\bibfield{author}{\bibinfo{person}{John~N Tsitsiklis} {and} \bibinfo{person}{Benjamin Van~Roy}.} \bibinfo{year}{2002}\natexlab{}.
\newblock \showarticletitle{On average versus discounted reward temporal-difference learning}.
\newblock \bibinfo{journal}{\emph{Machine Learning}} \bibinfo{volume}{49}, \bibinfo{number}{2} (\bibinfo{year}{2002}), \bibinfo{pages}{179--191}.
\newblock


\bibitem[\protect\citeauthoryear{Wai, Yang, Wang, and Hong}{Wai et~al\mbox{.}}{2018}]%
        {WaiYanWan_18}
\bibfield{author}{\bibinfo{person}{Hoi-To Wai}, \bibinfo{person}{Zhuoran Yang}, \bibinfo{person}{Zhaoran Wang}, {and} \bibinfo{person}{Mingyi Hong}.} \bibinfo{year}{2018}\natexlab{}.
\newblock \showarticletitle{Multi-agent reinforcement learning via double averaging primal-dual optimization}.
\newblock \bibinfo{journal}{\emph{Advances in Neural Information Processing Systems}}  \bibinfo{volume}{31} (\bibinfo{year}{2018}).
\newblock


\bibitem[\protect\citeauthoryear{Wei, Wang, Li, and Zhao}{Wei et~al\mbox{.}}{2022}]%
        {WeiWanLi_22}
\bibfield{author}{\bibinfo{person}{Hanyu Wei}, \bibinfo{person}{Chan Wang}, \bibinfo{person}{Rongpeng Li}, {and} \bibinfo{person}{Minjian Zhao}.} \bibinfo{year}{2022}\natexlab{}.
\newblock \showarticletitle{Mean-field MARL-based Priority-Aware CSMA/CA Strategy in Large-Scale MANETs}. In \bibinfo{booktitle}{\emph{GLOBECOM 2022-2022 IEEE Global Communications Conference}}. IEEE, \bibinfo{pages}{311--316}.
\newblock


\bibitem[\protect\citeauthoryear{Xu, Wang, and Liang}{Xu et~al\mbox{.}}{2020}]%
        {XuWanLia_20}
\bibfield{author}{\bibinfo{person}{Tengyu Xu}, \bibinfo{person}{Zhe Wang}, {and} \bibinfo{person}{Yingbin Liang}.} \bibinfo{year}{2020}\natexlab{}.
\newblock \showarticletitle{Improving sample complexity bounds for (natural) actor-critic algorithms}.
\newblock \bibinfo{journal}{\emph{arXiv preprint arXiv:2004.12956}} (\bibinfo{year}{2020}).
\newblock


\bibitem[\protect\citeauthoryear{Yu, Wang, Xu, Zhang, Ge, Ren, Sun, Chen, and Tan}{Yu et~al\mbox{.}}{2019}]%
        {YuWanXu_19}
\bibfield{author}{\bibinfo{person}{Chao Yu}, \bibinfo{person}{Xin Wang}, \bibinfo{person}{Xin Xu}, \bibinfo{person}{Minjie Zhang}, \bibinfo{person}{Hongwei Ge}, \bibinfo{person}{Jiankang Ren}, \bibinfo{person}{Liang Sun}, \bibinfo{person}{Bingcai Chen}, {and} \bibinfo{person}{Guozhen Tan}.} \bibinfo{year}{2019}\natexlab{}.
\newblock \showarticletitle{Distributed multiagent coordinated learning for autonomous driving in highways based on dynamic coordination graphs}.
\newblock \bibinfo{journal}{\emph{IEEE Transactions on Intelligent Transportation Systems}} \bibinfo{volume}{21}, \bibinfo{number}{2} (\bibinfo{year}{2019}), \bibinfo{pages}{735--748}.
\newblock


\bibitem[\protect\citeauthoryear{Zhang, Yang, and Ba{\c{s}}ar}{Zhang et~al\mbox{.}}{2021b}]%
        {ZhaYanBas_21}
\bibfield{author}{\bibinfo{person}{Kaiqing Zhang}, \bibinfo{person}{Zhuoran Yang}, {and} \bibinfo{person}{Tamer Ba{\c{s}}ar}.} \bibinfo{year}{2021}\natexlab{b}.
\newblock \showarticletitle{Multi-agent reinforcement learning: A selective overview of theories and algorithms}.
\newblock \bibinfo{journal}{\emph{Handbook of Reinforcement Learning and Control}} (\bibinfo{year}{2021}), \bibinfo{pages}{321--384}.
\newblock


\bibitem[\protect\citeauthoryear{Zhang, Yang, Liu, Zhang, and Basar}{Zhang et~al\mbox{.}}{2018}]%
        {ZhaYanLiu_18}
\bibfield{author}{\bibinfo{person}{Kaiqing Zhang}, \bibinfo{person}{Zhuoran Yang}, \bibinfo{person}{Han Liu}, \bibinfo{person}{Tong Zhang}, {and} \bibinfo{person}{Tamer Basar}.} \bibinfo{year}{2018}\natexlab{}.
\newblock \showarticletitle{Fully decentralized multi-agent reinforcement learning with networked agents}. In \bibinfo{booktitle}{\emph{International Conference on Machine Learning}}. PMLR, \bibinfo{pages}{5872--5881}.
\newblock


\bibitem[\protect\citeauthoryear{Zhang, Liu, Liu, Zhu, and Lu}{Zhang et~al\mbox{.}}{2021a}]%
        {ZhaLiuLiu_21}
\bibfield{author}{\bibinfo{person}{Xin Zhang}, \bibinfo{person}{Zhuqing Liu}, \bibinfo{person}{Jia Liu}, \bibinfo{person}{Zhengyuan Zhu}, {and} \bibinfo{person}{Songtao Lu}.} \bibinfo{year}{2021}\natexlab{a}.
\newblock \showarticletitle{Taming Communication and Sample Complexities in Decentralized Policy Evaluation for Cooperative Multi-Agent Reinforcement Learning}. \bibinfo{address}{Virtual Event}.
\newblock


\end{thebibliography}



\end{document}